\documentclass[11pt]{article}

\usepackage{float}
\usepackage[margin=1in]{geometry}
\usepackage{microtype}
\usepackage{amsmath}
\usepackage{amssymb}
\usepackage{amsfonts}
\usepackage{mathtools}
\usepackage{amsthm}
\usepackage{graphicx}
\usepackage{subcaption}
\usepackage{booktabs}
\usepackage[normalem]{ulem}
\usepackage[sort&compress,numbers]{natbib}
\setcitestyle{numbers,square}
\usepackage[colorlinks,
            linkcolor=red,
            anchorcolor=blue,
            citecolor=magenta]{hyperref}
\usepackage[capitalize,noabbrev]{cleveref}

\graphicspath{{./}}

\newcommand{\R}{\mathbb{R}}
\newcommand{\E}{\mathbb{E}}

\newcommand{\dd}{\mathrm{d}}

\newcommand{\cS}{\mathcal{S}}
\newcommand{\cT}{\mathcal{T}}

\newcommand{\cD}{\mathcal{D}}

\newcommand{\KL}{\mathrm{KL}}

\newcommand{\cF}{\mathcal{F}}

\newcommand{\cH}{\mathcal{H}}
\newcommand{\N}{\mathbb{N}}

\newcommand{\PP}{\mathbb{P}}

\newcommand{\argmin}{\operatorname*{arg\,min}}

\DeclareMathOperator{\supp}{supp}

\theoremstyle{plain}
\newtheorem{theorem}{Theorem}[section]
\newtheorem{proposition}[theorem]{Proposition}
\newtheorem{lemma}[theorem]{Lemma}

\theoremstyle{definition}

\newtheorem{assumption}[theorem]{Assumption}
\theoremstyle{remark}
\newtheorem{remark}[theorem]{Remark}

\title{Sobolev Regularized Score Difference Estimation in Diffusion Models}
\author{Chenghan Xie\thanks{Department of Management Science and Engineering, Stanford University. Emails: \texttt{\{xxcchan, jose.blanchet, renyuanxu\}@stanford.edu}.}
\and
Jose Blanchet\footnotemark[1]
\and
Renyuan Xu\footnotemark[1]
}
\date{\today}
\newcommand{\blfootnotetext}[1]{%
  \begingroup
  \renewcommand{\thefootnote}{}\footnotetext{#1}%
  \endgroup
}
\newcommand{\proofstep}[1]{\par\medskip\noindent\uline{#1}}

\begin{document}
\maketitle
\allowdisplaybreaks
\blfootnotetext{Code is available at the \href{https://github.com/chenghands-on/Sobolev-Regularized-Score-Difference-Estimator}{GitHub repo}. Paper is accepted at ICML 2026.}

\begin{abstract}
Estimating the difference of two Stein's score functions is a fundamental problem in generative modeling. In particular, score differences arise naturally in transfer learning, where the score difference provides the mechanism for adapting a pre-trained model to a new target distribution, and in diffusion model-based post-training methods such as discriminator guidance. Existing estimators for score differences in these settings either lack of statistical consistency or are difficult to scale up in high-dimensions. We propose a statistically consistent and scalable estimator for score differences based on Sobolev regularization, which plays a crucial role in ensuring consistency and stablizing the training in the small-sample regime. Mathematically, we establish a convergence rate of $\tilde{\mathcal{O}}(n^{-\frac{s-1}{d+2s-2}})$ where $d$ is the  dimension and $s$ denotes the smoothness of the underlying densities, and provide a minimax lower bound of $\tilde{\Omega}(n^{-\frac{2(s-1)}{d+2s}})$ (in mean-squared error). Empirically, our estimator exhibits significantly improved stability in small-sample regimes compared to existing methods. We demonstrate its effectiveness on real-world tasks, including transfer learning for ECG signal generation, where it substantially outperforms non-regularized score difference estimators in downstream classification performance.
\end{abstract}
\section{Introduction}
The difference of Stein's score functions is defined as the gradient of the log-density ratio $\nabla \log q(\cdot) - \nabla \log p(\cdot)$ between a target distribution $q$ and a source distribution $p$. Conceptually, this difference represents the driving force required to transport samples from a source distribution $p$ to a target distribution $q$. As a result, the score difference emerges as a fundamental primitive in transfer learning for modern generative modeling \cite{liu2023minimizing,ouyang2024transfer,wang2024bridging}.

This estimation problem is central in post-training of diffusion models, which adapts pre-trained generative models to align with human preferences, structural constraints, or downstream tasks.
A wide range of approaches have been proposed, including RLHF
\cite{black2023training,fan2023reinforcement}, stochastic control–based formulations \cite{tang2024fine,han2024stochastic,uehara2024fine}, and classifier-guided or conditioning-based methods.
Notably, most of these can be unified as add-on mechanisms to pre-trained dynamics:
\begin{equation}
    \mathrm{d} Y_t = s_\theta(t,Y_t)\mathrm{d} t - h_\eta(t,Y_t)\mathrm{d} t + \sigma(t)\mathrm{d} W_t,
    \label{eq:controlled_dynamics}
\end{equation}
where $s_\theta(t,\cdot)$ is the pre-trained score function approximating $\nabla \log p_t(\cdot)$, and $h_\eta(t,\cdot)$ is an additive control term.
In frameworks such as discriminator guidance and conditional generation, via Doob's $h$-transform, the term $h_\eta(t,\cdot)$ approximates $\nabla \log q_t(\cdot)$, where $q_t$ is a target distribution carrying information from constraints, classifiers, or preference signals~\cite{DEFT,DP24,pidstrigach2025conditioning,howard2025control}. In diffusion models, post-training is a form of transfer learning, transferring pre-training knowledge to new data-generation tasks. For simplicity, we use “transfer learning” to refer to this broader setting throughout the paper.

In transfer learning, the target task typically has far fewer samples than the source task, rendering direct estimation of the target score $\nabla q_t$ unstable. Fortunately, because the target and source tasks are closely related, the density ratio $\frac{q_t}{p_t}$ often exhibits exploitable structure. This motivates directly estimating the density ratio and its gradient, since
\begin{eqnarray}
    \nabla \log q_t(\cdot)- \nabla \log p_t(\cdot) =  \frac{\nabla ({q_t}/{p_t})(\cdot)}{{q_t}/{p_t}(\cdot)}.
    \label{eq:score_diff_ratio}
\end{eqnarray}
Unlike estimating and differencing the two scores separately, this approach exploits shared geometry and is more structure-aware and data-efficient in low-sample regimes (see Figure~\ref{fig:velocity_field}).

Despite arising naturally in many settings, obtaining a reliable estimator for this score difference remains challenging.
The most prevalent large-scale approaches~\cite{ouyang2024transfer,kim2022refining} adopt a ``classify-then-differentiate'' pipeline: first estimating the log-density ratio via binary classification, and then differentiating the fitted model.
However, this approach lacks theoretical guarantees on gradient convergence.
As illustrated in Figure~\ref{fig:oscillating_gradient}, standard classifiers tend to overfit high-frequency but low-amplitude noise components in the data.
This overfitting yields small ratio errors (left) but catastrophic oscillations after differentiation (right).

We address this challenge via Sobolev regularization, which controls the smoothness of the log-density ratio gradient and prevents overfitting to spurious high-frequency noise.
With $n$ samples from both the source $p$ and target $q$, we prove a convergence rate of $\tilde{\mathcal{O}}(n^{-\frac{s-1}{d+2s-2}})$ for the score difference error (Theorem \ref{thm:classification-oracle}, \ref{thm: unbounded_convergence_nohop}), where $d$ is the dimension and $s$ is the smoothness parameter.
We also provide a minimax lower bound of $\tilde{\Omega}(n^{-\frac{2(s-1)}{d+2s}})$ (Theorem \ref{thm: minimax lower bound}), showing near-optimality.
Experiments verify these findings across diverse tasks, including joint domain adaptation via Wasserstein gradient flow (Sec \ref{subsec:WGF}) and transfer learning for diffusion models (Sec \ref{subsec:diffusion}), confirming that Sobolev regularization yields stable and consistent score difference estimators.
\begin{figure}[t]
    \centering
    \includegraphics[width=1.0\linewidth]{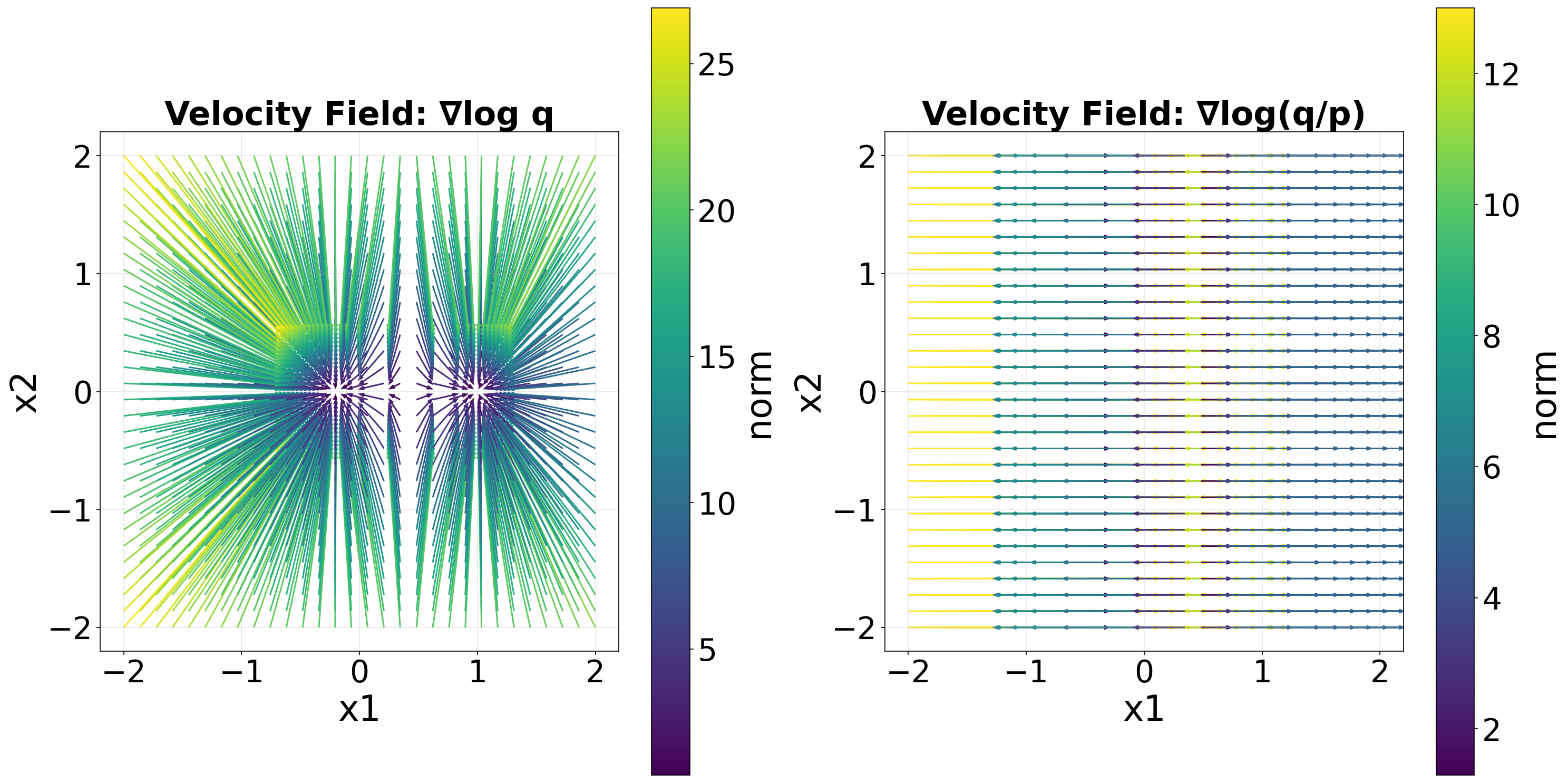}
    \caption{
    \textbf{Visualization of the score functions}. The score difference field (right) is markedly smoother and less variable than the absolute target score (left), demonstrating the structural benefit of directly estimating the score difference. }
    \label{fig:velocity_field} 
\end{figure}

\begin{figure}[t]
    \centering
    \includegraphics[width=1.0\linewidth]{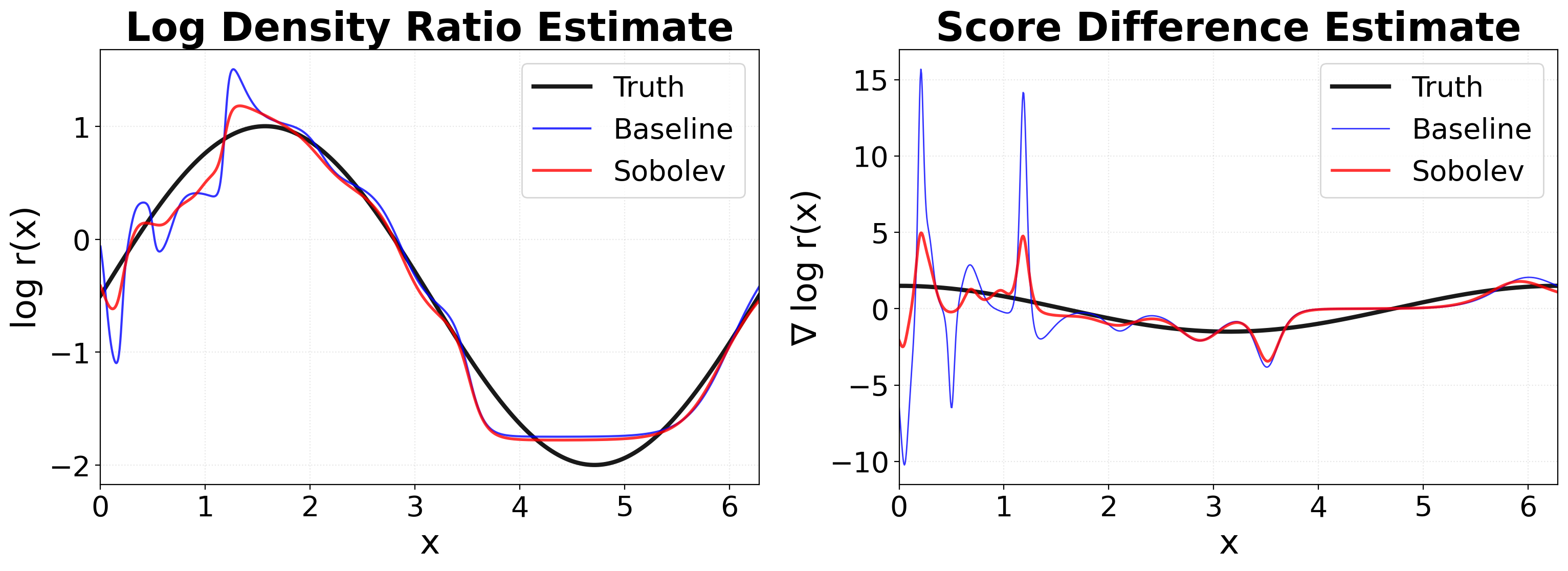}
    \caption{
    {\textbf{Impact of high-frequency noise.}
    Comparison of $\log r$ (left) and $\nabla \log r$ (right) estimates.
    Standard classification (blue) captures the function value but fails on the gradient due to overfitting.
    Our Sobolev-regularized method (red) enforces smoothness, recovering the gradient accurately by filtering out noise.}
}
    \label{fig:oscillating_gradient}

\end{figure}

\paragraph{Closely related literature.}
Our work is closely related to two lines of literature: score difference estimation and Sobolev regularization in machine learning.

Score difference estimation underlies a broad class of applications related to diffusion models. For example, \cite{ouyang2024transfer} propose using score difference estimation for transfer learning tasks in diffusion models, which is later refined by \cite{wang2024bridging} with an additional residual fine-tuning step. For post-training problems, the discriminator guidance framework of \cite{kim2022refining} highlights the central role of score difference estimation. In this approach, a discriminator is trained to distinguish samples from the target distribution and the pre-trained model, and its gradient provides an estimate of the score difference $\nabla \log q_t - \nabla \log p_t$, which is then injected as a control term into the diffusion dynamics. This framework was further extended to noisy data settings in \cite{cong2025guiding}. However, the above papers primarily emphasize methodological development and empirical validation, while theoretical convergence guarantees remain unestablished. On the other hand, several alternative approaches provide convergence guarantees but may still suffer from practical limitations such as scalability and numerical stability.
For example, assuming that the ground-truth score difference lies in a reproducing kernel Hilbert space (RKHS), \cite{srikanthaccelerated} derive a closed-form kernel-based solution. However, this approach incurs prohibitive computational costs, as it requires evaluating kernel distances against the entire training dataset at each inference step, rendering it impractical at scale. Similarly, \cite{liu2023minimizing} propose a consistent score difference estimator for Wasserstein gradient flow (WGF) methods, but their local kernel approach faces the same scalability challenges.  More recently, \cite{verine2025improving} address objective bias in discriminator guidance by adopting a score-matching objective in which the regression target is the residual error of a pre-trained score. However, without explicit regularity constraints, the resulting estimator can be dominated by variance, particularly in low-density regions or when the pre-trained score is already relatively accurate. In contrast to these approaches, our Sobolev-regularized score difference estimator retains the computational efficiency of classification-based frameworks, while offering enhanced consistency and robustness via the introduction of a Sobolev regularization term.

Highlighting the central role of  regularization, \cite{husaingeneralization} point out that discriminators in guidance frameworks must be well regularized to ensure generalization, a perspective that closely aligns with our work. Sobolev-type regularization, often realized through gradient penalties, has been widely studied in the literature on generative adversarial networks (GANs) \cite{lin2025diffusion, roth2017stabilizing, mescheder2018training}, whereas its role in diffusion models remains comparatively underexplored.  In these settings, the discriminator can be interpreted as estimating a time-varying density ratio. Moreover, the underlying motivations differ substantially. In those works, gradient penalties are introduced to promote dynamical stability by shifting the eigenvalues of the Jacobian and preventing oscillations around local Nash equilibria. In contrast, our motivation is rooted in statistical estimation: we employ Sobolev regularization to constrain the hypothesis space, ensuring that the estimated score difference remains accurate and robust to overfitting to high-frequency noise. Finally, beyond generative modeling, \cite{ding2025semi} study Sobolev-penalized deep semi-supervised regression for the joint estimation of a regression function and its gradient, leveraging a large set of unlabeled covariates to approximate the Sobolev penalty.

\section{Problem setup and proposed method}
\label{sec:setup-OU-tilde}
Our goal is to formalize transfer learning from a source   $p$ to a target
$q$, by estimating the score difference in a way that is structure-aware, data-efficient, and stable.

To focus on the essential building block of our framework, we first consider a bounded state space. The extension to unbounded domains and the inclusion of a time component, tailored to diffusion models, are deferred to Section~\ref{sec:generalization}, where the generalization is straightforward.

Let $[0,1]^d \subseteq\Omega \subseteq \R^d$, $d\in\N$ be a bounded and connected domain with sufficiently smooth boundary $\partial \Omega$. Let $P,Q \in \mathcal{P}(\Omega) $ be absolute continuous with respect to Lebesgue measure with densities $p,q$.
For any $x\in \Omega$ let
\[
  \rho(x):= \tfrac12 p(x) + \tfrac12 q(x), \qquad
  \mu(\dd x):= \rho(x)\,\dd x,
\]
and define the log-density ratio
\begin{eqnarray}\label{eq:f_star}
    f^\star(x):= \log \frac{q(x)}{p(x)}.
\end{eqnarray}
For a multi-index $\alpha=(\alpha_1,\ldots,\alpha_d)\in\mathbb{N}^{d}$,
denote $|\alpha|:=\alpha_1+\cdots+\alpha_d$ and
$
D^\alpha f:=
\frac{\partial^{|\alpha|} f}{\partial x_1^{\alpha_1}\cdots\partial x_d^{\alpha_d}}$,
interpreted in the weak (distributional) sense. For an integer $s\ge 0$, the weighted Sobolev space $H^s(\mu)$ is  given by
\[
H^s(\mu):=
\left\{
\begin{aligned}
f \in L^2(\mu)
\;\big|\;
& D^\alpha f \in L^2(\mu), \text{for all}\\
& \text{multi-indices } \alpha
\text{ with } |\alpha|\le s
\end{aligned}
\right\}.
\]
For the supremum norm, we denote the $L^\infty(\mu)$ norm as the essential supremum with respect to the measure $\mu$:
\[
\|f\|_{L^\infty(\mu)}:
= \inf \bigl\{ C \ge 0: \mu(\{x \in \Omega: |f(x)| > C\}) = 0 \bigr\}.
\]

Let $\mu_0$ be the uniform distribution on $\Omega$. We assume the following outstanding assumptions.

\begin{assumption}[Density functions]\label{assm: density}
Assume  $p$ and $q$ satisfy the following conditions:
\begin{enumerate}
\item $p,q\in C^2(\overline\Omega)$, and there exist constants \(0<c<C<\infty\) such that \(c\le p(x),q(x)\le C\) for all \(x\in\overline\Omega\).
\item The following Neumann boundary condition holds:
\[
\rho(x)\,\partial_n f^\star(x)=0
\qquad \text{for all }x\in\partial\Omega,
\]
where \(n(x)\) is the outward unit normal to \(\partial\Omega\).
\item Fix $s\geq2$. We assume $f^\star \in H^s(\mu_0)$.
\end{enumerate}
\end{assumption}
Under Assumption~\ref{assm: density}, we have $f^\star \in C^2(\Omega)$. Hence, there exists $M^\star>0$ such that
$$    \max \{\|f^\star\|_{L^\infty(\mu_0)},\| \nabla f^\star\|_{L^\infty(\mu_0)}\}\leq M^\star.$$
Motivated by the boundedness of the true density ratio
$f^\star$, we restrict the function class to ensure stability in the optimization. Specially, fix a constant $M \geq M^\star$, we define a bounded Sobolev subset of $H^1(\mu)$  as follows:
\begin{eqnarray}\label{eq:HM}
    \mathcal{H}_M:= \left\{ f \in H^1(\mu): \|f\|_{L^\infty(\mu)} \leq M \right\}.
\end{eqnarray}
By construction, this set is closed and convex in $H^1(\mu)$, and it contains the ground truth $f^\star$.
Since \(\mu\) and \(\mu_0\) are equivalent under Assumption~\ref{assm: density}, we keep the notation
\(\mathcal H_M\) when risks and norms are evaluated with respect to \(\mu\).

\paragraph{Learning objective.}
Our objective is to estimate the score difference
\[
   \nabla_x f^\star(x):= \nabla_x \log \frac{q(x)}{p(x)},
\]
given access only to finite samples from both source and target datasets:
\begin{eqnarray}\label{eq:data_set}
     \cS=\{X_{i}^{p}\}_{i=1}^{n}
      \stackrel{\text{i.i.d.}}{\sim} p,
\,\,
  \cT=\{X_{i}^{q}\}_{i=1}^{n}
      \stackrel{\text{i.i.d.}}{\sim} q.
\end{eqnarray}
When $p$ and $q$ share similar geometric structure, the vector field
$
   \nabla\log\frac{q(x)}{p(x)}
$
often concentrates on a low-dimensional manifold, making it easier to learn than the full score of $q$.  The standard approach first estimates the log-density ratio and then differentiates it directly \cite{ouyang2024transfer,kim2022refining}. Usually, the estimation of log-density ratio is trained via classification loss.

\paragraph{Binary classification reformulation.}
We introduce a label-augmented mixture model,
offering a convenient probabilistic representation for learning the log-density ratio. Specifically, denote by $\mu_{p,q}$ the joint law of $(X,Y)$ with
\begin{eqnarray}\label{eq: label ovservation model}
\begin{aligned}
    &  Y \sim\mathrm{Bernoulli}(1/2),\\
  &     X\mid(Y=1)\sim q,\,
  X\mid(Y=0)\sim p.
\end{aligned}
\end{eqnarray}
In such setting, the
posterior classification probability is, for $x\in \Omega$,
$
  \eta(x):= \PP(Y=1\mid X=x)
  = \frac{q(x)}{p(x)+q(x)}
$
The corresponding Bayes logit is $ \log\frac{\eta(x)}{1-\eta(x)}
  = \log\frac{q(x)}{p(x)} = f^\star(x).$
Therefore, if we let $\sigma(u):=\frac{1}{1+e^{-u}}$ and define the cross-entropy loss
\[
  \ell_{\rm CE}(y,u):= -y\log\sigma(u)-(1-y)\log(1-\sigma(u)),
\]
then the log-density ratio \(f^\star\) in \eqref{eq:f_star} minimizes the population cross-entropy risk
\[
  L_{\rm CE}(f):= \E_{(X,Y)\sim\mu_{p,q}}
     \big[\,\ell_{\rm CE}(Y,f(X))\,\big]
\]
among all measurable functions $f:\Omega\to\R$.
Using the fixed source--target design in \eqref{eq:data_set}, we augment
deterministic labels and write
$
\mathcal D:=
\{(X_i^p,0)\}_{i=1}^n
\cup
\{(X_i^q,1)\}_{i=1}^n
$.
The corresponding fixed source--target
empirical cross-entropy risk is
\[
  \widehat L_{\rm CE,\mathcal D}(f):=
  \frac{1}{2n}\sum_{i=1}^{n}\ell_{\rm CE}(0,f(X_i^p))
  +
  \frac{1}{2n}\sum_{i=1}^{n}\ell_{\rm CE}(1,f(X_i^q)).
\]

\paragraph{Functional class: Function class: clipped sparse neural networks with bounded gradients.}
We consider a sparse neural network class
\cite{schmidt2020nonparametric,suzuki2018adaptivity,farrell2021deep}
with the $\mathrm{ReLU}^3$ activation $\eta_3(x):=\max\{x^3,0\}$, applied componentwise for vector inputs. This activation is useful for representing spline-type approximants and is supported by the corresponding approximation theory \cite{lu2021machine}. Fix $M>M^\star$. Let $T_M:\mathbb R\to[-M,M]$ be a nondecreasing $C^1$ truncation map such that $|T_M(u)|\le M$ and $|T_M'(u)|\le1$ for all $u\in\mathbb R$, and such that $T_M(u)=u$ whenever $|u|\le M_0$, for some $M_0\in(M^\star,M)$. Thus $T_M$ clips only outside a range strictly containing the target range.

Given the depth $L$, width $W$, sparsity level $S$, and parameter bound $B$, define
\begin{eqnarray}\label{eq:network_class}
\begin{aligned}
    &\mathcal{F}_M(L,W,S,B):=
    \Bigg\{
    g(x)=T_M\Big[
    \big(\mathcal{W}^{(L)}\eta_3(\cdot)+b^{(L)}\big)
    \circ\cdots\circ
    \big(\mathcal{W}^{(1)}x+b^{(1)}\big)
    \Big],\\
    &\qquad \qquad
    \mathcal{W}^{(1)}\in\mathbb R^{W\times d},
    \mathcal{W}^{(2,\dots,L-1)}\in\mathbb R^{W\times W},
    \mathcal{W}^{(L)}\in\mathbb R^{1\times W},\, \sum_{l=1}^L
    \left(
    \|\mathcal W^{(l)}\|_0+\|b^{(l)}\|_0
    \right)
    \le S,\\
    &\qquad \qquad
    \, \max_l
    \|\mathcal W^{(l)}\|_\infty
    \vee
    \|b^{(l)}\|_\infty
    \le B,
    \,
    \|g\|_{L^\infty(\mu)}\le M,
    \|\nabla g\|_{L^\infty(\mu)}\le M
    \Bigg\}.
\end{aligned}
\end{eqnarray}
Here $\circ$ denotes function composition, $\|\cdot\|_0$ counts nonzero entries, and $\|\cdot\|_\infty$ denotes the entrywise maximum norm.

\paragraph{Proposed learning method.}
To control both function values and gradients, we define the Sobolev seminorm
\[
  \mathcal{R}(f):= \|\nabla f\|_{L^2(\mu)}^2,
\]
for any $ s\geq 1$ and $f\in H^s(\mu)$.  For $\lambda>0$, the population Sobolev-regularized risk is defined as
\[
  J_\lambda(f):= L_{\rm CE}(f) + \lambda\,\mathcal{R}(f).
\]
The existence of the unique minimizer $f_\lambda$ is provided in Lemma \ref{lm: uniq and exist of pop}.
We estimate this population penalty using the same fixed source--target design. Accordingly, define the fixed source--target empirical Sobolev penalizer:
\[
  \widehat{\mathcal{R}}_{\mathcal D}(f):=
  \frac{1}{2n}\sum_{j=1}^{n} \|\nabla f(X_j^p)\|_2^2
  +
  \frac{1}{2n}\sum_{j=1}^{n} \|\nabla f(X_j^q)\|_2^2.
\]
The fixed source--target empirical energy functional is
\[
  \widehat J_{\lambda,\mathcal D}(f):= \widehat L_{{\rm CE}, \mathcal D}(f)
     + \lambda\,\widehat{\mathcal{R}}_{\mathcal D}(f).
\]
Our method proceeds by first estimating the log–density ratio as the minimizer of an empirical energy functional,
\begin{eqnarray}\label{eq:empirical_loss}
    \widehat f_{\lambda, \mathcal  F}
  \in \arg\min_{f\in \mathcal  F} \widehat J_{\lambda,\mathcal D}(f),
\end{eqnarray}
and then taking its gradient to obtain the desired score difference.
We assume that the minimum is attained in $\mathcal  F$.

\section{Upper bound analysis}\label{sec: Theory}

In this section, we provide a statistical convergence rate for our Soblev regularized estimator, in terms of the number of samples used in optimization, for which the proof is deferred to Appendix \ref{sec: proof of bounded case}.

\begin{theorem}[Convergence rate for Sobolev-regularized logistic empirical risk minimization (ERM)]
\label{thm:classification-oracle}
Suppose Assumption~\ref{assm: density} hold and the smoothness index
satisfies \(s\le4\). Consider neural network $\mathcal{F}_M(L, W, S, B)$  with
$N_n\asymp n^{\frac{d}{d+2s-2}}$,
 $L=\mathcal{O}(1)$,
$W,S,B\asymp N_n,
\lambda\asymp N_n^{-\frac{s-1}{d}}$.
Then the soblev-penalized  estimator $\widehat f_{\lambda,\cF}$ in \eqref{eq:empirical_loss} satisfies the following upper bound with probability $1-2n^{-2}$:
\begin{equation*}
     \|\widehat f_{\lambda,\cF}-f^\star\|_{H^1(\mu)}^2 \lesssim n^{-\frac{s-1}{d+2s-2}}\log n.
\end{equation*}
\end{theorem}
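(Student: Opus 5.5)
The plan is a bias--variance decomposition around the population minimizer $f_\lambda$ of $J_\lambda$ (Lemma~\ref{lm: uniq and exist of pop}):
\[
\|\widehat f_{\lambda,\cF}-f^\star\|_{H^1(\mu)}^2\lesssim \|\widehat f_{\lambda,\cF}-f_\lambda\|_{H^1(\mu)}^2+\|f_\lambda-f^\star\|_{H^1(\mu)}^2 .
\]
Each term is bounded separately, and $\lambda$ and $N_n$ are then chosen to balance them.

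\textbf{Regularization bias.} The first-order condition for $f_\lambda$ is the elliptic equation $\eta_{f_\lambda}-\eta=2\lambda\rho^{-1}\nabla\cdot(\rho\nabla f_\lambda)$, with the Neumann condition inherited from Assumption~\ref{assm: density}(2). Here $\eta_f$ denotes $\sigma\circ f$. Testing this equation against $f_\lambda-f^\star$ gives two facts:
\begin{itemize}
\item the strong convexity of the cross-entropy on $\{|f|\le M\}$, with modulus $\sigma'(M)$;
\item the identity $\langle\nabla f_\lambda,\nabla(f_\lambda-f^\star)\rangle_{L^2(\mu)}$, obtained after integration by parts, with no boundary term thanks to the Neumann condition.
\end{itemize}
Together they yield
\[
c\|f_\lambda-f^\star\|_{L^2}^2+\lambda\|\nabla(f_\lambda-f^\star)\|_{L^2}^2\le \lambda\,|\langle \nabla f^\star,\nabla(f_\lambda-f^\star)\rangle| .
\]
Because $f^\star\in H^s$, I will integrate by parts once more (this is where $s\ge2$ and the Neumann condition are used) and write $\langle\nabla f^\star,\nabla g\rangle=-\langle \rho^{-1}\nabla\!\cdot(\rho\nabla f^\star),g\rangle$. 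This gives $\|f_\lambda-f^\star\|_{L^2}^2\lesssim\lambda^2$ and $\|\nabla(f_\lambda-f^\star)\|^2\lesssim\lambda$. Higher $s$ improves these rates, via interpolation or spectral source conditions on the weighted Neumann Laplacian, up to saturation. That saturation is presumably the reason for the restriction $s\le4$. The $H^1$ bias will therefore be of order $\lambda^{\min(s-1,\cdot)}$-type. I will track it as $\lambda$ raised to a power that matches the choice $\lambda\asymp N_n^{-(s-1)/d}$.

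\textbf{Estimation error.} Let $\widehat f=\widehat f_{\lambda,\cF}$, and let $f_N\in\cF_M$ be an approximant of $f^\star$ (or of $f_\lambda$). For $\mathrm{ReLU}^3$ sparse networks, the results of \cite{lu2021machine} give $\|f_N-f^\star\|_{H^1}\lesssim N_n^{-(s-1)/d}$ with $W,S,B\asymp N_n$; ReLU$^3$ is $C^2$, which suffices up to $s\le4$. The truncation $T_M$ does not change the approximant, since $M_0>M^\star$. The basic inequality is $\widehat J_{\lambda,\cD}(\widehat f)\le\widehat J_{\lambda,\cD}(f_N)$. Using the $\sigma'(M)$-strong convexity of $J_\lambda$ in the $L^2+\lambda H^1$ geometry, I will obtain
\[
c\|\widehat f-f_\lambda\|_{L^2}^2+\lambda\|\nabla(\widehat f-f_\lambda)\|_{L^2}^2\le J_\lambda(f_N)-J_\lambda(f_\lambda)+2\sup_{f\in\cF}\bigl|(\widehat J_{\lambda,\cD}-J_\lambda)(f)-(\widehat J_{\lambda,\cD}-J_\lambda)(f_\lambda)\bigr| .
\]
The supremum is controlled by a localized Bernstein/Talagrand bound with covering numbers of $\cF_M$ and its gradients, $\log\mathcal N(\delta)\lesssim S L\log(BWn/\delta)\asymp N_n\log n$. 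The uniform bounds $\|g\|_\infty,\|\nabla g\|_\infty\le M$ make both loss terms bounded and Lipschitz, and they give the variance--mean relation needed for fast rates. I will apply a union bound over the source and target samples, which produces the probability $1-2n^{-2}$. The resulting bound is of order $(1+\lambda)N_n\log n/n$ plus a small multiple of the left-hand side.

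\textbf{Balancing, and the main obstacle.} Dividing by $\lambda$ to recover the gradient part gives
\[
\|\nabla(\widehat f-f_\lambda)\|^2\lesssim \lambda^{-1}\bigl(N_n^{-2(s-1)/d}+N_n\log n/n\bigr).
\]
With $\lambda\asymp N_n^{-(s-1)/d}$ and $N_n\asymp n^{d/(d+2s-2)}$, we have $N_n/n=N_n^{-2(s-1)/d}$. Hence every term becomes $N_n^{-(s-1)/d}\log n= n^{-(s-1)/(d+2s-2)}\log n$, and the bias $\lambda$ has the same order. I expect the main obstacle to be this $\lambda^{-1}$ loss in the gradient direction. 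The empirical process must be localized in the $\lambda$-weighted norm, so that the gradient fluctuation term $\lambda(\widehat{\mathcal R}-\mathcal R)$ carries its own factor $\lambda$; otherwise the rate degrades. I would first try a peeling argument on the sets $\{f:\|f-f_\lambda\|_{L^2}^2+\lambda\|\nabla(f-f_\lambda)\|^2\le r\}$.
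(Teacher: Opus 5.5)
Your architecture matches the paper's. You use the bias bound from the first-order condition in the direction $f^\star-f_\lambda$ together with the weighted Green identity, which gives $\|f_\lambda-f^\star\|_{L^2(\mu)}^2\lesssim\lambda^2$ and $\|\nabla(f_\lambda-f^\star)\|_{L^2(\mu)}^2\lesssim\lambda$. You then use a ReLU$^3$ spline approximant, a peeled Talagrand oracle inequality, the conversion through strong convexity at $f_\lambda$ (with the $\lambda^{-1}$ loss in the gradient direction), and $\lambda\asymp\delta_N:=N_n^{-(s-1)/d}$. Your balancing arithmetic is correct.

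\textbf{The gap: centering at $f_\lambda$.} The concentration step does not go through as written, because you center the empirical process at $f_\lambda$. Membership $f_\lambda\in\mathcal H_M$ only gives $|f_\lambda|\le M$ and $\nabla f_\lambda\in L^2(\mu)$; there is no pointwise bound on $\nabla f_\lambda$. Hence the centered penalty $\lambda(\|\nabla f(x)\|_2^2-\|\nabla f_\lambda(x)\|_2^2)$ fails three requirements:
\begin{itemize}
\item it has no sup-norm envelope, which Talagrand's inequality needs;
\item it is not $2M\lambda$-Lipschitz in the gradient, which the contraction step needs;
\item its second moment is not bounded by a multiple of $\lambda^2\|\nabla(f-f_\lambda)\|_{L^2(\mu)}^2$, which the variance--excess-risk relation needs.
\end{itemize}
Your sentence that the uniform bounds make ``both loss terms bounded and Lipschitz'' holds for $f\in\mathcal F$, but not for the center.

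\textbf{How the paper avoids it.} The paper uses a \emph{shifted} oracle inequality, centered at the comparator $f_N\in\mathcal F$; your basic inequality $\widehat J_{\lambda,\mathcal D}(\widehat f)\le\widehat J_{\lambda,\mathcal D}(f_N)$ already points this way. It localizes by $J_\lambda(f)-J_\lambda(f_\lambda)+J_\lambda(f_N)-J_\lambda(f_\lambda)\le r$. Strong convexity at $f_\lambda$ plus the triangle inequality then give $\|f-f_N\|_{L^2(\mu)}^2\lesssim r/c_{\min}$ and $\lambda^2\|\nabla(f-f_N)\|_{L^2(\mu)}^2\lesssim\lambda r\le r$. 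Now $f_\lambda$ enters only through population quantities, and one obtains $J_\lambda(\widehat f)-J_\lambda(f_\lambda)\le 3[J_\lambda(f_N)-J_\lambda(f_\lambda)]+r_0$. Keeping your centering would instead require a $\lambda$-uniform bound on $\|\nabla f_\lambda\|_{L^\infty}$, for example via a maximum principle and $W^{2,p}$ Neumann regularity. That is extra work you have not accounted for.

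\textbf{Missing comparator bound.} Your balancing display also leaves implicit $J_\lambda(f_N)-J_\lambda(f_\lambda)\lesssim\delta_N^2$. Take $f_N$ to approximate $f^\star$; approximating $f_\lambda$ would need $\lambda$-uniform $H^s$ bounds on $f_\lambda$ that you do not have. Then split through $f^\star$:
\begin{itemize}
\item $J_\lambda(f_N)-J_\lambda(f^\star)\lesssim\delta_N^2+M\lambda\delta_N$, using the $L^\infty$ gradient bounds on $f_N$ and $f^\star$;
\item $J_\lambda(f^\star)-J_\lambda(f_\lambda)\le 2\lambda\|\mathcal K f^\star\|_{L^2(\mu)}\|f_\lambda-f^\star\|_{L^2(\mu)}\lesssim\lambda^2$, by the same Green identity.
\end{itemize}
With $\lambda\asymp\delta_N$, all three terms are of order $\delta_N^2$.

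\textbf{Two corrections that do not affect the rate.} First, $f_\lambda$ is a constrained minimizer, so you have a variational inequality rather than an Euler--Lagrange equation. Testing in the single direction $f^\star-f_\lambda$, which is feasible because $f^\star\in\mathcal H_M$, is all you need. Second, the restriction $s\le4$ comes from the cubic B-spline quasi-interpolant realized by ReLU$^3$ networks (polynomial reproduction up to degree $3$), not from Tikhonov saturation. Only the bias bounds $\lambda^2$ and $\lambda$ are used, and no source-condition improvement enters. Saturation is what the paper invokes to explain the gap to the minimax lower bound.
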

\begin{remark}
(a). The restriction \(s\le 4\) is technical and follows from the cubic quasi-interpolant and ReLU\(^3\) approximation used in the proof, following the construction in \cite{lu2021machine}. Higher smoothness can be handled by higher-order quasi-interpolants and ReLU\(^r\) networks with spline order \(r+1\ge s\).
\vspace{5pt}

\noindent  (b). Since $\mu$ admits a $C^2$ density $\rho$  on $\Omega$, it is equivalent to the uniform distribution $\mu_0$. Consequently, the above bound can also be  equivalently expressed as
\begin{equation*}
    \|\widehat f_{\lambda,\cF}-f^\star\|_{H^1(\mu_0)}^2 \lesssim n^{-\frac{s-1}{d+2s-2}}\log n.
\end{equation*}

\noindent (c). Moreover, since $J_\lambda$ is strongly convex over $\cH_M$, a local Rademacher complexity argument yields a sharper rate than the $\mathcal{O}(n^{-\frac{s}{d+4s}})$ rate obtained for the Sobolev-regularized estimators in \cite{ding2025semi}, where they used global Rademacher complexity for regression problems.

\end{remark}

\paragraph{Proof outline.} The proof proceeds in four steps. First, we establish key analytic properties of the population energy functional $J_\lambda$, including strong convexity and smoothness (see \cref{lem:J-strong-smooth}). Second, we quantify the regularization-induced bias at the population level (see \cref{lem:bias}). Third, we relate the generalized error to the excess energy via localized Rademacher complexity (see \cref{thm: generalized error}). Finally, combining these ingredients yields the desired rate.

For any functional $ J:H^s(\mu) \rightarrow \R$, let  $D J(g)$ be the Fr\'echet derivative of $J$ at $g$. We have the following result.

\begin{lemma}[Strong convexity and smoothness of the penalized risk]
\label{lem:J-strong-smooth}
Define
$
  c_{\min}:= \frac{1}{4\cosh^2(M/2)}.
$
Then the penalized risk $J_\lambda(f):= L_{\rm CE}(f) + \lambda\,\mathcal{R}(f)$
is Fr\'echet differentiable on $\cH_M$ and satisfies, for all $f,g\in \cH_M$,
\begin{align}
J_\lambda(f)-J_\lambda(g)-DJ_\lambda(g)[f-g]
&\ge
\frac{c_{\min}}{2}\|f-g\|_{L^2(\mu)}^2
+\lambda\|\nabla(f-g)\|_{L^2(\mu)}^2,
\label{eq:J-strong-general}\\
J_\lambda(f)-J_\lambda(g)-DJ_\lambda(g)[f-g]
&\le
\|f-g\|_{L^2(\mu)}^2
+\lambda\|\nabla(f-g)\|_{L^2(\mu)}^2.
\label{eq:J-smooth}
\end{align}
\end{lemma}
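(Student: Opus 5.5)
The plan is to treat the two pieces of $J_\lambda$ separately, since both estimates are additive over the decomposition $J_\lambda = L_{\rm CE}+\lambda\mathcal R$. The Bregman divergence of a sum is the sum of the Bregman divergences.

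\textbf{The penalty term.} The functional $\mathcal R$ is an exact quadratic form. Expanding $\mathcal R(f)=\|\nabla g+\nabla(f-g)\|_{L^2(\mu)}^2$ gives
\[
D\mathcal R(g)[h]=2\langle \nabla g,\nabla h\rangle_{L^2(\mu)}, \qquad \mathcal R(f)-\mathcal R(g)-D\mathcal R(g)[f-g]=\|\nabla(f-g)\|_{L^2(\mu)}^2 .
\]
This identity is exact, so the term $\lambda\|\nabla(f-g)\|^2$ appears with equality in both \eqref{eq:J-strong-general} and \eqref{eq:J-smooth}. Fréchet differentiability of $\mathcal R$ on $H^1(\mu)$ is immediate, because the remainder is exactly quadratic in $\|f-g\|_{H^1(\mu)}$.

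\textbf{The cross-entropy term.} Write the population risk pointwise as
\[
L_{\rm CE}(f)=\int_\Omega \phi_x(f(x))\,\mu(\dd x), \qquad \phi_x(u)=\eta(x)\log(1+e^{-u})+(1-\eta(x))\log(1+e^{u}).
\]
Then $\phi_x''(u)=\sigma(u)(1-\sigma(u))=\frac{1}{4\cosh^2(u/2)}$, independently of $x$. For $f,g\in\cH_M$ we have $|f|,|g|\le M$ $\mu$-a.e., so every point on the segment between $f(x)$ and $g(x)$ lies in $[-M,M]$. On that interval,
\[
c_{\min}\le \phi_x''\le \tfrac14 .
\]
Taylor's theorem with integral remainder, applied pointwise, gives
\[
\phi_x(f(x))-\phi_x(g(x))-\phi_x'(g(x))(f-g)(x)=\int_0^1(1-t)\,\phi_x''\bigl(g+t(f-g)\bigr)\,\dd t\,(f-g)^2 .
\]
This quantity lies between $\frac{c_{\min}}{2}(f-g)^2$ and $\frac18(f-g)^2$. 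Integrating against $\mu$ yields both inequalities for $L_{\rm CE}$, with $DL_{\rm CE}(g)[h]=\int \phi_x'(g)h\,\dd\mu$. The upper constant $\frac18$ is at most the stated constant $1$.

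\textbf{Justifying the derivative.} The main (mild) obstacle is checking that $DL_{\rm CE}(g)$ really is a Fréchet derivative in the $H^1(\mu)$ topology, when perturbations are restricted to $\cH_M$. Since $|\phi_x'|\le 1$, the map $h\mapsto\int\phi_x'(g)h\,\dd\mu$ is a bounded linear functional on $L^2(\mu)\supset H^1(\mu)$. The remainder is bounded by $\frac18\|h\|_{L^2(\mu)}^2\le\frac18\|h\|_{H^1(\mu)}^2=o(\|h\|_{H^1(\mu)})$. Here I use the global bound $\phi''\le\frac14$ on all of $\mathbb R$, so no $L^\infty$ control is needed for differentiability itself. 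The $L^\infty$ bound $M$ is used only for the lower curvature constant $c_{\min}$.

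Summing the two Bregman identities and inequalities gives \eqref{eq:J-strong-general} and \eqref{eq:J-smooth}.
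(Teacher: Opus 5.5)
Your proposal is correct and follows the paper's proof essentially step for step. You use the exact Bregman identity for the quadratic penalty $\mathcal R$, then a pointwise Taylor expansion with integral remainder of the conditional logistic risk using $c_{\min}\le\sigma(u)(1-\sigma(u))\le\frac14$ on $[-M,M]$, integrated against $\mu$. Your explicit check that $DL_{\rm CE}(g)$ is a Fr\'echet derivative on all of $H^1(\mu)$, via the global bound $\phi_x''\le\frac14$, fills in what the paper attributes to ``standard arguments''. Your intermediate constant $\frac18$ is sharper than the paper's $\frac12$. Neither refinement changes the argument.
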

Note that the parameter  $M>0$ used in $c_{\min}$ is the uniform logit bound, i.e.\ $|f(x)|\le M$ for all
$f\in \cF \cup f^\star$ and $\mu$-a.e.~$x$.

Our argument follows the general outline of \cite{ding2025semi}, with an important modification: because the cross-entropy loss fails to be globally strongly convex, uniqueness cannot be obtained directly. Instead, we establish conditional strong convexity restricted to $\cH_M$. Our proof proceeds by (i) deriving strictly positive pointwise curvature bounds for the logistic loss within $[-M, M]$, (ii) lifting the curvature bounds to the functional space via Taylor expansions, and (iii) combining the bounds with the quadratic structure of the Sobolev regularizer.

Denote $\Delta$ as the Laplacian operator
$\Delta f(x):= \sum_{j=1}^d \partial_{x_j x_j} f(x)$, interpreted in the weak sense for $f\in H^1(\mu)$.
 Define the weighted elliptic operator
$
  \mathcal K h:= \Delta h + \nabla h\cdot\nabla\log \rho.
$
Then \(\mathcal K f^\star\in L^2(\mu)\), and define
\[
\beta:=
  64\,\,\|{\mathcal K f^\star}\|^2_{L^2(\mu)}\,\cosh^4\!\big(\tfrac{M}{2}\big).
\]
Next we quantify the regularization-induced bias.
\begin{lemma}[Bias of the population Sobolev solution]
\label{lem:bias}
For all $\lambda>0$, the unique minimizer $f_\lambda$
of $J_\lambda$ satisfies
\begin{eqnarray*}
     \|f_\lambda-f^\star\|_{L^2(\mu)}^2
  \le \beta\,\lambda^2,\,\, \|\nabla(f_\lambda-f^\star)\|_{L^2(\mu)}^2
  \le \beta\,\lambda.
\end{eqnarray*}
\end{lemma}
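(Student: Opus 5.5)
The plan is to compare the first-order optimality conditions of $f_\lambda$ and $f^\star$, and then use the strong convexity of \cref{lem:J-strong-smooth}. Both $f_\lambda$ and $f^\star$ lie in $\cH_M$: for $f_\lambda$ this comes from Lemma~\ref{lm: uniq and exist of pop}, where the minimizer is taken over $\cH_M$. Since $f^\star$ minimizes $L_{\rm CE}$ over all measurable functions, we have $DL_{\rm CE}(f^\star)=0$. Hence
\[
DJ_\lambda(f^\star)[h]=2\lambda\int \nabla f^\star\cdot\nabla h\,\dd\mu .
\]
Integrating by parts with weight $\rho$ gives $\int\rho\,\nabla f^\star\cdot\nabla h=-\int h\,\nabla\cdot(\rho\nabla f^\star)+\int_{\partial\Omega} h\,\rho\,\partial_n f^\star$. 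The Neumann condition in Assumption~\ref{assm: density} kills the boundary term, and $\nabla\cdot(\rho\nabla f^\star)=\rho\,\mathcal K f^\star$. Therefore
\[
DJ_\lambda(f^\star)[h]=-2\lambda\langle \mathcal K f^\star,h\rangle_{L^2(\mu)} .
\]
This identity is the only place the boundary assumption enters, and it is what makes the bias scale like $\lambda^2$ in $L^2$ rather than like $\lambda$.

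Next we set $\delta=f_\lambda-f^\star$ and apply \eqref{eq:J-strong-general} twice, once at base point $g=f^\star$ with $f=f_\lambda$ and once with the roles swapped, and add the two inequalities. The function values cancel and we get
\[
\big(DJ_\lambda(f_\lambda)-DJ_\lambda(f^\star)\big)[\delta]\ \ge\ c_{\min}\|\delta\|_{L^2(\mu)}^2+2\lambda\|\nabla\delta\|_{L^2(\mu)}^2 .
\]
Because $f_\lambda$ minimizes $J_\lambda$ over the convex set $\cH_M$ and $f^\star\in\cH_M$, the variational inequality gives $DJ_\lambda(f_\lambda)[f^\star-f_\lambda]\ge0$, that is, $DJ_\lambda(f_\lambda)[\delta]\le 0$. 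Combining this with the formula for $DJ_\lambda(f^\star)$ yields
\[
c_{\min}\|\delta\|_{L^2(\mu)}^2+2\lambda\|\nabla\delta\|_{L^2(\mu)}^2\le -DJ_\lambda(f^\star)[\delta]=2\lambda\langle\mathcal K f^\star,\delta\rangle_{L^2(\mu)}\le 2\lambda\|\mathcal K f^\star\|_{L^2(\mu)}\|\delta\|_{L^2(\mu)} .
\]

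Both bounds now follow. Dropping the gradient term gives $\|\delta\|_{L^2(\mu)}\le 2\lambda\|\mathcal K f^\star\|/c_{\min}$. Using $c_{\min}^{-1}=4\cosh^2(M/2)$, this becomes
\[
\|\delta\|_{L^2(\mu)}^2\le 64\cosh^4(M/2)\|\mathcal K f^\star\|^2\lambda^2=\beta\lambda^2 .
\]
For the gradient, we keep the gradient term and apply Young's inequality $2\lambda ab\le c_{\min}a^2+\lambda^2 b^2/c_{\min}$ with $a=\|\delta\|_{L^2(\mu)}$ and $b=\|\mathcal K f^\star\|_{L^2(\mu)}$. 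The $c_{\min}a^2$ term is absorbed on the left, leaving $2\lambda\|\nabla\delta\|^2\le \lambda^2\|\mathcal K f^\star\|^2/c_{\min}$, so $\|\nabla\delta\|^2\lesssim\lambda\|\mathcal K f^\star\|^2\cosh^2(M/2)$. This is at most $\beta\lambda$ once we use $\cosh\ge1$ and keep track of the constant 64. If the constant does not quite match, we would instead substitute the $L^2$ bound on $\delta$ back into the right-hand side.

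The main obstacle is the rigorous justification of the integration by parts and of the Fréchet derivative formula on the constrained set $\cH_M$. The constraint $\|f\|_\infty\le M$ means we only have one-sided directional derivatives, so the variational inequality (not an equation) must be used for $f_\lambda$. We also need $f^\star\in H^2$ and $\rho\in C^2$ to ensure $\mathcal K f^\star\in L^2(\mu)$ and that the boundary term is well defined, which is where $s\ge2$ is used.
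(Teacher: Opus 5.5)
Your proof is correct and follows essentially the same route as the paper: the variational inequality for $f_\lambda$ in the direction $f^\star-f_\lambda$, then the weighted Green identity with the Neumann condition turning $\langle\nabla f^\star,\nabla\delta\rangle_{L^2(\mu)}$ into $-\langle\mathcal K f^\star,\delta\rangle_{L^2(\mu)}$, then Cauchy--Schwarz to reach $c_{\min}\|\delta\|_{L^2(\mu)}^2+2\lambda\|\nabla\delta\|_{L^2(\mu)}^2\le 2\lambda\|\mathcal K f^\star\|_{L^2(\mu)}\|\delta\|_{L^2(\mu)}$. The differences are minor: you get the curvature term by adding the strong-convexity inequality at both base points (using $DL_{\rm CE}(f^\star)=0$) where the paper uses a pointwise mean-value calibration bound, and your Young's-inequality step gives the gradient constant $2\cosh^2(M/2)\|\mathcal K f^\star\|_{L^2(\mu)}^2$, which is already below $\beta$, so the fallback you mention is not needed.
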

The proof tests the first-order variational inequality for the constrained minimizer $f_\lambda$ in the direction $f-f_\lambda$, and then uses the weighted Green identity to relate the regularization part in EL equation to the Laplacian of $f^\star$, which controls the bias magnitude.

\begin{theorem}[Shifted oracle inequality for the clipped sieve]
\label{thm: generalized error}
Fix \(0<\lambda<1\). Let $\mathcal F:=\mathcal F_M(L,W,S,B)$ be the clipped and gradient-bounded neural-network sieve, where
\(L=\mathcal{O}(1)\), \(W=\mathcal{O}(N)\), \(S=\mathcal{O}(N)\), and \(B=\mathcal{O}(N)\).
Let \(f_0\in\mathcal F\) be any fixed comparator independent of the data, and  \(f_\lambda\) as the population minimizer of \(J_\lambda\). Let
\[
\widehat f_{\lambda,\mathcal F}
\in
\arg\min_{f\in\mathcal F}
\widehat J_{\lambda,\mathcal D}(f).
\]
Then for any \(t>0\), with probability at least \(1-e^{-t}\),
\begin{equation}
\label{eq:shifted-oracle}
J_\lambda(\widehat f_{\lambda,\mathcal F})
-
J_\lambda(f_\lambda)
\lesssim
J_\lambda(f_0)
-
J_\lambda(f_\lambda)
+
r^\star
+
\frac{t}{n},
\end{equation}
where \(r^\star\) is the critical radius of the sub-root function
\[
\phi(r):=
{C_0}\left[
\frac{1}{n}
+
\sqrt{
\frac{S\,3^L r}{n}\log(BWn)
}
\right].
\]
Here \({C_0}\) depends only on the fixed envelope \(M\) and \(c_{\min}^{-1}\),
but not on \(n,N,W,S,B\).
\end{theorem}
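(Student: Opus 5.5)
The plan is a standard localized-Rademacher argument for a shifted excess-loss class, with Lemma~\ref{lem:J-strong-smooth} supplying the Bernstein (variance--mean) condition. For \(f\in\mathcal F\), define the pointwise excess energy
\[
\xi_f(x,y):=\ell_{\rm CE}(y,f(x))-\ell_{\rm CE}(y,f_\lambda(x))+\lambda\big(\|\nabla f(x)\|_2^2-\|\nabla f_\lambda(x)\|_2^2\big).
\]
Then \(J_\lambda(f)-J_\lambda(f_\lambda)=P\xi_f\) and \(\widehat J_{\lambda,\mathcal D}(f)-\widehat J_{\lambda,\mathcal D}(f_\lambda)=P_n\xi_f\), where \(P_n\) is the fixed-design empirical measure: the average of two \(n\)-sample empirical measures, of \(p\) with label \(0\) and of \(q\) with label \(1\). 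Since \(\widehat f:=\widehat f_{\lambda,\mathcal F}\) minimizes \(\widehat J_{\lambda,\mathcal D}\) over \(\mathcal F\) and \(f_0\in\mathcal F\), we have \(P_n\xi_{\widehat f}\le P_n\xi_{f_0}\). This gives the basic inequality
\[
P\xi_{\widehat f}\le P\xi_{f_0}+(P-P_n)\xi_{\widehat f}-(P-P_n)\xi_{f_0}.
\]
The term \((P-P_n)\xi_{f_0}\) involves a single fixed function, so Bernstein's inequality controls it. The bounds needed for Bernstein are \(|\xi_{f_0}|\lesssim 1\), from \(\|f\|_\infty,\|\nabla f\|_\infty\le M\) and \(\lambda<1\), and the variance bound below. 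It is then absorbed as \(\tfrac12 P\xi_{f_0}+O(t/n)\). A subtle point is that \(f_\lambda\) need not lie in \(L^\infty\)-gradient balls. I would first check that \(\|\nabla f_\lambda\|_\infty\lesssim 1\), using elliptic regularity of the Euler--Lagrange equation behind Lemma~\ref{lem:bias}. Alternatively, I would replace \(f_\lambda\) by a bounded surrogate and carry the difference through Lemma~\ref{lem:bias}.

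The next step is the variance condition, proved pointwise. Because \(u\mapsto\ell_{\rm CE}(y,u)\) is \(1\)-Lipschitz, and \(\|\nabla f\|^2-\|\nabla f_\lambda\|^2=\nabla(f-f_\lambda)\cdot\nabla(f+f_\lambda)\) with \(|\nabla(f+f_\lambda)|\lesssim M\), we get
\[
P\xi_f^2\lesssim \|f-f_\lambda\|_{L^2(\mu)}^2+\lambda^2\|\nabla(f-f_\lambda)\|_{L^2(\mu)}^2 .
\]
Now apply \eqref{eq:J-strong-general} with \(g=f_\lambda\). Since \(f_\lambda\) minimizes \(J_\lambda\) over the convex set \(\mathcal H_M\supset\mathcal F\), the variational inequality gives \(DJ_\lambda(f_\lambda)[f-f_\lambda]\ge0\). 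Therefore \(P\xi_f\ge \tfrac{c_{\min}}2\|f-f_\lambda\|^2+\lambda\|\nabla(f-f_\lambda)\|^2\), and using \(\lambda^2\le\lambda\) this yields \(P\xi_f^2\le B_0\,P\xi_f\) with \(B_0\) depending only on \(M\) and \(c_{\min}^{-1}\). This step is why \(C_0\) depends only on these two quantities.

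Then I would apply the localized uniform deviation bound (Bartlett--Bousquet--Mendelson, Theorem 3.3 type) to the class \(\{\xi_f:f\in\mathcal F\}\). Because the two samples are independent but not identically distributed, I would apply it to the two i.i.d.\ halves separately, or use Talagrand's inequality directly for independent non-identical samples. The output is: with probability at least \(1-e^{-t}\), for all \(f\in\mathcal F\), \(P\xi_f\le 2P_n\xi_f+c(r^\star+t/n)\), where \(r^\star\) is the fixed point of any sub-root \(\phi\) dominating \(\E\,\mathfrak R_n\{\xi_f:P\xi_f^2\le r\}\). Combining this with the basic inequality and with \(P_n\xi_{\widehat f}\le P_n\xi_{f_0}\le P\xi_{f_0}+\tfrac12P\xi_{f_0}+O(t/n)\) gives \eqref{eq:shifted-oracle}.

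The main obstacle is verifying that \(\phi\) dominates the local Rademacher complexity, because \(\xi_f\) involves \(\nabla f\), not only \(f\). I would bound it by Dudley's entropy integral, which gives
\[
\mathfrak R\lesssim \sqrt{r\log N(\epsilon)/n}+\log N(\epsilon)/n,
\]
with \(\epsilon\asymp 1/n\). The required covering number is in the norm \(\|f\|_\infty+\|\nabla f\|_\infty\). For this I would show that the map from parameters to \((g,\nabla g)\) is Lipschitz on bounded inputs, with constant \((BW)^{O(L)}\) coming from the ReLU\(^3\) compositions and the \(C^1\) truncation \(T_M\). The sparse-parameter counting argument of Schmidt-Hieber then gives \(\log N(\epsilon)\lesssim S\,3^L\log(BWn)\), where the \(3^L\) reflects the cubic growth of the Lipschitz constants. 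Feeding this into Dudley's bound produces exactly \(\phi(r)\) up to \(C_0\), and the residual \(1/n\) term comes from the discretization at scale \(\epsilon\).
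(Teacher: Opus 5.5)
\textbf{The gap: a uniform bound on $\nabla f_\lambda$.} Your outline is sound: a basic inequality, a Bernstein condition from Lemma~\ref{lem:J-strong-smooth} plus the variational inequality at $f_\lambda$, a Bartlett--Bousquet--Mendelson-type localized bound, and Dudley with a derivative-augmented sup-norm entropy. But it hinges on a step you flag and never close: a uniform bound $\|\nabla f_\lambda\|_{L^\infty(\mu)}\lesssim 1$.

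Because you center the loss at $f_\lambda$, two of your steps need this bound:
\begin{itemize}
\item the envelope $|\xi_f|\lesssim 1$, which you use both in Bernstein for $\xi_{f_0}$ and in the localized deviation bound;
\item the variance estimate, which uses $|\nabla(f+f_\lambda)|\lesssim M$ pointwise.
\end{itemize}
However, $f_\lambda\in\mathcal H_M$ only comes with $|f_\lambda|\le M$ and $\nabla f_\lambda\in L^2(\mu)$. Without the sup bound, $\lambda^2\int|\nabla(f-f_\lambda)|^2|\nabla(f+f_\lambda)|^2\,d\mu$ is not controlled by $\lambda\|\nabla(f-f_\lambda)\|_{L^2(\mu)}^2$, and the Bernstein condition fails.

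Elliptic regularity can probably supply the bound. A weak maximum principle on the Euler--Lagrange equation for $f_\lambda-f^\star$ gives $\|f_\lambda-f^\star\|_{L^\infty}\lesssim\lambda\|\mathcal K f^\star\|_{L^\infty}/c_{\min}$, and $W^{2,p}$ Neumann regularity then bounds $\nabla f_\lambda$ uniformly in $\lambda$. But this is a separate PDE argument. It needs $\mathcal K f^\star\in L^\infty$, boundary regularity and an inactive constraint. It also makes your $B_0$, hence $C_0$, depend on elliptic constants and $\|\mathcal K f^\star\|_{L^\infty}$, contrary to your claim that it depends only on $M$ and $c_{\min}^{-1}$.

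The surrogate option is not free either. Centering at a non-minimizer such as $f^\star$ loses $P\xi_f\ge 0$ and only gives $P\xi_f^2\lesssim P\xi_f+O(\lambda^2)$. That leaves an additive $\lambda^2$ remainder that is not in the statement.

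\textbf{The missing idea, and how the paper uses it.} The fix is already visible in your basic inequality: $(P-P_n)\xi_{\widehat f}-(P-P_n)\xi_{f_0}=(P-P_n)\ell_{\widehat f}$ with $\ell_f:=\xi_f-\xi_{f_0}$, and $f_\lambda$ cancels from $\ell_f$. The paper therefore centers at the comparator $f_0\in\mathcal F$, whose gradient is bounded by $M$. It localizes with the shifted functional $[J_\lambda(f)-J_\lambda(f_\lambda)]+[J_\lambda(f_0)-J_\lambda(f_\lambda)]\le r$. Strong convexity at $f_\lambda$ and the triangle inequality give $\|f-f_0\|_{L^2(\mu)}^2\le 4r/c_{\min}$ and $\lambda^2\|\nabla(f-f_0)\|_{L^2(\mu)}^2\le 2\lambda r$. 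Consequently:
\begin{itemize}
\item $\overline P\ell_f^2$ is at most a constant depending on $M$ and $c_{\min}^{-1}$ times the shifted excess;
\item the envelope of $\ell_f$ depends only on $M$;
\item $f_\lambda$ enters only through $L^2$ quantities.
\end{itemize}
Peeling, symmetrization and Talagrand's inequality at $r_0\asymp\max\{r^\star,t/n\}$ then give $J_\lambda(\widehat f)-J_\lambda(f_\lambda)\le 3[J_\lambda(f_0)-J_\lambda(f_\lambda)]+r_0$ directly, with no separate Bernstein step for $f_0$.

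\textbf{Two smaller remarks.} First, applying a BBM-type theorem to the $p$- and $q$-blocks separately does not work. Each block's mean excess loss can be negative (lowering $f$ lowers the label-$0$ loss), so the Bernstein condition fails blockwise. Pairing $(X_i^p,X_i^q)$ into i.i.d.\ vectors, as the paper does, is the right device. Second, your Dudley bound carries a term of order $S\,3^L\log(BWn)/n$ rather than $1/n$, so it does not reproduce $\phi$ exactly. Its fixed point is of the same order, so the conclusion is unaffected.
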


The proof is deferred to Appendix \ref{sec: proof of generalzied error}. The key property facilitating our fast rate analysis is the structural constraint of the hypothesis space $\cH_M$. Since optimization is performed over a bounded domain, the regularized loss $J_\lambda$ is strongly convex and the estimator remains bounded.

Strong convexity and boundedness ensure that the variance of the error diminishes as the estimator approaches the optimum. This motivates the use of localized, rather than global, complexity measures. In the proof, we formalize this intuition using the framework of \cite{lu2021machine}.

\section{Minimax lower bound}
This section provides  a minimax lower bound for the ratio estimation, for which the proof is deferred to Appendix \ref{sec:proof_minimax}.

Let \(\mathcal C_{\rm pair}\) be the class of pairs \((p,q)\) satisfying Assumption~\ref{assm: density}, and write \(f_{p,q}:=\log(q/p)\). Let \(\mathbb E_{p,q}\) denote expectation under the fixed source--target design \(X_i^p\overset{\mathrm{i.i.d.}}{\sim}p\), \(X_i^q\overset{\mathrm{i.i.d.}}{\sim}q\).

\begin{theorem}[Minimax lower bound for score difference estimation]
\label{thm: minimax lower bound}
For any estimator $\psi:\left(\R^d\right)^n\times\left(\R^d\right)^n \to H^1(\mu_0)$, we have the minimax lower bound
\begin{align}
  &{\inf}_{\psi}{\sup}_{(p,q)\in\mathcal C_{\rm pair}}
  \E_{p,q}\Big[
    \big\|\psi(\cS,\cT)-f_{p,q}\big\|_{H^1(\mu_{0})}^2
  \Big]\nonumber\;\gtrsim\;
  n^{-2(s-1)/(2s+d)}.  \label{eq:minimax-lower-H1}
\end{align}
\end{theorem}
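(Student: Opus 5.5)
We prove the bound with a standard Fano (or Assouad) argument over a packing of local perturbations. We fix $p$ and perturb $q$. We take the base pair $p=q=\mu_0$, the uniform density on $\Omega$ (or a fixed smooth density bounded away from $0$). We also fix a $C^\infty$ bump $\phi$ supported in the open unit cube, with $\phi$ and $\nabla\phi$ vanishing near its boundary. Let $h=m^{-1}$ for an integer $m$ to be chosen, and partition $[0,1]^d$ into $m^d$ cubes with centers $z_k$. Set $\phi_k(x)=h^{s}\phi((x-z_k)/h)$. For $\omega\in\{0,1\}^{m^d}$ we define
\[
f_\omega=\sum_k \omega_k\phi_k,\qquad q_\omega\propto p\,e^{f_\omega}.
\]
We use $f_\omega+\log Z_\omega$ as the log-ratio. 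The constant $\log Z_\omega=O(h^{s}\cdot m^{-d}\cdot m^d)$ is harmless: it only shifts $f$, so we can instead choose $\phi$ with $\int\phi=0$ and make $Z_\omega=1+O(h^{2s})$. The cleanest option is to normalize $q_\omega=p(1+\sum_k\omega_k\phi_k)$ with $\int\phi_k=0$. Then $f_{p,q_\omega}=\log(1+\sum\omega_k\phi_k)$, which is a smooth function of the bumps and differs from $\sum\omega_k\phi_k$ only at second order.

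\textbf{Membership in $\mathcal C_{\rm pair}$.} Each $\phi_k$ has $\|D^\alpha\phi_k\|_{L^2}^2\asymp h^{2(s-|\alpha|)}h^{d}$. Summing over the $m^d=h^{-d}$ cubes, the $H^s$ seminorm of $f_\omega$ is $O(1)$ uniformly in $\omega$. The sup norm is $O(h^s)$, so the densities stay in $[c,C]$ and are $C^2$. The Neumann condition holds because the support lies strictly inside $\Omega$. After scaling $\phi$ by a small constant, every pair $(p,q_\omega)$ lies in $\mathcal C_{\rm pair}$.

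\textbf{Separation and KL.} Disjoint supports give
\[
\|f_\omega-f_{\omega'}\|_{H^1(\mu_0)}^2\gtrsim d_H(\omega,\omega')\,h^{2(s-1)+d},
\]
which is dominated by the gradient part. The second-order term from the logarithm is lower order and is absorbed for small $h$. On the data side, the source sample is identically distributed under every hypothesis and so contributes nothing. For the target sample,
\[
\KL(q_\omega^{\otimes n}\|q_{\omega'}^{\otimes n})\lesssim n\|q_\omega-q_{\omega'}\|_{L^2}^2\lesssim n\,d_H(\omega,\omega')\,h^{2s+d}.
\]
Apply Assouad's lemma. Neighbouring $\omega,\omega'$ have KL $\lesssim nh^{2s+d}$, which we keep $\le 1$ by choosing $h\asymp n^{-1/(2s+d)}$. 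This gives
\[
\inf_\psi\sup_\omega \E\|\psi-f_\omega\|_{H^1}^2\gtrsim m^d\cdot h^{2(s-1)+d}=h^{2(s-1)}=n^{-2(s-1)/(2s+d)}.
\]
Alternatively, we can use Varshamov–Gilbert together with Fano.

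\textbf{Main obstacle.} The delicate step is checking that the perturbed pairs genuinely satisfy all of Assumption~\ref{assm: density}: the normalization, the $C^2$ regularity on $\overline\Omega$, and the Neumann condition on $\partial\Omega$. We also need the nonlinearity of $\log(1+\cdot)$ not to destroy the $H^1$ separation. I would handle this by keeping the supports inside $[0,1]^d\subset\Omega$ away from $\partial\Omega$ and by using mean-zero bumps. The error term is then of order $\|\nabla(\phi_k^2)\|^2\sim h^{4s-2+d}$ per cell, which is negligible against $h^{2s-2+d}$.
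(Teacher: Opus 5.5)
Your proposal is correct, but it takes a different route from the paper.

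\textbf{The paper's route.} The paper perturbs both densities symmetrically through the logistic parametrization $q_v=2\sigma(f_v)\rho_0$, $p_v=2(1-\sigma(f_v))\rho_0$, with $f_v=a_m\sum_j\tau^{(v)}_j\varphi_{m,j}-c_v$. This keeps the mixture $(p_v+q_v)/2$ equal to the uniform density and makes the log-ratio exactly linear in the bumps. The price is a global constant $c_v$ that enforces normalization and couples all cells. The KL divergence of the two-sample product law then becomes an integrated Bernoulli KL. The logistic curvature bounds it by $n\|f_v-f_{v'}\|_{L^2(\mu_0)}^2$, and the result follows from Varshamov--Gilbert plus Fano.

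\textbf{Your route.} You instead fix $p$ uniform and perturb only $q$ multiplicatively with mean-zero bumps. Normalization is then automatic, the source sample is ancillary, and the KL bound is the textbook $\chi^2$ estimate. Because the supports are disjoint, your log-ratio is exactly $f_\omega=\sum_k\log(1+\omega_k\phi_k)$. The nonlinearity is therefore cell-local, and the Hamming-type separation $\|f_\omega-f_{\omega'}\|_{H^1}^2\gtrsim d_H(\omega,\omega')h^{2(s-1)+d}$ holds exactly. Assouad's lemma with the minimum-distance reduction then applies directly. The $H^s$ control of $\log(1+g)$ follows from Fa\`a di Bruno, since $\|D^\alpha g\|_\infty\lesssim h^{s-|\alpha|}$.

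\textbf{What each buys.} Your argument is more elementary and shows the rate holds even when $p$ is known. It also shows that the ``global normalization'' and ``heteroscedastic two-sample'' issues the paper highlights are not needed for this lower bound. The paper's construction instead mirrors the classification structure of the estimator: the mixture measure stays fixed and both samples carry information.

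One small slip in your discarded first option: the log-ratio there would be $f_\omega-\log Z_\omega$, not $f_\omega+\log Z_\omega$. It does not affect the construction you actually use.
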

{\bf Technical novelties.} Our lower-bound analysis builds on the Local Fano method, but requires several new ingredients to address three structural obstacles specific to our setting.
First, we confront a regularity mismatch between the observation and the target quantity: unlike \cite{lu2021machine}, where the observation involves a differential operator that smooths the estimation problem, here we must recover high-order
$H^1$ information from lower-order
$L^2$ observations. Second, we characterize the information-theoretic limits under a heteroscedastic observation model induced by the joint sampling of the source and target densities. Finally, we handle the global normalization constraint intrinsic to density estimation, showing that local perturbations remain statistically indistinguishable even after enforcing normalization through global partition functions.

\paragraph{Discussion on the optimality gap.}
Comparing Theorem~\ref{thm:classification-oracle} with Theorem~\ref{thm: minimax lower bound}, we observe a gap between the achievable upper rate $\tilde{\mathcal{O}}(n^{-\frac{s-1}{d+2s-2}})$ and the minimax lower bound $\tilde{\Omega}(n^{-\frac{2(s-1)}{d+2s}})$.
We conjecture that this suboptimality is an artifact of the saturation phenomenon associated with single-step Tikhonov-type regularization \cite{bauer2007regularization, engl1996regularization}. Within our framework, the Sobolev penalty $\lambda \|\nabla f\|^2$  induces a bias–variance trade-off in which the regularization parameter $\lambda$ simultaneously governs approximation error and statistical stability.  In particular, enforcing conditional strong convexity requires $\lambda$ to be sufficiently large to control the stochastic error--manifested through the $1/\lambda$ factor in the gradient stability bounds. This constraint limits how rapidly the bias can decay, thereby preventing the estimator from attaining the optimal nonparametric rate.

Theoretically, the gap between the upper and lower bounds could be closed using iterative regularization schemes, such as iterated Tikhonov regularization \cite{engl1996regularization}, which are known to attain optimal convergence rates. However, implementing such methods in a deep learning setting would require training a sequence of neural networks where each subsequent network relies on the previous one, leading to prohibitive computational and memory costs for large-scale applications such as diffusion model and transfer learning. Consequently, while our single-step estimator is not minimax optimal, it is computationally efficient and practically scalable, and already exhibits strong empirical performance and stability in the small-sample regime (see Table~\ref{tab:grouped_results_bold}).

\section{Generalization to diffusion models: time dependence and unbounded domains }\label{sec:generalization}

In this section, we extend our framework to transfer learning for diffusion models. Compared to the static setting in Section \ref{sec:setup-OU-tilde}, two additional challenges arise: (i) the density ratio becomes time dependent; and (ii) Gaussian perturbations render the data support unbounded. We address both issues by reformulating the problem on an augmented time-space domain, applying a truncation argument and proposing a projected-rescaled algorithm.

{\bf Augmented time-space formulation under the VP forward model.}
To formalize the diffusion-model setting, we focus on the variance-preserving
(VP) forward noising model. Let \(X_0\sim p_0\) and \(Y_0\sim q_0\) denote
the source and target initial variables. For \(t\in[0,T]\), assume
\[
X_t=\alpha_t X_0+\sigma_t \xi,\,\, Y_t=\alpha_t Y_0+\sigma_t \xi', \,\,\xi,\xi'\sim\mathcal N(0,I_d),
\]
where \(\alpha_t\in\mathbb R\) and \(\sigma_t>0\) are deterministic scalar
VP coefficients.

Fix $t_0\in(0,T)$ as the early-stopping time, commonly used in the diffusion model literature \cite{han2024neural}. In this context, the goal is to estimate the score difference $\nabla_x f^\star(t, x):=\nabla_x \log (q_t(x)/p_t(x))$ for $t \in [t_0, T]$. We consider the corresponding learning problem on the augmented space:
\begin{equation*}
\tilde{\Omega}:= [t_0, T] \times \mathbb{R}^d, \quad \tilde{z}:= (t, x) \in \tilde{\Omega}.
\end{equation*}
We write \(\rho_t(x):=\frac12(p_t(x)+q_t(x))\), and use \(\mu_t(dx):=\rho_t(x)\,dx\) as the reference measure on each time slice.

The objective is to learn a function $f: \tilde{\Omega} \to \mathbb{R}$ that minimizes a time-averaged risk. Since the temporal domain $[t_0, T]$ is compact, the main difficulty arises from the unbounded spatial domain $\mathbb{R}^d$. In particular, strong convexity fails globally on $\mathbb{R}^d$. To address this issue, we impose suitable assumptions on the tail behavior of the data distribution, which is a common and reasonable assumption to assume in the diffusion model literature \cite{han2024neural,li2023generalization,kong2024diffusion}.

\begin{assumption}[Compact initial support and VP coefficients]
\label{assm:unbounded_diffusion}
The initial source and target distributions are compactly supported: there
exists \({R_0}<\infty\) such that
\[
\supp(p_0)\cup\supp(q_0)\subseteq B_{{R_0}}.
\]
Moreover, the VP coefficients satisfy
$
\alpha,\sigma\in C^s([t_0,T]),
$
and there exist constants
$
0<\underline\sigma\le \overline\sigma<\infty, {\omega_0}<\infty$, such that for all \(t\in[t_0,T]\),
\[
\underline\sigma\le \sigma_t\le \overline\sigma,
\qquad
\max_{0\le a\le s}
\left(
|\partial_t^a\alpha_t|
+
|\partial_t^a\sigma_t|
\right)
\le {\omega_0}.
\]
\end{assumption}
\paragraph{Truncation-dependent clipped network class.}
For \(R>0\), the network is trained on the rescaled domain
\([t_0,T]\times B_1\) with input \((t,\bar x)\in\mathbb R^{d+1}\). We use
\[
\mathcal F_R:=
\mathcal F^{(d+1)}_{M_R}(L,W,S,B),
\qquad
M_R:=C_M(1+R),
\]
where \(\mathcal F^{(d+1)}_{M_R}\) is the clipped sparse ReLU\(^3\) class
from Section~\ref{sec: Theory}, now with input dimension \(d+1\). The constant
\(C_M\) is fixed independently of \(R,n,N,\lambda\) and is chosen large enough
so that the envelope requirements in Appendix~\ref{subsec:uniform-bound} hold.
For functions on \([t_0,T]\times B_{2R}\), write
\[
(\pi_R^{-1}h)(t,\bar x):=h(t,2R\bar x),
\qquad (t,\bar x)\in [t_0,T]\times B_1.
\]
In particular, for every \(R\ge1\),
\[
\|\pi_R^{-1}f^\star\|_{L^\infty([t_0,T]\times B_1)}
+
\|\nabla_{\bar x}(\pi_R^{-1}f^\star)\|_{L^\infty([t_0,T]\times B_1)}
\le M_R.
\]
This follows from the growth bounds under
Assumption~\ref{assm:unbounded_diffusion}.

\paragraph{Projected-rescaled empirical objective.}
Define the Euclidean projection onto \(B_{2R}\) by
\[
{\operatorname{Proj}_{2R}}(x):=\frac{x}{\max\{1,\|x\|/(2R)\}}.\]
Then $\bar x:=\frac{{\operatorname{Proj}_{2R}}(x)}{2R}\in B_1.$
To approximate the time-integrated source--target classification risk, we
augment the source and target samples with independent time draws,
\[
\mathcal D_{\rm ext}:=
\{(t_i^p,X_i^p,0)\}_{i=1}^n
\cup
\{(t_i^q,X_i^q,1)\}_{i=1}^n,
\]
where $t_i^p,t_i^q\stackrel{\rm i.i.d.}{\sim}{\rm Unif}([t_0,T])$, $X_i^p\mid t_i^p \sim p_{t_i^p}$,
$X_i^q\mid t_i^q \sim q_{t_i^q},$
with all variables independent across the \(p\)-sample and \(q\)-sample
blocks and across indices, and $\bar X_i^p:=\frac{{\operatorname{Proj}_{2R}}(X_i^p)}{2R}, \bar X_i^q:=\frac{{\operatorname{Proj}_{2R}}(X_i^q)}{2R}.$ We estimate the rescaled log-density ratio by
\[
\widehat f_R
\in
\arg\min_{f\in\mathcal F_R} \widehat J^{\rm proj}_{\lambda,\mathcal D_{\rm ext},2R}(f),
\]
\begin{equation}\label{eq:empirical_objective_proj}
\begin{aligned}
  &\widehat J^{\rm proj}_{\lambda,\mathcal D_{\rm ext},2R}(f):= \frac1{2n}\sum_{i=1}^n
\left[
\ell_{\rm CE}\big(0,f(t_i^p,\bar X_i^p)\big)
+
\frac{\lambda}{4R^2}
\|\nabla_{\bar x}f(t_i^p,\bar X_i^p)\|_2^2
\right]\\
&\quad+
\frac1{2n}\sum_{i=1}^n
\left[
\ell_{\rm CE}\big(1,f(t_i^q,\bar X_i^q)\big)
+
\frac{\lambda}{4R^2}
\|\nabla_{\bar x}f(t_i^q,\bar X_i^q)\|_2^2
\right],
\end{aligned}
\end{equation}
The factor \(1/(4R^2)\) accounts for the spatial rescaling
\(x=2R\bar x\).

{\bf Cutoff extension and core-tail decomposition.}
Let \(\chi_R:\mathbb R^d\to[0,1]\) be a smooth cutoff satisfying
\[
\chi_R\equiv1\text{ on }B_R, \chi_R\equiv0\text{ on }\mathbb R^d\setminus B_{2R}, \|\nabla\chi_R\|_\infty\lesssim R^{-1}.
\]
Using the optimizer \(\widehat f_R\) from
\eqref{eq:empirical_objective_proj}, define the global estimator on
\(\widetilde\Omega=[t_0,T]\times\mathbb R^d\) by
\begin{equation}\label{eq:global_cutoff_estimator}
\widetilde f^{(R)}(t,x):=
\chi_R(x)\,
\widehat f_R\!\left(t,\frac{{\operatorname{Proj}_{2R}}(x)}{2R}\right).
\end{equation}

Define the local loss density associated with  \(H^1(\mu_t)\) by
\begin{equation}\label{eq:norm}
\mathcal L(f,g):=
|f-g|^2+\|\nabla_x f-\nabla_x g\|_2^2,
\end{equation}
where all terms are evaluated pointwise. We decompose the time-averaged
global error as
\begin{equation}
\label{eq:core_tail_decomp}
\begin{aligned}
\int_{t_0}^T
\|\widetilde f_t^{(R)}-f_t^\star\|_{H^1(\mu_t)}^2\,dt
&=
\underbrace{
\int_{t_0}^T
\int_{B_R}
\mathcal L(\widetilde f^{(R)},f^\star)\,d\mu_t\,dt
}_{\text{\textbf{Main error}}}\\
&+
\underbrace{
\int_{t_0}^T
\int_{B_R^c}
\mathcal L(\widetilde f^{(R)},f^\star)\,d\mu_t\,dt
}_{\text{\textbf{Tail error}}}.
\end{aligned}
\end{equation}
The main error is controlled by applying the compact-domain analysis to the
pulled-back estimator on \([t_0,T]\times B_{2R}\). The tail error is controlled by the
sub-Gaussian decay of \(\mu_t\) under Assumption~\ref{assm:unbounded_diffusion},
together with the growth bounds for \(f^\star\) implied by the VP structure.
Choosing \(R=R(n)\) then balances these two errors and yields
Theorem~\ref{thm: unbounded_convergence_nohop}.

\begin{theorem}[Convergence for VP diffusion models on unbounded domains]
\label{thm: unbounded_convergence_nohop}
Suppose Assumption~\ref{assm:unbounded_diffusion} holds and the smoothness index satisfies \(s\le4\).
For each \(R>0\), consider the \(R\)-dependent clipped and
spatial-gradient-bounded time-space class $\mathcal F_R
= \mathcal F^{(d+1)}_{M_R}(L,W,S,B)$ with $M_R=C_M(1+R).$
Let $N_n^{\rm ext}\asymp n^{\frac{d+1}{d+1+2s-2}}, L=\mathcal{O}(1),
W,S,B\asymp N_n^{\rm ext},\lambda\asymp (N_n^{\rm ext})^{-\frac{s-1}{d+1}}$. Choose
$R=A\sqrt{\log n}$, where \(A>0\) is sufficiently large depending only on
\(t_0,T,R_0,\omega_0,\underline\sigma,\overline\sigma\). Then, with
probability at least \(1-3n^{-2}\),
\[
\int_{t_0}^T
\|\widetilde f_t^{(R)}-f_t^\star\|_{H^1(\mu_t)}^2\,dt
\le
C e^{C\sqrt{\log n}}(\log n)^K n^{-\frac{s-1}{d+1+2s-2}},
\]
where $\widetilde f_t^{(R)}$ is defined in \eqref{eq:global_cutoff_estimator} and \(C,K<\infty\) are independent of \(n\). As a consequence, for every
\(\varepsilon>0\), there exists \(C_\varepsilon<\infty\) such that
\[
\int_{t_0}^T
\|\widetilde f_t^{(R)}-f_t^\star\|_{H^1(\mu_t)}^2\,dt
\le
C_\varepsilon n^{-\frac{s-1}{d+1+2s-2}+\varepsilon}.
\]
\end{theorem}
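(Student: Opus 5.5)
We follow the core--tail decomposition \eqref{eq:core_tail_decomp} and bound the two pieces separately, then choose $R=A\sqrt{\log n}$.

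\textbf{Main error.} We pull the problem back to the compact time--space domain $[t_0,T]\times B_1$ through $x=2R\bar x$. On $B_R$ we have $\chi_R\equiv1$ and $\operatorname{Proj}_{2R}(x)=x$, so there $\widetilde f^{(R)}=\widehat f_R(t,x/(2R))$. The projected objective \eqref{eq:empirical_objective_proj} is exactly the empirical Sobolev-regularized risk for the pushforward pair $(\bar p,\bar q)$ of $(p_t,q_t)$ under $x\mapsto \operatorname{Proj}_{2R}(x)/2R$, with penalty $\lambda/(4R^2)$.

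The projection piles mass onto $\partial B_1$. To avoid that atom, we compare instead with the restricted problem: we use Theorem~\ref{thm: generalized error} with the loss evaluated only on $B_{2R}$, and treat the projected points outside $B_{2R}$ as a perturbation. That perturbation has probability at most $e^{-cR^2}\le n^{-c'A^2}$, by the sub-Gaussian tails of the VP marginals under Assumption~\ref{assm:unbounded_diffusion}.

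The remaining steps transfer the compact-domain analysis of Section~\ref{sec: Theory}.
\begin{enumerate}
\item Lemma~\ref{lem:J-strong-smooth} holds with envelope $M_R$, so $c_{\min}^{-1}\asymp\cosh^2(M_R/2)\lesssim e^{CR}=e^{CA\sqrt{\log n}}$. This is the source of the $e^{C\sqrt{\log n}}$ factor.
\item Lemma~\ref{lem:bias} applies with $\beta\lesssim \cosh^4(M_R/2)\,\|\mathcal K f^\star\|^2$. Here $\|\mathcal K f^\star\|$ grows at most polynomially in $R$, since $\nabla\log p_t$ and $\nabla\log q_t$ are affine-growth smooth fields under the VP structure with compact initial support.
\item The approximation error $J_\lambda(f_0)-J_\lambda(f_\lambda)$ comes from the ReLU$^3$ quasi-interpolant approximation in dimension $d+1$ of $\pi_R^{-1}f^\star$. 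Its $H^s$ norm on $[t_0,T]\times B_1$ is $\lesssim R^{s}\mathrm{poly}(R)$, using the $C^s$ time regularity of $\alpha_t,\sigma_t$. This gives an error $\lesssim \mathrm{poly}(R)\,(N^{\rm ext})^{-(s-1)/(d+1)}$ in $H^1$.
\item The critical radius is $r^\star\lesssim S\log n/n$, times powers of $c_{\min}^{-1}$.
\end{enumerate}
Balancing exactly as in Theorem~\ref{thm:classification-oracle}, with $d$ replaced by $d+1$, gives a rescaled-domain $H^1$ error of order $e^{C\sqrt{\log n}}(\log n)^K n^{-\frac{s-1}{d+2s-2+1}}$.

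Mapping back to the original variables, gradients pick up a factor $(2R)^{-1}$ and volumes a factor $R^d$. The measures $\mu_t$ and the pushed-forward reference measure are comparable on $B_{2R}$ up to factors $e^{CR^2}$. To avoid that loss, we work directly with $\mu_t$ as the weighting measure, so that only polynomial factors in $R$ appear. The probability $1-3n^{-2}$ comes from $t=2\log n$ in Theorem~\ref{thm: generalized error}, plus one event controlling the number of samples outside $B_{2R}$.

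\textbf{Tail error.} On $B_R^c$ we have $|\widetilde f^{(R)}|\le M_R$ and $|\nabla\widetilde f^{(R)}|\lesssim M_R/R+M_R$, while $|f^\star|+|\nabla f^\star|\lesssim 1+|x|^2$ by the growth bounds. Hence the tail is at most $\int_{t_0}^T\int_{B_R^c}\mathrm{poly}(|x|)\,d\mu_t\,dt\lesssim e^{-cR^2}$, since $\mu_t$ is sub-Gaussian with variance at most $\overline\sigma^2+\omega_0^2R_0^2$. With $R=A\sqrt{\log n}$ and $A$ large, this is $\le n^{-1}$, which is negligible. The $\varepsilon$-form of the bound follows because $e^{C\sqrt{\log n}}(\log n)^K=n^{o(1)}$.

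\textbf{Main obstacle.} The hard part is the degradation of strong convexity: $c_{\min}$ decays like $e^{-M_R/2}$ with $M_R\sim R$. Every constant in Lemma~\ref{lem:J-strong-smooth}, Lemma~\ref{lem:bias} and Theorem~\ref{thm: generalized error} must be tracked explicitly in $M_R$ to confirm that the losses are only $e^{O(R)}$, not $e^{O(R^2)}$. The first thing to try is redoing the local Rademacher argument with the Bernstein constant $\propto c_{\min}^{-1}$ stated explicitly.
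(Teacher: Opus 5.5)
Your outline matches the paper's proof. The shared ingredients are the core--tail split and a high-probability event on which the projected objective coincides with the $\mathbf 1\{x\in B_{2R}\}$-truncated objective over the pulled-back class. They continue with a comparator-centered localized oracle inequality on $[t_0,T]\times B_{2R}$ weighted by $\mu_t$ rather than by the pushforward, explicit $e^{CR}$ tracking of $c_{\min}(R)^{-1}$ and of the critical radius, a polynomial-in-$R$ bound on $\|\pi_R^{-1}f^\star\|_{H^s([t_0,T]\times B_1)}$, a sub-Gaussian tail estimate, and $R=A\sqrt{\log n}$. One small simplification: you need no perturbation analysis for projected points. It suffices to take the event that none of the $2n$ samples leaves $B_{2R}$. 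Its probability is at least $1-2nCe^{-cR^2}\ge 1-n^{-2}$ once $A$ is large, and on it the projected and truncated objectives agree exactly.

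The step that fails as written is your claim that Lemma~\ref{lem:bias} applies. That proof rests on Green's identity combined with the Neumann condition $\rho\,\partial_n f^\star=0$ from Assumption~\ref{assm: density}. On the truncated domain nothing like this holds, since $\partial_n f^\star(t,\cdot)$ is generically nonzero on $\partial B_{2R}$. Green's identity therefore produces an extra surface term $-\int_{\partial B_{2R}}\rho_t\,(f_{\lambda,2R}-f^\star)\,\partial_n f^\star\,dS$, which you must control separately. To do so, use $|f_{\lambda,2R}-f^\star|\le 2M_R$ on the trace, $\|\nabla_x f^\star\|\le C$, $|\partial B_{2R}|\lesssim R^{d-1}$, and the pointwise bound $\rho_t(x)\lesssim e^{-c(\|x\|-R_1)_+^2}$. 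Integrating over time gives a remainder $\mathfrak b_R\lesssim(1+R)^Ke^{-cR^2}$. The bias bounds then become $\beta(R)\lambda^2+Cc_{\min}(R)^{-1}\lambda\mathfrak b_R$ for the value and $\beta(R)\lambda+C\mathfrak b_R$ for the gradient. The same correction enters your comparator gap $J_\lambda(f^\star)-J_\lambda(f_\lambda)$, where Green's identity is used again. The gradient remainder $C\mathfrak b_R$, multiplied by the $e^{CR}$ factors, does not shrink with $\lambda$, so it has to be absorbed through $R$ alone. This is harmless, because $e^{CR}(1+R)^Ke^{-cR^2}=n^{-cA^2+o(1)}$ for $R=A\sqrt{\log n}$, which is negligible against $n^{-\frac{s-1}{d+1+2s-2}}$ for $A$ large. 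With this boundary remainder added, your argument closes as in the paper.
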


\section{Experiments}
This section presents experiments on both synthetic environments and real-world datasets, demonstrating the promising performance of our proposed method.
\subsection{Estimation error in simulation environment}
\begin{figure}[tbp]
    \centering

    \includegraphics[width=0.95\linewidth]{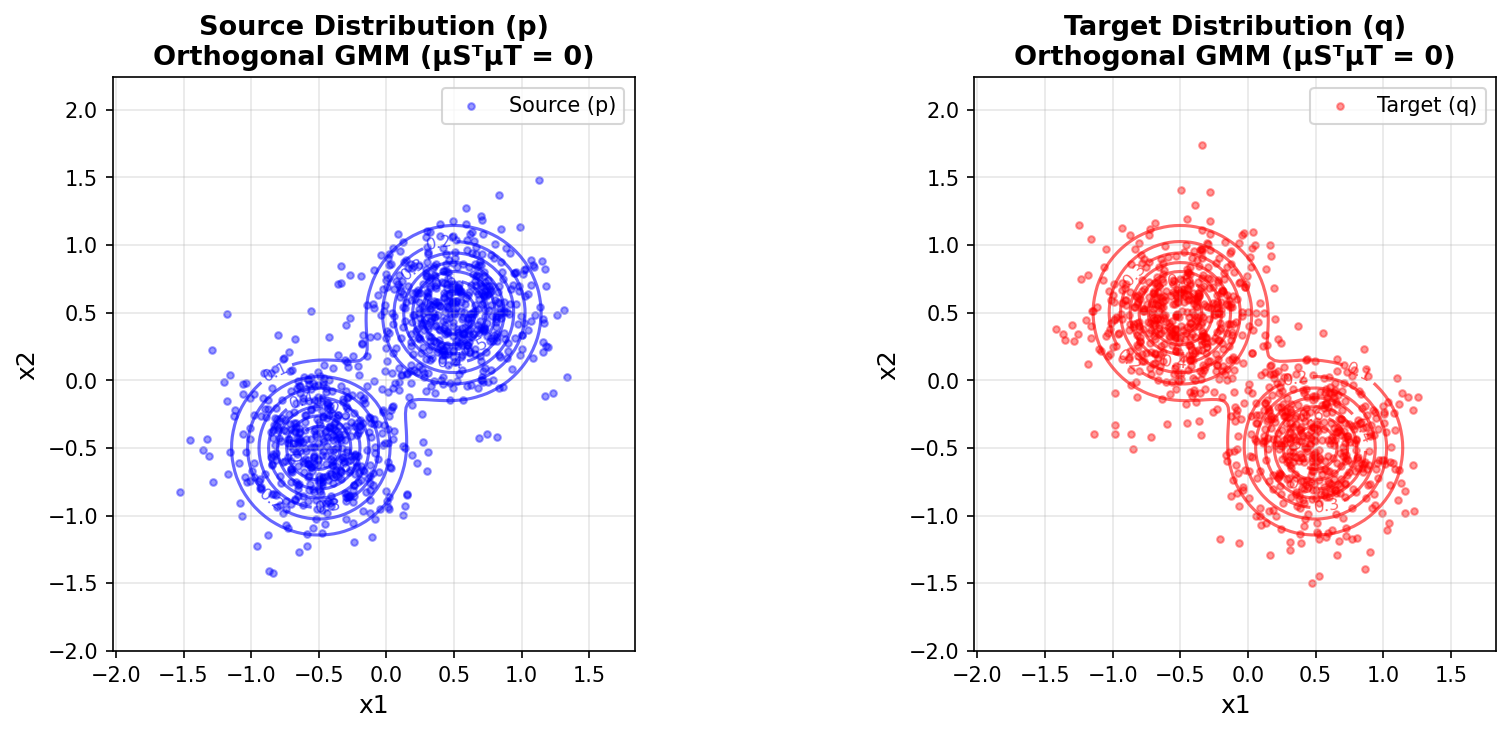}

    \caption{
    One example of the simulation distributions (All distribution pairs can be found in Appendix \ref{subsec:Synthetic source--target pairs}).
    }
    \label{fig:one-subfig}

\end{figure}
In this simulation study, we show the effectiveness of our regularization, in terms of the accuracy of estimating the gradient of the log density ratio
\(\nabla \log \tfrac{q}{p}\).

We construct three distinct pairs of two–dimensional source and target
distributions \((p,q)\), for which the quantity
\(\nabla \log \tfrac{q}{p}\) can be computed in closed form. Detailed descriptions of the three distribution pairs are provided in Appendix~\ref{subsec:Synthetic source--target pairs}.
For each pair we train a three–layer MLP \(f_\theta(x)\) to approximate
the log density ratio.

We consider two training objectives: a standard
classification-based objective,
and our Sobolev-penalized classification objective.
For each \((p,q)\) pair we train the network with
\(N \in \{10, 100, 1000\}\) samples drawn independently from both
\(p\) and \(q\).
After training, we set the gradient estimator directly as
\({g}_\theta(x):= \nabla_x f_\theta(x)\),
and estimate its mean squared error using \(1000\) new samples from both \(p\) and \(q\).

The errors are reported in Table~\ref{tab:grouped_results_bold}.
We first observe that the Sobolev-penalized estimator (\textit{Cls w sob}) consistently outperforms the naive classification baseline (\textit{Cls w/o sob}) across all training sizes.
Notably, this relative improvement is most pronounced in the small-sample regime
({\it e.g.}, more than a $50\%$ improvement for $N=20$), confirming that the Sobolev penalty effectively stabilizes the estimator when data are scarce.

\begin{table}[htbp]
\centering
\caption{Accuracy for gradient of log-density ratio estimation. Best results are marked in bold. Improvement indicates the relative reduction in error from Cls w/o sob to Cls w sob.}
\label{tab:grouped_results_bold}
\begin{tabular}{lrrrr}
\toprule
Train Size & Cls w/o sob & Cls w sob & Kernel & Improvement \\
           & (A)           & (B)     &        & $(A-B)/A$ [\%] \\
\midrule
\multicolumn{5}{l}{\textbf{ROTATED\_RIDGE}} \\
20   & $61.94 \pm 7.98$ & $30.96 \pm 6.92$ & $\mathbf{7.65 \pm 1.24}$ & 50.02 \\
200  & $5.91 \pm 0.92$  & $\mathbf{4.29 \pm 0.61}$  & $6.44 \pm 0.43$ & 27.36 \\
2000 & $1.68 \pm 0.21$  & $\mathbf{0.96 \pm 0.12}$  & $5.80 \pm 0.06$ & 42.96 \\
\midrule
\multicolumn{5}{l}{\textbf{ORTHOGONAL\_GMM}} \\
20   & $24.10 \pm 2.73$ & $\mathbf{11.89 \pm 0.80}$ & $25.87 \pm 1.42$ & 50.66 \\
200  & $8.90 \pm 2.74$  & $\mathbf{6.31 \pm 0.44}$  & $23.78 \pm 0.63$ & 29.19 \\
2000 & $5.03 \pm 0.34$  & $\mathbf{3.83 \pm 0.17}$  & $22.80 \pm 0.21$ & 23.86 \\
\midrule
\multicolumn{5}{l}{\textbf{BOUNDED}} \\
20   & $103.46 \pm 7.16$ & $48.77 \pm 2.62$ & $\mathbf{7.73 \pm 0.59}$ & 52.86 \\
200  & $15.68 \pm 2.45$  & $9.86 \pm 1.19$  & $\mathbf{3.68 \pm 0.09}$ & 37.12 \\
2000 & $2.42 \pm 0.17$   & $\mathbf{1.96 \pm 0.13}$  & $3.55 \pm 0.01$ & 18.88 \\
\bottomrule
\end{tabular}
\end{table}
\begin{table}[tbp]
\centering \small
\caption{Accuracy for gradient of log-density ratio estimation.
Values are reported as mean (standard deviation).
Best results are marked in bold.}
\label{tab:grouped_results_compact}
\begin{tabular}{lrrr}
\toprule
Train & Cls w/o sob & Cls w sob & Improve \\
 Size          & (A)         & (B)       & $(A-B)/A$ [\%] \\
\midrule
\multicolumn{4}{l}{\textbf{ROTATED\_RIDGE}} \\
20   & $61.94(7.98)$ & $\mathbf{30.96(6.92)}$ & 50.02 \\
200  & $5.91(0.92)$  & $\mathbf{4.29(0.61)}$  & 27.36 \\
2000 & $1.68(0.21)$  & $\mathbf{0.96(0.12)}$  & 42.96 \\
\midrule
\multicolumn{4}{l}{\textbf{ORTHOGONAL\_GMM}} \\
20   & $24.10(2.73)$ & $\mathbf{11.89(0.80)}$ & 50.66 \\
200  & $8.90(2.74)$  & $\mathbf{6.31(0.44)}$  & 29.19 \\
2000 & $5.03(0.34)$  & $\mathbf{3.83(0.17)}$  & 23.86 \\
\midrule
\multicolumn{4}{l}{\textbf{BOUNDED}} \\
20   & $103.46(7.16)$ & $\mathbf{48.77(2.62)}$ & 52.86 \\
200  & $15.68(2.45)$  & $\mathbf{9.86(1.19)}$  & 37.12 \\
2000 & $2.42(0.17)$   & $\mathbf{1.96(0.13)}$  & 18.88 \\
\bottomrule
\end{tabular}
\end{table}
\subsection{Transfer learning with real-world datasets}
In transfer learning, the goal is to leverage labeled samples from a source distribution to improve prediction on a related target distribution. We observe source data $\mathcal{D}_p:=\{(x_p^{(i)}, z_p^{(i)})\}_{i=1}^{n_p}$ drawn from a joint distribution $P$ and target samples $\mathcal{D}_q:=\{(x_q^{(j)}, z_q^{(j)})\}_{j=1}^{n_q}$ drawn from a different joint distribution $Q$.
In practice, labeled target samples are often scarce, making direct training unreliable. Our objective is therefore to infer a labeling function for
$\mathcal{D}_q$ by transferring information from $\mathcal{D}_p$.

\subsubsection{WGF Methods}\label{subsec:WGF}
When many unlabeled target samples are available, following prior optimal-transport-based adaptation methods\cite{liu2023minimizing}, we use WGF to transport labeled source particles toward the target distribution and then train a classifier on the transported samples.

WGF requires the score difference $\nabla \log q - \nabla \log p_t$.
We therefore evaluate the proposed estimators by implementing WGF on the Office–Caltech-10 benchmark \cite{saenko2010adapting}, which contains four visual datasets: Amazon, Caltech, DSLR, and Webcam. Following standard practice, all samples are projected onto a 100-dimensional PCA subspace. We compare target classification accuracy across four settings: a source-only RBF SVM baseline, and RBF SVMs trained on WGF-transported source samples using (i) kernel-based estimators (LL; \cite{liu2023minimizing}), (ii) unregularized classification-based estimators, and (iii) our Sobolev-regularized classification estimator.

The results (Table~\ref{table: domain adaptation}) show that directly reusing source classifiers can lead to severe performance degradation (e.g., Amazon $\to$ DSLR), whereas joint distribution–based adaptation substantially alleviates this issue. Moreover, Sobolev regularization enables the classification-based method to achieve the strongest overall performance.

While kernel-based estimators attain accuracy comparable to our Sobolev-regularized approach in low-dimensional settings, they incur substantially higher computational cost. The kernel-based WGF baseline is intrinsically local, requiring a new gradient estimator to be trained at each WGF iteration through kernel optimization, which repeatedly evaluates interactions with the training samples (see Table~\ref{tab:gradient_times}). In contrast, the classification-based estimator amortizes gradient evaluation: once trained, the score difference at a new point is obtained via a single forward and backward pass of a fixed network. This amortization accounts for the significant speedup and enables scalability to large-scale settings, including diffusion-based generation.

\begin{table}[htbp]
    \centering\small
    \caption{Comparison of classification accuracy on Office-Caltech-10 Dataset.}\label{table: domain adaptation}
    \label{tab:results}
    \begin{tabular}{lcccc}
        \toprule
        \textbf{$\cD_p\rightarrow\cD_q$} & \textbf{Base} & \textbf{Cls w/o sob} & \textbf{Kernel} & \textbf{Cls w sob} \\
        \midrule
        amz.$\to$cal. & 0.7115 & 0.7863 & 0.8379 & \textbf{0.8504} \\
        amz.$\to$dslr    & 0.2675 & 0.7580 & \textbf{0.7962} & \textbf{0.7962} \\
        amz.$\to$web.  & 0.3932 & 0.7390 & \textbf{0.8678} & 0.8203 \\
        cal.$\to$amz. & \textbf{0.9081} & 0.8716 & \textbf{0.9081} & 0.8977 \\
        cal.$\to$dslr   & 0.2420 & 0.8535 & 0.8344 & \textbf{0.8623} \\
        cal.$\to$web. & 0.3797 & 0.7763 & \textbf{0.8203} & 0.8034 \\
        dslr$\to$amz.    & 0.7035 & 0.8017 & \textbf{0.8716} & 0.8006 \\
        dslr$\to$cal.   & 0.6572 & 0.7489 & \textbf{0.8094} & 0.7427 \\
        dslr$\to$web.    & 0.9492 & 0.9492 & 0.9492 & \textbf{0.9525} \\
        web.$\to$amz.  & 0.6294 & 0.6002 & 0.5877 & \textbf{0.7046} \\
        web.$\to$cal. & 0.3954 & 0.7070 & \textbf{0.7640} & 0.6456 \\
        web.$\to$dslr    & 0.8535 & 0.9618 & 0.9554 & \textbf{0.9682} \\
        \bottomrule
    \end{tabular}
\end{table}
\begin{table}[htbp]
    \centering\small
    \caption{Average gradient computation time across the 12 transfer tasks shown in Table \ref{table: domain adaptation}.}\label{tab: gradient cal time}
    \label{tab:gradient_times}
    \begin{tabular}{lc}
        \toprule
        \textbf{Method} & \textbf{Gradient Calculation Time} \\
        \midrule
        Baseline & N/A \\
        Kernel & 2.0617 s \\
        Cls w/o sob & \textbf{0.0004 s} \\
        Cls w sob  & \textbf{0.0004 s} \\
        \bottomrule
    \end{tabular}
\end{table}
\subsubsection{ Diffusion Models}\label{subsec:diffusion}

When the amount of unlabeled target data is extremely limited, directly training a generative model on the target task is often unreliable.
A more effective alternative is to leverage a large source domain and adapt a pre-trained diffusion model to the target distribution via transfer.
In particular, recent work \cite{ouyang2024transfer} proposes to guide a source-trained diffusion model using an estimated score difference between the source and target distributions.

To evaluate the effectiveness of such transfer-based diffusion model methods under limited target data, we consider a benchmark task in electrocardiogram (ECG) generation. Following \cite{ouyang2024transfer}, we use the PTB-XL dataset \cite{wagner2020ptb} as the source task and the ICBEB2018 dataset \cite{liu2018open} as the target task, whose details can be found in Appendix \ref{subsec: ecg results detail}.

\paragraph{Evaluation protocol.} We assess different methods through their ability to generate target-task samples for downstream classification. (More results on computing time and generation quality can be found in Appendix \ref{subsec: ecg results detail}) Specifically, each method is used to generate a sufficient number of synthetic ECG samples, which are then combined with the limited target samples to train a classifier. Performance is evaluated on the target test set. We compare four approaches:
(1) \emph{Vanilla Diffusion}, which trains a diffusion model directly on limited target data;
(2) \emph{Finetune Generator}, which adapts a source-trained diffusion model to generate target-label samples;
(3) \emph{TGDP} \cite{ouyang2024transfer}, which trains the guidance function using standard classification objectives; and
(4) \emph{TGDP-SoB}, which incorporates Sobolev regularization into the TGDP framework.

Following the ECG benchmark protocol in \cite{strodthoff2020deep}, we report the Macro-averaged area under the ROC curve (AUC), macro-averaged $F_\beta$-score with $\beta=2$, and macro-averaged $G_\beta$-score with $\beta=2$; where $F_\beta = \frac{(1+\beta^2) \cdot \text{TP}}{(1+\beta^2) \cdot \text{TP} + \beta^2 \cdot \text{FN} + \text{FP}}$, $ G_\beta = \frac{\text{TP}}{\text{TP} + \text{FP} + \beta \cdot \text{FN}}$. As shown in Table~\ref{table: downstream}, TGDP substantially outperforms the baseline methods across all evaluation metrics. In addition, the Sobolev-regularized variant (TGDP-SoB) consistently yields further improvements, highlighting the benefit of regularized score difference estimation for diffusion-based transfer under limited target data.

\begin{table}[htbp]
\centering\small
\begin{tabular}{l|c|c|c}
\hline
\textbf{Method} & \textbf{AUC} & $\mathbf{F_{\beta=2}}$ & $\mathbf{G_{\beta=2}}$ \\
\hline
\hline
Vanilla Diffusion    & 0.844(07) & 0.590(09) & 0.331(08) \\
Finetune Generator   & 0.862(05) & 0.604(09) & 0.351(09) \\
TGDP        & 0.905(04) & 0.662(10) & 0.436(12) \\
\textbf{TGDP-SoB} & \textbf{0.915(05)}& \textbf{0.693(11)} & \textbf{0.453(11)} \\
\hline
\end{tabular}
\caption{Results on ECG benchmark for  downstream classification task. (90\% confidence intervals are provided via empirical bootstrapping  \cite{strodthoff2020deep}; 0.915(04) stands for 0.915 ± 0.004.)}\label{table: downstream}
\end{table}

\section*{Impact Statement}
This paper presents work whose goal is to advance the field of machine learning. There are many potential societal consequences of our work, none of which we feel must be specifically highlighted here.

\section*{Acknowledgments}
J. Blanchet gratefully acknowledges support from ONR under award N00014-24-1-2655, and the National Science Foundation (NSF) under grants 2312204 and 2403007. R. Xu gratefully acknowledges support from the NSF under grants DMS-2602037 and CAREER Award DMS-2614933.

\bibliographystyle{plainnat}
\bibliography{reference}

\newpage
\appendix
\paragraph{Appendix roadmap.}
Appendix~\ref{sec: proof of bounded case} proves the bounded-domain theory
from Section~\ref{sec: Theory}. It establishes the variational properties of
the Sobolev-regularized population objective, the clipped-network
approximation result, the regularization bias bound, the shifted oracle
inequality, and the resulting upper rate. Appendix~\ref{sec:proof_minimax}
proves the minimax lower bound in Theorem~\ref{thm: minimax lower bound} by a
local packing construction and Fano's method. Appendix~\ref{sec:proof_unbounded}
proves the diffusion-model extension in
Theorem~\ref{thm: unbounded_convergence_nohop}; the proof combines
truncation, rescaling, compact-cylinder bias and oracle bounds, and the
core-tail decomposition. Appendix~\ref{sec:supplementary-experiments}
contains the supplementary experimental details for the synthetic, WGF, and
ECG studies.

\section{Proof for Bounded Case (Section \ref{sec: Theory})}\label{sec: proof of bounded case}

\paragraph{Appendix A roadmap.}
The bounded-case proof is organized as follows. After fixing the two-group
average notation, Lemma~\ref{lm: uniq and exist of pop} proves existence and
uniqueness of the population minimizer. We then build the approximation
tools: Lemma~\ref{lem:spline-stability} gives the local B-spline
quasi-interpolant, and Proposition~\ref{prop: approximate of clipped NN}
realizes the resulting approximant in the clipped, gradient-bounded
neural-network sieve. With these deterministic ingredients in place,
Lemma~\ref{lem:J-strong-smooth} proves strong convexity and smoothness of
\(J_\lambda\), while Lemma~\ref{lem:bias} gives the regularization bias.
Finally, Theorem~\ref{thm: generalized error} gives the shifted localized
oracle inequality, and Appendix~\ref{sec: proof of upper bound} combines it
with the approximation and bias bounds to obtain
Theorem~\ref{thm:classification-oracle}.

\paragraph{Notation.}

Recall the bounded-domain fixed source--target design in \eqref{eq:data_set}. For any measurable
function \(\varphi:\Omega\times\{0,1\}\to\mathbb R\), define the avg operator on two-group:
\[
\overline P\varphi:=
\frac12\E_{X\sim p}[\varphi(X,0)]
+
\frac12\E_{X\sim q}[\varphi(X,1)],
\]
and
\[
\overline P_n\varphi:=
\frac{1}{2n}\sum_{i=1}^n\varphi(X_i^p,0)
+
\frac{1}{2n}\sum_{i=1}^n\varphi(X_i^q,1).
\]

With this notation,
$
  \widehat L_{\rm CE,\mathcal D}(f)
  =
  \overline P_n\{\ell_{\rm CE}(y,f(x))\}.
$.

We first show that the minimizer for our energy functional exists and is unique.
\begin{lemma}[Existence and uniqueness of the population risk minimizer in $\cH_M$]\label{lm: uniq and exist of pop}
Recall $L_{\rm CE}(f):= \E_{(X,Y)\sim\mu_{p,q}}
     \big[\,\ell_{\rm CE}(Y,f(X))\,\big]$. For any regularization parameter $\lambda > 0$, define the Sobolev-regularized population risk
\begin{equation*}
    J_{\lambda}(f):= L_{\rm CE}(f) + \lambda \mathcal{R}(f).
\end{equation*}
Then $J_{\lambda}$ admits a unique minimizer $f_{\lambda} \in \cH_M$:
\begin{equation*}
    f_{\lambda}:= \operatorname*{arg\,min}_{f \in \cH_M }J_{\lambda}(f).
\end{equation*}
\end{lemma}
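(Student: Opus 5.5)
We prove existence with the direct method of the calculus of variations in the Hilbert space $H^1(\mu)$, and uniqueness by strict convexity.

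\emph{Preliminary properties of the objective.} We have $\ell_{\rm CE}(y,u)=\log(1+e^{u})-yu$. As a function of $u$ it is convex, $1$-Lipschitz, and has second derivative $\sigma(u)(1-\sigma(u))\in[c_{\min},1/4]$ on $[-M,M]$. Hence on $\cH_M$ the functional $L_{\rm CE}$ is finite. It is also Lipschitz in $L^2(\mu)$: since $|L_{\rm CE}(f)-L_{\rm CE}(g)|\le \overline P|f-g|$ and the marginal of $X$ under $\mu_{p,q}$ is exactly $\mu$, we get $|L_{\rm CE}(f)-L_{\rm CE}(g)|\le\|f-g\|_{L^1(\mu)}\le \|f-g\|_{L^2(\mu)}$. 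The regularizer $\mathcal R(f)=\|\nabla f\|^2_{L^2(\mu)}$ is a continuous convex quadratic form on $H^1(\mu)$. Therefore $J_\lambda$ is convex and continuous in the $H^1(\mu)$-norm on $\cH_M$, and so weakly lower semicontinuous there (Mazur).

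\emph{Existence.} We have $J_\lambda\ge 0$, so $m:=\inf_{\cH_M}J_\lambda$ is finite; moreover $m\le J_\lambda(f^\star)<\infty$ because $f^\star\in\cH_M$. Take a minimizing sequence $f_k\in\cH_M$. It is bounded in $H^1(\mu)$ for two reasons: $\|f_k\|_{L^2(\mu)}\le M$ from the sup-norm bound, and $\lambda\|\nabla f_k\|^2_{L^2(\mu)}\le J_\lambda(f_k)\le m+1$. By reflexivity we extract a subsequence converging weakly in $H^1(\mu)$ to some $\bar f$.

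The set $\cH_M$ is closed and convex in $H^1(\mu)$, as noted after \eqref{eq:HM}. By Mazur's lemma it is therefore weakly closed, so $\bar f\in\cH_M$. To see the closedness directly: strong $L^2$ limits have an a.e.\ convergent subsequence, which preserves the bound $|f|\le M$. Weak lower semicontinuity then gives $J_\lambda(\bar f)\le\liminf_k J_\lambda(f_k)=m$, so $\bar f$ is a minimizer. (Alternatively, Rellich compactness is available on the bounded smooth domain $\Omega$ with $\mu\sim\mu_0$, but it is not needed.)

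\emph{Uniqueness.} Suppose $f,g\in\cH_M$ are both minimizers. Then $h=(f+g)/2\in\cH_M$. Pointwise, $|f|,|g|\le M$ and the second derivative of $\ell_{\rm CE}$ is at least $c_{\min}$ on $[-M,M]$, so
\[
\tfrac12\ell_{\rm CE}(y,f)+\tfrac12\ell_{\rm CE}(y,g)-\ell_{\rm CE}(y,h)\ge \tfrac{c_{\min}}{8}|f-g|^2 .
\]
Integrating this inequality and using convexity of $\mathcal R$ gives $m\le J_\lambda(h)\le m-\tfrac{c_{\min}}{8}\|f-g\|^2_{L^2(\mu)}$. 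Hence $f=g$ $\mu$-a.e., and since $\rho\ge c>0$ this means equality as elements of $H^1(\mu)$.

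\emph{Main obstacle.} The step that needs care is the functional-analytic setup of the weighted space $H^1(\mu)$: its completeness and reflexivity (it is a Hilbert space), and the weak closedness of $\cH_M$. We handle both through the equivalence $c\le\rho\le C$. This equivalence identifies $H^1(\mu)$ with the standard $H^1(\Omega)$ with equivalent norms, so completeness and reflexivity are inherited from the standard space. Weak closedness of $\cH_M$ then follows from its convexity and strong closedness via Mazur's lemma.
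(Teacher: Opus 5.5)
Your proof is correct and follows the same route as the paper. Existence is by the direct method in $H^1(\mu)$: $\cH_M$ is weakly closed because it is convex and strongly closed (Mazur), and $J_\lambda$ is weakly lower semicontinuous because it is convex and continuous; uniqueness comes from the logistic curvature lower bound $c_{\min}$ on $[-M,M]$. The only differences are cosmetic: you bound the minimizing sequence directly via $\|f\|_{L^2(\mu)}\le M$ and $\lambda\mathcal R(f)\le J_\lambda(f)$ (using $L_{\rm CE}\ge 0$) instead of the paper's coercivity estimate from strong convexity at $g=0$, and you prove uniqueness with a self-contained midpoint inequality rather than citing Lemma~\ref{lem:J-strong-smooth}.
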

\begin{proof}
 We proceed by establishing uniqueness via strong convexity and existence via the properties of the constraint set $\cH_M$.

From Lemma \ref{lem:J-strong-smooth}, the functional $J_{\lambda}$ satisfies the following strong convexity inequality for any $f, g \in \cH_M$:
\begin{equation} \label{eq:strong_convexity}
    J_{\lambda}(f) - J_{\lambda}(g) - DJ_{\lambda}(g)[f-g] \ge \frac{c_{\min}}{2}\|f-g\|_{L^{2}(\mu)}^{2} + \lambda\|\nabla(f-g)\|_{L^{2}(\mu)}^{2}.
\end{equation}
Since $\lambda > 0$ and $c_{\min} > 0$, the right-hand side is strictly positive for any $f \neq g$ (in the $H^1(\mu)$ norm sense).
Suppose there exist two distinct minimizers $f_1, f_2 \in \cH_M$ with $J_{\lambda}(f_1) = J_{\lambda}(f_2) = \inf_{f \in \cH_M} J_{\lambda}(f) = m$. By the strict convexity, for any $t \in (0, 1)$, we have:
\begin{equation*}
    J_{\lambda}(t f_1 + (1-t)f_2) < t J_{\lambda}(f_1) + (1-t)J_{\lambda}(f_2) = m.
\end{equation*}
This implies we have found an element with a risk strictly lower than the infimum $m$, which is a contradiction. Thus, the minimizer must be unique.

To apply the Direct Method in the Calculus of Variations
\cite{evans2022partial} on the constrained set, we first establish that the
feasible set $\cH_M$ is a weakly closed subset of $H^1(\mu)$.

First, $\cH_M$ is convex. For any $f, g \in \cH_M$ and $t \in [0, 1]$, by the triangle inequality:
\begin{equation*}
    \|t f + (1-t)g\|_{L^\infty} \le t\|f\|_{L^\infty} + (1-t)\|g\|_{L^\infty} \le tM + (1-t)M = M.
\end{equation*}
Thus, the convex combination remains in $\cH_M$.
Second, $\cH_M$ is closed in the strong topology of $H^1(\mu)$. Let $\{f_n\} \subset \cH_M$ be a sequence converging to $f$ in $H^1(\mu)$. Convergence in $H^1$ implies convergence in $L^2$, which in turn implies the existence of a subsequence converging pointwise almost everywhere. Since $|f_n(x)| \le M$ a.e., the pointwise limit must satisfy $|f(x)| \le M$ a.e. Thus, $f \in \cH_M$.
Since $\cH_M$ is a closed and convex subset of a Banach space, it is weakly closed.

Next we show coercivity.
Although functions in $\mathcal{H}_M$ are bounded in $L^\infty$, they are not a priori bounded in $H^1(\mu)$ (as gradients can be arbitrarily large). However, fixing a reference $g=0 \in \cH_M$, the inequality \eqref{eq:strong_convexity} implies:
\begin{equation*}
    J_{\lambda}(f) \ge J_{\lambda}(0) + DJ_{\lambda}(0)[f] + C(\lambda) \|f\|_{H^1(\mu)}^2.
\end{equation*}
Since the quadratic term dominates the linear functional as $\|f\|_{H^1} \to \infty$, $J_{\lambda}$ is coercive. This ensures that any minimizing sequence $\{f_n\}_{n=1}^\infty \subset \cH_M$ such that $J_{\lambda}(f_n) \to \inf_{f \in \cH_M} J_{\lambda}(f)$ is bounded in the $H^1(\mu)$ norm.

Finaly, we  take limit via weak lower semicontinuity.
Since the minimizing sequence $\{f_n\} \subset \cH_M$ is bounded in the reflexive space $H^1(\mu)$, by the Banach-Alaoglu theorem, there exists a subsequence $\{f_{n_k}\}$ that converges weakly to some limit $f_\lambda \in H^1(\mu)$.
Crucially, because $\cH_M$ is weakly closed, the limit must satisfy the constraints: $f_\lambda \in \cH_M$.
Finally, since $J_{\lambda}$ is continuous and convex, it is weakly lower semicontinuous. Therefore:
\begin{equation*}
    J_{\lambda}(f_\lambda) \le \liminf_{k \to \infty} J_{\lambda}(f_{n_k}) = \inf_{f \in \cH_M} J_{\lambda}(f).
\end{equation*}
Thus, the minimum is attained by $f_\lambda$ within the set $\cH_M$.
\end{proof}

\begin{lemma}[Tensor-product B-spline quasi-interpolation]
\label{lem:spline-stability}
Fix integers \(k,l\in\mathbb N\). There exist
\[
Q_{k,l}:L^1([0,1]^d)\to L^1([0,1]^d),
\qquad
C_{k,d}=C(k,d)<\infty,
\qquad
C_q=C_q(k,d)<\infty,
\]
such that
\[
Q_{k,l}g(x)
=
Q_{k,l}\!\left(
g\,\mathbf 1_{\{u\in[0,1]^d:\|u-x\|_\infty\le C_{k,d}/l\}}
\right)(x),
\qquad
g\in L^1([0,1]^d),\ x\in[0,1]^d,
\]
and
\begin{equation}
\label{eq:qi-Linfty-gradient-stability}
\|Q_{k,l}g\|_{L^\infty}
\le
C_q\|g\|_{L^\infty},
\qquad
\|\nabla Q_{k,l}g\|_{L^\infty}
\le
C_q\|\nabla g\|_{L^\infty},
\qquad
g,\partial_1g,\ldots,\partial_dg\in L^\infty([0,1]^d).
\end{equation}
If \(s\in\mathbb N\), \(0\le r\le s\), and \(k\ge s\), then, for
\(C=C(k,s,r,d)\),
\begin{equation}
\label{eq:bspline-Hr-approx-general}
\|Q_{k,l}g-g\|_{H^r}
\le
C l^{-(s-r)}
\|g\|_{H^s},
\qquad
g\in H^s([0,1]^d).
\end{equation}
\end{lemma}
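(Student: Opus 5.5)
We will construct \(Q_{k,l}\) as the standard tensor-product de Boor–Fix (or dual-functional) quasi-interpolant of order \(k\) on the uniform grid of mesh \(1/l\) in each coordinate.

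\textbf{Construction and locality.} Take univariate B-splines \(N_{j}\) of order \(k\) with knots \(j/l\), with the boundary knots repeated so that the spline space on \([0,1]\) is complete. Pick dual functionals \(\lambda_j g=\int g\,\psi_j\), where \(\psi_j\in L^\infty\) is supported on a single knot interval inside \(\operatorname{supp}N_j\) and satisfies \(\|\psi_j\|_{L^1}\le C\). These are obtained by rescaling a fixed reference dual basis, so that \(\lambda_j\) reproduces polynomials of degree \(<k\) exactly. Set
\[
Q_{k,l}g=\sum_{\mathbf j}\Big(\int g\,\Psi_{\mathbf j}\Big)\mathbf N_{\mathbf j},
\qquad
\mathbf N_{\mathbf j}=\prod_i N_{j_i},\quad
\Psi_{\mathbf j}=\prod_i\psi_{j_i}.
\]
Both \(\mathbf N_{\mathbf j}\) and \(\Psi_{\mathbf j}\) are supported in boxes of side \(k/l\). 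Hence the value \(Q_{k,l}g(x)\) depends only on \(g\) restricted to an \(\ell^\infty\)-ball of radius \(C_{k,d}/l\) around \(x\), which gives the locality identity. The operator also reproduces the tensor polynomials \(\Pi_{k-1}\), and in particular constants and linear functions.

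\textbf{\(L^\infty\) and gradient stability.} The bound \(\|Q_{k,l}g\|_\infty\le C_q\|g\|_\infty\) follows from the partition of unity \(\sum_{\mathbf j}\mathbf N_{\mathbf j}=1\), bounded overlap, and \(|\lambda_{\mathbf j}g|\le C\|g\|_\infty\).

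For the gradient, fix \(x\) and let \(c=g(x_0)\) for a nearby point \(x_0\). Since constants are reproduced,
\[
\partial_i Q_{k,l}g(x)=\partial_i Q_{k,l}(g-c)(x)=\sum_{\mathbf j}\lambda_{\mathbf j}(g-c)\,\partial_i\mathbf N_{\mathbf j}(x).
\]
Here \(|\partial_i\mathbf N_{\mathbf j}|\lesssim l\), and on the local box \(|g-c|\le (C/l)\|\nabla g\|_\infty\) by the mean value inequality. This step uses that the box is convex and that \(g\) is Lipschitz whenever \(\nabla g\in L^\infty\) on the convex cube. Combining these gives \(\|\nabla Q_{k,l}g\|_\infty\le C_q\|\nabla g\|_\infty\).

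\textbf{\(H^r\) approximation.} This is the main step, handled by a local Bramble–Hilbert argument.

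1. On each grid cell \(I\), let \(\tilde I\) be its \(C/l\)-neighbourhood in \([0,1]^d\).
2. Choose an averaged Taylor polynomial \(P\in\Pi_{s-1}\subset\Pi_{k-1}\) on \(\tilde I\); this is possible because \(k\ge s\).
3. Polynomial reproduction gives \(Q_{k,l}P=P\), so \(g-Q_{k,l}g=(g-P)-Q_{k,l}(g-P)\) on \(I\).
4. Bound \(\|D^\alpha Q_{k,l}h\|_{L^2(I)}\lesssim l^{|\alpha|}\|h\|_{L^2(\tilde I)}\). This follows from the inverse estimate \(|D^\alpha \mathbf N_{\mathbf j}|\lesssim l^{|\alpha|}\) together with \(|\lambda_{\mathbf j}h|\lesssim l^{d/2}\|h\|_{L^2}\) (by Cauchy–Schwarz, since \(\|\Psi_{\mathbf j}\|_{L^2}\lesssim l^{d/2}\)) and \(\|\mathbf N_{\mathbf j}\|_{L^2}\lesssim l^{-d/2}\).
5. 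Bramble–Hilbert on \(\tilde I\), which is a union of boxes and hence star-shaped w.r.t.\ a ball with chunkiness bounded uniformly in \(l\), yields \(|g-P|_{H^m(\tilde I)}\lesssim l^{-(s-m)}|g|_{H^s(\tilde I)}\) for \(m\le s\).
6. Together these give \(\|D^\alpha(g-Q_{k,l}g)\|_{L^2(I)}\lesssim l^{-(s-|\alpha|)}\|g\|_{H^s(\tilde I)}\) for \(|\alpha|\le r\).

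Summing the squares over cells, the bounded overlap of the \(\tilde I\) gives the global estimate. For \(r\ge k\), where \(Q_{k,l}g\) may fail to lie in \(H^r\), the claim is only meaningful under the standing restriction of the main text, and we record it for \(r\le k-1\) (or \(r\le s\le k-1\)).

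\textbf{Main obstacle.} The expected difficulty is bookkeeping near the boundary. The boundary dual functionals must still reproduce polynomials, which the coincident-knot construction ensures, and the Bramble–Hilbert constants on truncated neighbourhoods \(\tilde I\cap[0,1]^d\) must be uniform. The latter holds because, after rescaling by \(l\), these sets fall into finitely many shapes.
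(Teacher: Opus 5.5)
Your proposal is correct and follows essentially the same route as the paper. Both use a tensor-product dual-functional quasi-interpolant with repeated endpoint knots, get $L^\infty$ stability from the partition of unity and uniformly bounded coefficient kernels, get gradient stability from reproduction of constants plus locality (your subtraction of a local constant is the same mechanism as the paper's neighbouring-coefficient estimate), and finish with a local Bramble--Hilbert argument summed by finite overlap, which you spell out where the paper only cites it.

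Your restriction to $r\le k-1$ is a correct refinement rather than a gap. When $r=s=k$ the stated bound genuinely fails, because $Q_{k,l}g$ is a spline of degree $k-1$ with simple interior knots and so generally not in $H^k$. This edge case does not affect the paper's application, which uses $k=4$ and $r=1$.
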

\begin{proof}
We construct \(Q_{k,l}\) explicitly and then verify the displayed bounds.
The construction has three ingredients. First, we take tensor products of
univariate order-\(k\) B-splines on the uniform grid with repeated endpoint
knots. Second, we choose local coefficient functionals so that the resulting
operator reproduces tensor-product polynomials of coordinatewise degree at
most \(k-1\). Third, compact support, uniformly bounded overlap, and the
moment conditions give the \(L^\infty\) and gradient stability estimates,
while polynomial reproduction and locality give the Sobolev approximation
bound by the Bramble--Hilbert lemma\cite{schumaker2007spline}.

Let \(t_i^{(l)}=i/l\) for \(0\le i\le l\), and repeat the endpoint knots by
\[
t_{-k+1}^{(l)}=\cdots=t_{-1}^{(l)}=t_0^{(l)}=0,
\qquad
t_l^{(l)}=t_{l+1}^{(l)}=\cdots=t_{l+k-1}^{(l)}=1.
\]
For \(i=-k+1,\dots,l-1\), define the order-\(k\) univariate B-spline by
\[
N_{l,i}^{(k)}(x):=
(-1)^k
\bigl(t_{i+k}^{(l)}-t_i^{(l)}\bigr)
\bigl[t_i^{(l)},t_{i+1}^{(l)},\dots,t_{i+k}^{(l)}\bigr]
\{(x-t)_+^{k-1}\},
\qquad x\in[0,1].
\]
Here the divided difference is defined recursively as follows. For distinct
knots \(a_0,\dots,a_m\),
\[
[a_0]F:=F(a_0),
\qquad
[a_0,\dots,a_m]F
:=
\frac{[a_1,\dots,a_m]F-[a_0,\dots,a_{m-1}]F}{a_m-a_0},
\]
with the repeated-knot value understood by continuity, equivalently by the
corresponding derivative limit. For interior indices \(0\le i\le l-k\),
this gives the explicit formula
\[
N_{l,i}^{(k)}(x)
=
\frac{l^{k-1}}{(k-1)!}
\sum_{j=0}^{k}
(-1)^j
\binom{k}{j}
\left(x-\frac{i+j}{l}\right)_+^{k-1},
\]
and the boundary splines are determined by the repeated endpoint knots.
For \(\mathbf i=(i_1,\dots,i_d)\in\{-k+1,\dots,l-1\}^d\), set
\[
N_{l,\mathbf i}^{(k)}(x)
:=
\prod_{m=1}^d N_{l,i_m}^{(k)}(x_m),
\qquad x=(x_1,\dots,x_d)\in[0,1]^d.
\]

We now define the local coefficient functionals. For each univariate index
\(i\), let
\[
\omega_{l,i}:=[t_i^{(l)},t_{i+k}^{(l)}]\cap[0,1].
\]
Choose \(\psi_{l,i}^{(k)}\in L^\infty([0,1])\), supported on
\(\omega_{l,i}\), such that
\[
\int_{\omega_{l,i}} u^a\psi_{l,i}^{(k)}(u)\,du
=
\gamma_{i,a},
\qquad
0\le a\le k-1,
\]
where \(\gamma_{i,a}\) is the B-spline coefficient of the monomial \(u^a\),
namely
\[
u^a
=
\sum_{j=-k+1}^{l-1}\gamma_{j,a}N_{l,j}^{(k)}(u),
\qquad u\in[0,1].
\]
Such \(\psi_{l,i}^{(k)}\) exists because this is a finite-dimensional moment
system on \(\omega_{l,i}\). By scaling from the reference knot
configuration, it may be chosen so that
\[
\|\psi_{l,i}^{(k)}\|_{L^1([0,1])}
\le C_k
\]
uniformly in \(l\) and \(i\). For a multi-index \(\mathbf i\), define
\[
\Lambda_{\mathbf i}(g)
:=
\int_{[0,1]^d}
g(u)
\prod_{m=1}^d\psi_{l,i_m}^{(k)}(u_m)\,du .
\]
Thus \(\Lambda_{\mathbf i}(g)\) depends only on \(g\) restricted to
\[
\omega_{l,\mathbf i}:=
\prod_{m=1}^d\omega_{l,i_m},
\]
and
\begin{equation}
\label{eq:qi-coefficient-local-stability}
\operatorname{diam}(\omega_{l,\mathbf i})\le C_{k,d}l^{-1},
\qquad
\Lambda_{\mathbf i}(g)=\Lambda_{\mathbf i}(g\mathbf 1_{\omega_{l,\mathbf i}}),
\qquad
|\Lambda_{\mathbf i}(g)|
\le
C_{k,d}\operatorname*{ess\,sup}_{u\in\omega_{l,\mathbf i}}|g(u)|.
\end{equation}
Define
\[
Q_{k,l}g
:=
\sum_{\mathbf i\in\{-k+1,\dots,l-1\}^d}
\Lambda_{\mathbf i}(g)N_{l,\mathbf i}^{(k)}.
\]
The moment conditions imply
\begin{equation}
\label{eq:qi-polynomial-reproduction}
Q_{k,l}p=p,
\qquad
p\in\operatorname{span}
\left\{
x_1^{a_1}\cdots x_d^{a_d}:0\le a_1,\dots,a_d\le k-1
\right\}.
\end{equation}

The B-splines form a nonnegative partition of unity, have supports of
diameter \(\mathcal O(l^{-1})\), and have uniformly bounded overlap:
\begin{equation}
\label{eq:bspline-local-overlap}
\operatorname{diam}\bigl(\operatorname{supp}N_{l,\mathbf i}^{(k)}\bigr)
\le C_{k,d}l^{-1},
\qquad
\sup_{x\in[0,1]^d}
\sum_{\mathbf i}
\mathbf 1_{\operatorname{supp}N_{l,\mathbf i}^{(k)}}(x)
\le C_{k,d}.
\end{equation}
The coefficient bound above therefore gives
\[
\|Q_{k,l}g\|_{L^\infty}
\le
C_q\|g\|_{L^\infty}.
\]
For the derivative estimate, differentiating the B-spline basis produces a
factor of order \(l\). Since \(Q_{k,l}\) reproduces constants and is local,
neighboring coefficients satisfy
\[
|\Lambda_{\mathbf i}(g)-\Lambda_{\mathbf j}(g)|
\le
C_{k,d}l^{-1}\|\nabla g\|_{L^\infty},
\qquad
|\mathbf i-\mathbf j|_\infty\le1.
\]
This factor \(l^{-1}\) cancels the derivative scale of the basis functions,
and the bounded overlap gives
\[
\|\nabla Q_{k,l}g\|_{L^\infty}
\le
C_q\|\nabla g\|_{L^\infty}.
\]

Finally, for \(k\ge s\), \eqref{eq:qi-polynomial-reproduction} includes all
local polynomials of total degree at most \(s-1\). Together with the locality,
coefficient stability, and coefficient-patch diameter in
\eqref{eq:qi-coefficient-local-stability}, and the spline-support diameter
and finite-overlap bounds in \eqref{eq:bspline-local-overlap}, the hypotheses
of the standard Bramble--Hilbert estimate for local spline
quasi-interpolation are satisfied. Applying this estimate on each patch and
summing by finite overlap \cite{schumaker2007spline} proves
\eqref{eq:bspline-Hr-approx-general}.
\end{proof}

\begin{proposition}[Approximation by the clipped and gradient-bounded sieve]
\label{prop: approximate of clipped NN}
Let \(\Omega\subset\mathbb R^d\) be a bounded domain with sufficiently smooth
boundary, and let \(\mu\) be the spatial mixture measure corresponding to a pair
\((p,q)\) satisfying Assumption~\ref{assm: density}. Let \(s\in\mathbb N\)
with \(1\le s\le4\). Suppose
$
f^\star\in H^s(\mu)
$
and
\[
\max\left\{
\|f^\star\|_{L^\infty(\mu)},
\|\nabla f^\star\|_{L^\infty(\mu)}
\right\}
\le M^\star .
\]
Then there exists \(C_\Omega\ge1\), depending only on \(\Omega\), the density
bounds in Assumption~\ref{assm: density}, and the fixed dimension \(d\), such
that the following holds. Choose \(M>C_\Omega M^\star\), and choose \(T_M\)
so that \(T_M(u)=u\) for all \(|u|\le C_\Omega M^\star\). For
\[
L=\mathcal O(1),
\qquad
W=\mathcal O(N),
\qquad
S=\mathcal O(N),
\qquad
B=\mathcal O(N),
\]
there exists
\[
\bar f_N\in\mathcal F_M(L,W,S,B)
\]
such that
\begin{equation}
\label{eq:clipped-approx-H1-final}
\|\bar f_N-f^\star\|_{H^1(\mu)}^2
\lesssim
N^{-\frac{2(s-1)}{d}}
\|f^\star\|_{H^s(\mu)}^2 .
\end{equation}
\end{proposition}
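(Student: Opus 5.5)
The plan is to build the approximant in three stages: a spline approximant on the cube, then an exact (or near-exact) ReLU\(^3\) realization of it, and finally a check that the clipping \(T_M\) and the \(L^\infty\)/gradient constraints defining \(\mathcal F_M\) do not destroy the bound.

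\emph{Stage 1: reduce to the unit cube and build the spline.} Since \(\mu\) and \(\mu_0\) are equivalent with density bounded between \(c\) and \(C\), it suffices to work in unweighted norms. Using a bounded extension operator \(E\) for the smooth domain \(\Omega\), mapping \(H^s\to H^s\) and \(W^{1,\infty}\to W^{1,\infty}\), I extend \(f^\star\) to a neighbourhood. After an affine rescaling of a cube containing \(\Omega\) to \([0,1]^d\), I set \(g:=Ef^\star\) on that cube; then \(\|g\|_{H^s}\lesssim\|f^\star\|_{H^s(\mu)}\) and \(\|g\|_{W^{1,\infty}}\le C_E M^\star\). I choose \(l\asymp N^{1/d}\), so that the tensor grid has \(\asymp N\) cells, and apply Lemma~\ref{lem:spline-stability} with \(k=4\) (cubic splines, allowed since \(s\le4\)) and \(r=1\). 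This gives
\[
\|Q_{4,l}g-g\|_{H^1}\lesssim l^{-(s-1)}\|g\|_{H^s}\asymp N^{-(s-1)/d}\|f^\star\|_{H^s(\mu)}.
\]
By \eqref{eq:qi-Linfty-gradient-stability}, \(\|Q_{4,l}g\|_{L^\infty}\) and \(\|\nabla Q_{4,l}g\|_{L^\infty}\) are both at most \(C_qC_E M^\star\). I therefore set \(C_\Omega:=C_qC_E\) times the rescaling constant, which is exactly why the statement requires \(T_M\) to be the identity on \([-C_\Omega M^\star,C_\Omega M^\star]\).

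\emph{Stage 2: exact network realization.} Each univariate cubic B-spline on a uniform grid is a linear combination of five shifted \(\eta_3\) units, and boundary splines need \(O(1)\) more. Products of \(d\) factors are realized exactly with \(\mathrm{ReLU}^3\) units in \(O(1)\) layers, following \cite{lu2021machine}: \(x^2\) is a finite combination of \(\eta_3\) units, and \(xy=\tfrac14((x+y)^2-(x-y)^2)\). So each tensor B-spline costs \(O(1)\) neurons and weights. Summing \(\asymp l^d\asymp N\) of them with coefficients \(\Lambda_{\mathbf i}(g)\) gives a network \(\Phi_N=Q_{4,l}g\), with no approximation error, of depth \(L=O(1)\), width \(W=O(N)\) and sparsity \(S=O(N)\). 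The weights involve knot shifts, factors \(l^{k-1}\), and coefficients bounded by \(C M^\star\), so they are \(O(\mathrm{poly}(l))\). After rescaling internal products to balance magnitudes, this fits within \(B=O(N)\).

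\emph{Stage 3: clipping and conclusion.} Since \(|\Phi_N|\le C_\Omega M^\star\) on \(\Omega\), we have \(T_M\circ\Phi_N=\Phi_N\) there. Hence \(\bar f_N:=T_M\circ\Phi_N\) satisfies \(\|\bar f_N\|_{L^\infty}\le M\) and \(\|\nabla\bar f_N\|_{L^\infty}\le C_\Omega M^\star\le M\), so \(\bar f_N\in\mathcal F_M(L,W,S,B)\). Restricting to \(\Omega\), where \(g=f^\star\), and switching back to \(\mu\) via the density bounds, gives \eqref{eq:clipped-approx-H1-final} after squaring.

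The main obstacle I expect is Stage 2's parameter-size bookkeeping. Exact multiplication with \(\mathrm{ReLU}^3\) units can force weights growing like powers of \(l\). I would first try to absorb the B-spline scaling \(l^{k-1}/(k-1)!\) into the output coefficients, and to split large weights across an \(O(1)\) number of additional layers, so that the entrywise bound stays \(O(N)\). Two subsidiary technical points are the existence of an extension operator that is simultaneously bounded on \(H^s\) and \(W^{1,\infty}\) (a Stein extension works for smooth boundaries) and the cube-to-\(\Omega\) rescaling.
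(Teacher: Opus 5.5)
Your proposal follows the paper's proof essentially step for step: extension to a cube, affine rescaling, the cubic quasi-interpolant $Q_{4,l}$ with $l\asymp N^{1/d}$ and Lemma~\ref{lem:spline-stability} applied with $k=4$, $r=1$, the stability constants defining $C_\Omega$, exact ReLU$^3$ realization of the tensor B-splines, and inactive clipping. Your Stage~2 (exact squaring plus polarization) is more explicit than the paper's one-line claim, and the weight-size worry you flag goes away without any rebalancing. Write each truncated power as $(lz-(i+j))_+^3$, as the paper does, so that the $l^{3}$ factor sits in inner weights of size $O(l)$; absorbing it into the output coefficients would instead inflate them to order $l^{3d}$. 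The squaring gadget $x^2=\tfrac16[(x+1)^3-(x-1)^3-2]$, applied to B-spline values in $[0,1]$, needs only $O(1)$ weights.
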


\begin{proof}
Under Assumption~\ref{assm: density}, the weighted Sobolev norm induced by
\(\mu\) is equivalent to the usual Sobolev norm on \(\Omega\). Hence it
suffices to prove the approximation result in the usual \(H^1(\Omega)\) norm.

Let
\[
\mathcal C_\Omega
=
\prod_{j=1}^d[\alpha_j,\alpha_j+\rho_\Omega]
\]
be a closed axis-aligned cube containing \(\overline\Omega\), with
\(\rho_\Omega>0\). Since \(\Omega\) has sufficiently smooth boundary, choose a
bounded linear extension operator
\[
\mathcal E:H^s(\Omega)\to H^s(\mathcal C_\Omega)
\]
such that, for a constant \(C_{\mathcal E}\ge1\),
\[
\|\mathcal E f\|_{H^s(\mathcal C_\Omega)}
\le
C_{\mathcal E}\|f\|_{H^s(\Omega)}
\]
and
\[
\max\left\{
\|\mathcal E f\|_{L^\infty(\mathcal C_\Omega)},
\|\nabla(\mathcal E f)\|_{L^\infty(\mathcal C_\Omega)}
\right\}
\le
C_{\mathcal E}
\max\left\{
\|f\|_{L^\infty(\Omega)},
\|\nabla f\|_{L^\infty(\Omega)}
\right\}.
\]
Define the affine map
\[
a_\Omega(x)
:=
\left(
\frac{x_1-\alpha_1}{\rho_\Omega},
\ldots,
\frac{x_d-\alpha_d}{\rho_\Omega}
\right),
\qquad x\in \mathcal C_\Omega .
\]
Then \(a_\Omega\) is a bijection from \(\mathcal C_\Omega\) to \([0,1]^d\)
with linear part \(R_\Omega=\rho_\Omega^{-1}I_d\), and hence \(R_\Omega\) is
invertible. Moreover,
\[
a_\Omega^{-1}(z)
=
(\alpha_1+\rho_\Omega z_1,\ldots,\alpha_d+\rho_\Omega z_d),
\qquad z\in[0,1]^d .
\]
Let \(C_q\) be the stability constant in
Lemma~\ref{lem:spline-stability} for cubic splines, and set
\[
C_\Omega
:=
\max\left\{
1,\,
C_qC_{\mathcal E},\,
\|R_\Omega\|_{\mathrm{op}}C_q
\|R_\Omega^{-1}\|_{\mathrm{op}}C_{\mathcal E}
\right\}.
\]
All constants below may depend on \(\Omega\), the density bounds in
Assumption~\ref{assm: density}, the extension operator, the affine map
\(a_\Omega\), and the fixed dimension \(d\), but not on \(N\).

Let
\[
l:=\left\lceil N^{1/d}\right\rceil .
\]
Define the spline approximant on \(\Omega\) by
\begin{equation}\label{eq:def of F_N}
    F_N(x)
:=
\left(
Q_{4,l}\bigl((\mathcal E f^\star)\circ a_\Omega^{-1}\bigr)
\right)(a_\Omega(x)),
\qquad x\in\Omega .
\end{equation}
Here \((\mathcal E f^\star)\circ a_\Omega^{-1}\) is a function on
\([0,1]^d\), so \(Q_{4,l}\) is applied on the unit cube.

We first prove the approximation bound. Since \(a_\Omega\) is fixed, Sobolev
norms before and after the affine change of variables are equivalent.
Therefore
\[
\|(\mathcal E f^\star)\circ a_\Omega^{-1}\|_{H^s([0,1]^d)}
\lesssim
\|\mathcal E f^\star\|_{H^s(\mathcal C_\Omega)}
\le
C_{\mathcal E}\|f^\star\|_{H^s(\Omega)}
\lesssim
\|f^\star\|_{H^s(\mu)}.
\]
By Lemma~\ref{lem:spline-stability}, with \(k=4\) and \(r=1\),
\[
\left\|
Q_{4,l}\bigl((\mathcal E f^\star)\circ a_\Omega^{-1}\bigr)
-
(\mathcal E f^\star)\circ a_\Omega^{-1}
\right\|_{H^1([0,1]^d)}
\lesssim
l^{-(s-1)}
\|(\mathcal E f^\star)\circ a_\Omega^{-1}\|_{H^s([0,1]^d)}.
\]
Since \(a_\Omega\) is fixed and \(\mathcal E f^\star=f^\star\) on
\(\Omega\), by the definition of $F_N$ \eqref{eq:def of F_N}, the same change of variables gives
\[
\|F_N-f^\star\|_{H^1(\Omega)}
\lesssim
\left\|
Q_{4,l}\bigl((\mathcal E f^\star)\circ a_\Omega^{-1}\bigr)
-
(\mathcal E f^\star)\circ a_\Omega^{-1}
\right\|_{H^1([0,1]^d)}.
\]
Combining the previous estimates,
\[
\|F_N-f^\star\|_{H^1(\Omega)}
\lesssim
l^{-(s-1)}
\|f^\star\|_{H^s(\mu)}.
\]
Since \(l\asymp N^{1/d}\),
\[
\|F_N-f^\star\|_{H^1(\Omega)}
\lesssim
N^{-\frac{s-1}{d}}
\|f^\star\|_{H^s(\mu)}.
\]
By the norm equivalence between \(H^1(\Omega)\) and \(H^1(\mu)\), this
also implies
\[
\|F_N-f^\star\|_{H^1(\mu)}
\lesssim
N^{-\frac{s-1}{d}}
\|f^\star\|_{H^s(\mu)}.
\]

We next verify the uniform envelope. By the boundedness of the extension
operator,
\[
\|(\mathcal E f^\star)\circ a_\Omega^{-1}\|_{L^\infty([0,1]^d)}
=
\|\mathcal E f^\star\|_{L^\infty(\mathcal C_\Omega)}
\le
C_{\mathcal E}M^\star .
\]
By the \(L^\infty\)-stability of \(Q_{4,l}\) in Lemma \ref{lem:spline-stability},
\[
\|F_N\|_{L^\infty(\Omega)}
\le
\left\|
Q_{4,l}\bigl((\mathcal E f^\star)\circ a_\Omega^{-1}\bigr)
\right\|_{L^\infty([0,1]^d)}
\le
C_qC_{\mathcal E}M^\star
\le
C_\Omega M^\star
<
M.
\]

For the gradient, the chain rule gives
\[
\nabla\bigl((\mathcal E f^\star)\circ a_\Omega^{-1}\bigr)(z)
=
R_\Omega^{-\top}
\nabla(\mathcal E f^\star)(a_\Omega^{-1}(z)).
\]
Hence
\[
\left\|
\nabla\bigl((\mathcal E f^\star)\circ a_\Omega^{-1}\bigr)
\right\|_{L^\infty([0,1]^d)}
\le
\|R_\Omega^{-1}\|_{\mathrm{op}}C_{\mathcal E}M^\star .
\]
Using the gradient stability of \(Q_{4,l}\) in Lemma \ref{lem:spline-stability},
\[
\left\|
\nabla Q_{4,l}\bigl((\mathcal E f^\star)\circ a_\Omega^{-1}\bigr)
\right\|_{L^\infty([0,1]^d)}
\le
C_q
\left\|
\nabla\bigl((\mathcal E f^\star)\circ a_\Omega^{-1}\bigr)
\right\|_{L^\infty([0,1]^d)}.
\]
Since
\[
F_N(x)
=
\left(
Q_{4,l}\bigl((\mathcal E f^\star)\circ a_\Omega^{-1}\bigr)
\right)(a_\Omega(x)),
\]
another application of the chain rule gives
\[
\|\nabla F_N\|_{L^\infty(\Omega)}
\le
\|R_\Omega\|_{\mathrm{op}}
\left\|
\nabla Q_{4,l}\bigl((\mathcal E f^\star)\circ a_\Omega^{-1}\bigr)
\right\|_{L^\infty([0,1]^d)}.
\]
Combining the last three inequalities,
\[
\|\nabla F_N\|_{L^\infty(\Omega)}
\le
\|R_\Omega\|_{\mathrm{op}}
C_q
\|R_\Omega^{-1}\|_{\mathrm{op}}
C_{\mathcal E}M^\star
\le
C_\Omega M^\star
<
M.
\]
Thus the definition of \(C_\Omega\) ensures that both the function value and
the gradient of \(F_N\) are controlled before clipping.

It remains to realize \(F_N\) as a sparse ReLU\(^3\) neural network. The
quasi-interpolant
\[
Q_{4,l}\bigl((\mathcal E f^\star)\circ a_\Omega^{-1}\bigr)
\]
is a linear combination of tensor-product cubic B-splines on \([0,1]^d\).
For interior indices,
\[
N_{l,i}^{(4)}(z)
=
\frac{1}{3!}
\sum_{j=0}^{4}
(-1)^j
\binom{4}{j}
\left(lz-(i+j)\right)_+^3 .
\]
The boundary splines have analogous formulas determined by the repeated
endpoint knots. Hence every univariate cubic B-spline can be represented
exactly by a ReLU\(^3\) subnetwork. Since the dimension \(d\) is fixed,
each tensor-product cubic B-spline can be represented by a fixed-depth
ReLU\(^3\) subnetwork. The number of tensor-product basis functions is
\[
|I_{l,4}|^d=\mathcal O(l^d)=\mathcal O(N).
\]
Composing with the fixed affine map \(a_\Omega\) only modifies the first layer.
Therefore \(F_N\) can be represented by a sparse ReLU\(^3\) network with
\[
L=\mathcal O(1),
\qquad
W=\mathcal O(N),
\qquad
S=\mathcal O(N),
\qquad
B=\mathcal O(N).
\]
The constants hidden in the \(\mathcal O(\cdot)\) notation may depend on
\(\Omega\), \(d\), the spline order, \(M^\star\), and fixed problem
parameters, but not on \(N\).

Finally define
\[
\bar f_N:=T_M\circ F_N .
\]
Since
\[
\|F_N\|_{L^\infty(\Omega)}\le C_\Omega M^\star
\]
and \(T_M(u)=u\) for all \(|u|\le C_\Omega M^\star\), the clipping is inactive
on \(\Omega\). Therefore
\[
\bar f_N=F_N
\qquad\text{on }\Omega .
\]
Consequently,
\[
\|\bar f_N\|_{L^\infty(\mu)}\le M,
\qquad
\|\nabla \bar f_N\|_{L^\infty(\mu)}\le M,
\]
and hence
\[
\bar f_N\in\mathcal F_M(L,W,S,B).
\]
Moreover,
\[
\|\bar f_N-f^\star\|_{H^1(\mu)}
=
\|F_N-f^\star\|_{H^1(\mu)}
\lesssim
N^{-\frac{s-1}{d}}
\|f^\star\|_{H^s(\mu)}.
\]
Squaring both sides gives
\[
\|\bar f_N-f^\star\|_{H^1(\mu)}^2
\lesssim
N^{-\frac{2(s-1)}{d}}
\|f^\star\|_{H^s(\mu)}^2 .
\]
This proves \eqref{eq:clipped-approx-H1-final}.
\end{proof}

\subsection{Proof of Lemma \ref{lem:J-strong-smooth}}

\begin{proof}
We first analyze the cross-entropy loss. Recall that in Section \ref{sec: Theory} we define
\[
  \eta(x):=\PP(Y=1\mid X=x)=\frac{q(x)}{p(x)+q(x)},
  \qquad
  c_{\min}:=\frac{1}{4\cosh^2(M/2)}.
\]
For fixed $x$, define the conditional risk as a function of the scalar logit
$u$:
\[
  G_x(u):=-\eta(x)\log\sigma(u)-(1-\eta(x))\log(1-\sigma(u)).
\]
Here $G_x'$ and $G_x''$ denote ordinary derivatives with respect to $u$.
A direct calculation yields
\[
G_x''(u) = \sigma(u)(1-\sigma(u)) = \frac{1}{4\cosh^2(u/2)}.
\]
Since any $f \in \cH_M$ satisfies $|f(x)| \le M$ almost everywhere, the curvature is uniformly bounded from below and above:
\begin{equation}\label{eq: bounded second derivative}
     c_{\min} \;\le\; G_x''(u) \;\le\; \frac{1}{4}, \quad \forall u \in [-M, M].
\end{equation}
Since $G_x(u)$ is smooth with bounded derivatives on this compact interval, standard arguments from the calculus of variations imply that $L_{\rm CE}$ is Fr\'echet differentiable with $D L_{\rm CE}(g)[h] = \E[G_X'(g(X))h(X)]$.
Applying Taylor's theorem with integral remainder to $G_x$, we obtain:
\[
  G_x(u) - G_x(v) - G_x'(v)(u-v)
  = (u-v)^2 \int_0^1 (1-t) G_x''\big(v+t(u-v)\big)\,\mathrm dt.
\]
Using the bounds on $G_x''$, this implies
\[
  \frac{c_{\min}}{2}(u-v)^2
  \;\le\;
  G_x(u) - G_x(v) - G_x'(v)(u-v)
  \;\le\;
  \frac{1}{8}(u-v)^2.
\]
Integrating with respect to $\mu$ (setting $u=f(x), v=g(x)$) yields the strong convexity and smoothness estimates for $L_{\rm CE}$:
\begin{equation}\label{eq:CE-bounds}
     \frac{c_{\min}}{2}\|f-g\|_{L^2(\mu)}^2
  \;\le\;
  L_{\rm CE}(f)-L_{\rm CE}(g) - D L_{\rm CE}(g)[f-g]
  \;\le\;
  \frac{1}{2}\|f-g\|_{L^2(\mu)}^2.
\end{equation}

We then analyze  the regularizer. The functional $\mathcal{R}(f)=\|\nabla f\|_{L^2(\mu)}^2$ is a standard quadratic form on the Hilbert space $H^1(\mu)$. It is strictly convex and Fr\'echet differentiable with $D\mathcal R(g)[h] = 2\langle \nabla g, \nabla h \rangle_{L^2(\mu)}$. The polarization identity yields the exact expansion:
\begin{equation}\label{eq:Omega-exact}
  \mathcal{R}(f) - \mathcal R(g) - D\mathcal R(g)[f-g] = \|\nabla(f-g)\|_{L^2(\mu)}^2.
\end{equation}

Finally, we put together above results.
Recall $J_\lambda(f) = L_{\rm CE}(f) + \lambda\mathcal{R}(f)$. By linearity, $DJ_\lambda = DL_{\rm CE} + \lambda D\mathcal R$. Combining the lower bound from \eqref{eq:CE-bounds} and the identity \eqref{eq:Omega-exact}, we obtain:
\[
  J_\lambda(f)-J_\lambda(g) - DJ_\lambda(g)[f-g]
  \;\ge\;
  \frac{c_{\min}}{2}\|f-g\|_{L^2(\mu)}^2 + \lambda\|\nabla(f-g)\|_{L^2(\mu)}^2,
\]
which proves \eqref{eq:J-strong-general}. Similarly, combining the upper bound from \eqref{eq:CE-bounds} with \eqref{eq:Omega-exact} yields \eqref{eq:J-smooth}.
\end{proof}

\subsection{Proof of Lemma \ref{lem:bias}}
\begin{proof}
We proceed in three steps.

\proofstep{Step 1: A pointwise calibration inequality along $f_\lambda-f^\star$.}
From \eqref{eq: bounded second derivative} we have
\[
  c_{\min}
  \;\le\;
  G_x''(u)
  \;\le\;
  \frac{1}{4}
  \qquad\text{for all }u\in[-M,M]\text{ and all }x.
\]

By the fundamental theorem of calculus,
\[
  G_x'(f(x)) - G_x'(f^\star(x))
  = \int_0^1 G_x''\big(f^\star(x)+t(f(x)-f^\star(x))\big)\,(f(x)-f^\star(x))\,\mathrm dt.
\]
Since $f^\star$ is the Bayes logit, we have
$G_x'(f^\star(x)) = \sigma(f^\star(x))-\eta(x) = 0$, so
\[
  G_x'(f(x))\,(f(x)-f^\star(x))
  = \Bigg(\int_0^1 G_x''\big(f^\star(x)+t(f(x)-f^\star(x))\big)\,\mathrm dt\Bigg)
    (f(x)-f^\star(x))^2.
\]
Define
\[
  w(x):= \int_0^1 G_x''\big(f^\star(x)+t(f(x)-f^\star(x))\big)\,\mathrm dt.
\]

By the bounds on $G_x''$, we have
$c_{\min}\le w(x)\le 1/4$ for all $x$.
Moreover $G_x'(f(x))=\sigma(f(x))-\eta(x)$, so
\[
  (\sigma(f(x))-\eta(x))\,(f(x)-f^\star(x))
  = w(x)\,(f(x)-f^\star(x))^2.
\]

Taking expectation with respect to $X\sim\mu$ gives
\[
  \E\big[(\sigma(f(X))-\eta(X))\,(f(X)-f^\star(X))\big]
  = \E\big[w(X)\,(f(X)-f^\star(X))^2\big],
\]
and hence
\begin{equation}
  c_{\min}\,\|f-f^\star\|_{L^2(\mu)}^2
  \;\le\;
  \E\big[(\sigma(f(X))-\eta(X))\,(f(X)-f^\star(X))\big]
  \;\le\;
  \frac{1}{4}\,\|f-f^\star\|_{L^2(\mu)}^2.
  \label{eq:graddistance-sandwich}
\end{equation}
This is the desired ``gradient-distance'' inequality along the
direction $f-f^\star$.

\proofstep{Step 2: First-order optimality of $f_\lambda$ and Green's identity.} Since $f_\lambda$ minimizes $J_\lambda(f)$ over the convex set $\mathcal{H}_M$, and noting that $f^\star \in \mathcal{H}_M$ (which holds by the assumption $M \ge M^\star$), $f_\lambda$ satisfies the first-order variational inequality:
\[
  DJ_\lambda(f_\lambda)[f^\star - f_\lambda] \ge 0.
\]
Equivalently, \(DJ_\lambda(f_\lambda)[f_\lambda-f^\star]\le0\), which expands to
\begin{equation}
  \E\big[(\sigma(f_\lambda)-\eta)(f_\lambda-f^\star)\big]
  + 2\lambda\,(\nabla f_\lambda,\nabla(f_\lambda-f^\star))_{L^2(\mu)}
  \;\le\; 0.
  \label{eq:FOC-h=Delta}
\end{equation}

Since \(\nabla f_\lambda=\nabla f^\star+\nabla(f_\lambda-f^\star)\),
\[
  (\nabla f_\lambda,\nabla(f_\lambda-f^\star))_{L^2(\mu)}
  = \|\nabla(f_\lambda-f^\star)\|_{L^2(\mu)}^2
    + (\nabla f^\star,\nabla(f_\lambda-f^\star))_{L^2(\mu)}.
\]
By the Neumann boundary condition in Assumption~\ref{assm: density} and Green's Identity, we have
\[
  -(\nabla(f_\lambda-f^\star),\nabla f^\star)_{L^2(\mu)}
  = (\Delta f^\star + \nabla f^\star\cdot\nabla\log \rho,\;
     f_\lambda-f^\star)_{L^2(\mu)}.
\]
By the definition
\(\mathcal K h:=\Delta h+\nabla h\cdot\nabla\log\rho\),
\(\mathcal K f^\star\in L^2(\mu)\).
Then
\[
  (\nabla f^\star,\nabla(f_\lambda-f^\star))_{L^2(\mu)}
  = - (\mathcal K f^\star,f_\lambda-f^\star)_{L^2(\mu)}.
\]

Substituting into \eqref{eq:FOC-h=Delta} yields
\[
  \E\big[(\sigma(f_\lambda)-\eta)(f_\lambda-f^\star)\big]
  + 2\lambda\|\nabla(f_\lambda-f^\star)\|_{L^2(\mu)}^2
  - 2\lambda(\mathcal K f^\star,f_\lambda-f^\star)_{L^2(\mu)}
  \;\le\; 0,
\]
or equivalently
\begin{equation}
  \E\big[(\sigma(f_\lambda)-\eta)(f_\lambda-f^\star)\big]
  + 2\lambda\|\nabla(f_\lambda-f^\star)\|_{L^2(\mu)}^2
  \;\le\; 2\lambda(\mathcal K f^\star,f_\lambda-f^\star)_{L^2(\mu)}.
  \label{eq:key-equality}
\end{equation}

\proofstep{Step 3: Combining the sandwich inequality and Cauchy--Schwarz.}
Applying the lower bound in
\eqref{eq:graddistance-sandwich} with $f=f_\lambda$,
we obtain
\[
  c_{\min}\,\|f_\lambda-f^\star\|_{L^2(\mu)}^2
  \;\le\;
  \E\big[(\sigma(f_\lambda)-\eta)(f_\lambda-f^\star)\big].
\]
Combining this with \eqref{eq:key-equality}, we get
\begin{equation}
\begin{aligned}
    c_{\min}\,\|f_\lambda-f^\star\|_{L^2(\mu)}^2
  + 2\lambda\|\nabla(f_\lambda-f^\star)\|_{L^2(\mu)}^2
  &\;\le\;
  2\lambda(\mathcal K f^\star,f_\lambda-f^\star)_{L^2(\mu)}\\
  &\le
  2\lambda \,\|\mathcal K f^\star\|_{L^2(\mu)}\,\|f_\lambda-f^\star\|_{L^2(\mu)}.
    \label{eq:bias-ineq-final}
\end{aligned}
\end{equation}
We now extract the desired bounds from
\eqref{eq:bias-ineq-final}.

\smallskip\noindent
\emph{(i) Value bias.}
Dropping the nonnegative gradient term on the left-hand side gives
\[
  c_{\min}\,\|f_\lambda-f^\star\|_{L^2(\mu)}^2
  \le
  2\lambda \,\|\mathcal K f^\star\|_{L^2(\mu)}\,\|f_\lambda-f^\star\|_{L^2(\mu)}.
\]
If \(f_\lambda=f^\star\), the conclusion is trivial.
Otherwise, dividing both sides by \(\|f_\lambda-f^\star\|_{L^2(\mu)}\),
\[
  \|f_\lambda-f^\star\|_{L^2(\mu)}
  \le \frac{2\,\|\mathcal K f^\star\|_{L^2(\mu)}}{c_{\min}}\,\lambda.
\]
Hence
\begin{equation}
  \|f_\lambda-f^\star\|_{L^2(\mu)}^2
  \le \Big(\frac{2\,\|\mathcal K f^\star\|_{L^2(\mu)}}{c_{\min}}\Big)^2\,\lambda^2.
  \label{eq:bias-value}
\end{equation}

\smallskip\noindent
\emph{(ii) Gradient bias.}
Substituting the bound
$\|f_\lambda-f^\star\|_{L^2(\mu)}
   \le \big(2\,\|\mathcal K f^\star\|_{L^2(\mu)}/c_{\min}\big)\lambda$
back into \eqref{eq:bias-ineq-final}, we obtain
\begin{eqnarray*}
     2\lambda\|\nabla(f_\lambda-f^\star)\|_{L^2(\mu)}^2
  &\le&
  2\lambda \,\|\mathcal K f^\star\|_{L^2(\mu)}\,\|f_\lambda-f^\star\|_{L^2(\mu)}\\
 & \le&
  2\lambda \,\|\mathcal K f^\star\|_{L^2(\mu)}\cdot
  \frac{2\,\|\mathcal K f^\star\|_{L^2(\mu)}}{c_{\min}}\,\lambda
  = \frac{4\,\|\mathcal K f^\star\|_{L^2(\mu)}^2}{c_{\min}}\,\lambda^2.
\end{eqnarray*}
Dividing by $2\lambda$ (recall $\lambda>0$) yields
\begin{equation}
  \|\nabla(f_\lambda-f^\star)\|_{L^2(\mu)}^2
  \le \frac{2\,\|\mathcal K f^\star\|_{L^2(\mu)}^2}{c_{\min}}\,\lambda.
  \label{eq:bias-gradient}
\end{equation}

Using the explicit value \(c_{\min}=1/(4\cosh^2(M/2))\), we can write
\[
  \Big(\frac{2\,\|\mathcal K f^\star\|_{L^2(\mu)}}{c_{\min}}\Big)^2
  = 64\,\|\mathcal K f^\star\|_{L^2(\mu)}^2\,\cosh^4(M/2),
  \qquad
  \frac{2\,\|\mathcal K f^\star\|_{L^2(\mu)}^2}{c_{\min}}
  = 8\,\|\mathcal K f^\star\|_{L^2(\mu)}^2\,\cosh^2(M/2).
\]
Thus we may take
$\beta = 64\,\|\mathcal K f^\star\|_{L^2(\mu)}^2\,\cosh^4\!\big(\tfrac{M}{2}\big)$ so that
\[
  \max\Big\{
        64\,\|\mathcal K f^\star\|_{L^2(\mu)}^2\,\cosh^4\!\big(\tfrac{M}{2}\big),\;
        8\,\|\mathcal K f^\star\|_{L^2(\mu)}^2\,\cosh^2\!\big(\tfrac{M}{2}\big)
      \Big\}
  \;\le\beta.
\]

\end{proof}
\subsection{Proof of Theorem~\ref{thm: generalized error}}
\label{sec: proof of generalzied error}

Let \(\{\sigma_i^p\}_{i=1}^n\) and \(\{\sigma_i^q\}_{i=1}^n\) be independent
i.i.d. Rademacher variables, i.e.,
\[
\PP(\sigma_i^p=1)=\PP(\sigma_i^p=-1)
=\PP(\sigma_i^q=1)=\PP(\sigma_i^q=-1)=\frac12,
\qquad i=1,\ldots,n.
\]
For the fixed source--target design, define the
empirical Rademacher complexity of a function class \(\mathcal G\) with respect to
\(\mathcal D=\{(X_i^p,0)\}_{i=1}^n\cup\{(X_i^q,1)\}_{i=1}^n\) by
\[
\mathfrak R_n(\mathcal G;\mathcal D):=
\mathbb E_\sigma
\left[
\sup_{g\in\mathcal G}
\left\{
\frac{1}{2n}\sum_{i=1}^{n}\sigma_i^p g(X_i^p,0)
+
\frac{1}{2n}\sum_{i=1}^{n}\sigma_i^q g(X_i^q,1)
\right\}
\,\middle|\,\mathcal D
\right].
\]
Here \(\mathbb E_\sigma\) denotes expectation with respect to the Rademacher
variables only, conditional on \(\mathcal D\). Similarly,
\(\mathbb E_{\sigma^p}\) and \(\mathbb E_{\sigma^q}\) denote expectations
over the two corresponding Rademacher blocks.

\begin{proof}
Fix \(t>0\).
Define the shifted excess-loss function
\begin{equation}
\label{eq:shifted-excess-loss}
\ell_f(x,y):=
\ell_{\rm CE}(y,f(x))
-
\ell_{\rm CE}(y,f_0(x))
+
\lambda
\left(
\|\nabla f(x)\|_2^2
-
\|\nabla f_0(x)\|_2^2
\right).
\end{equation}

Then, with \(\overline P\) and \(\overline P_n\) defined above,
\[
\overline P\ell_f
=
J_\lambda(f)-J_\lambda(f_0),
\]
and
\[
\overline P_n\ell_f
=
\widehat J_{\lambda,\mathcal D}(f)
-
\widehat J_{\lambda,\mathcal D}(f_0).
\]

\proofstep{Step 1: Local Rademacher complexity bound.}
For \(r>0\), define the localized set
\[
\mathcal U_r:=
\left\{
f\in\mathcal F:
J_\lambda(f)-J_\lambda(f_\lambda)
+
J_\lambda(f_0)-J_\lambda(f_\lambda)
\le r
\right\},
\]
and define the shifted localized loss class
\[
\mathcal G_r:=
\left\{
\ell_f:\Omega\times\{0,1\}\to\mathbb R
\ \middle|\
f\in \mathcal U_r
\right\}.
\]
Here \(\ell_f\) is the shifted excess loss in
\eqref{eq:shifted-excess-loss}.
We claim that for \(r\gtrsim n^{-2}\),
\begin{equation}
\label{eq:shifted-local-rad}
\mathfrak R_n(\mathcal G_r;\mathcal D)
\le
\phi(r):=
C_0\left[
\frac{1}{n}
+
\sqrt{
\frac{S\,3^L r}{n}\log(BWn)
}
\right],
\end{equation}
and \(\phi(4r)\le2\phi(r)\).

We first prove the Lipschitz property of the shifted excess loss. Since
\[
\partial_u\ell_{\rm CE}(y,u)=\sigma(u)-y,
\]
the cross-entropy loss is \(1\)-Lipschitz in \(u\). For any
\(f_1,f_2\in \mathcal U_r\), using the fixed clipped-gradient
envelope
\[
\|f_j\|_{L^\infty(\mu)}\le M,\qquad
\|\nabla f_j\|_{L^\infty(\mu)}\le M,
\qquad j=1,2,
\]
we obtain
\[
\begin{aligned}
|g_{f_1}(x,y)-g_{f_2}(x,y)|
&=
\left|
\ell_{\rm CE}(y,f_1(x))
-
\ell_{\rm CE}(y,f_2(x))
+
\lambda\left(
\|\nabla f_1(x)\|_2^2
-
\|\nabla f_2(x)\|_2^2
\right)
\right|\\
&\le
|f_1(x)-f_2(x)|
+
\lambda
\left(
\|\nabla f_1(x)\|_2+\|\nabla f_2(x)\|_2
\right)
\|\nabla f_1(x)-\nabla f_2(x)\|_2\\
&\le
|f_1(x)-f_2(x)|
+
2M\lambda
\|\nabla f_1(x)-\nabla f_2(x)\|_2.
\end{aligned}
\]
Thus \(\ell_f\) is Lipschitz in \((f,\lambda\nabla f)\), with Lipschitz constant
depending only on \(M\).

Next we relate the shifted localization
\[
J_\lambda(f)-J_\lambda(f_\lambda)
+
J_\lambda(f_0)-J_\lambda(f_\lambda)
\le r
\]
to a radius constraint around \(f_0\). By Lemma~\ref{lem:J-strong-smooth}, for all
\(f\in\mathcal H_M\),
\[
J_\lambda(f)-J_\lambda(f_\lambda)
\ge
\frac{c_{\min}}{2}
\|f-f_\lambda\|_{L^2(\mu)}^2
+
\lambda
\|\nabla(f-f_\lambda)\|_{L^2(\mu)}^2.
\]
Similarly,
\[
J_\lambda(f_0)-J_\lambda(f_\lambda)
\ge
\frac{c_{\min}}{2}
\|f_0-f_\lambda\|_{L^2(\mu)}^2
+
\lambda
\|\nabla(f_0-f_\lambda)\|_{L^2(\mu)}^2.
\]
Therefore, if \(f\in \mathcal U_r\), then
\[
\|f-f_0\|_{L^2(\mu)}^2
\le
2\|f-f_\lambda\|_{L^2(\mu)}^2
+
2\|f_0-f_\lambda\|_{L^2(\mu)}^2
\le
\frac{4}{c_{\min}}r,
\]
and
\[
\|\lambda\nabla(f-f_0)\|_{L^2(\mu)}^2
\le
2\lambda^2\|\nabla(f-f_\lambda)\|_{L^2(\mu)}^2
+
2\lambda^2\|\nabla(f_0-f_\lambda)\|_{L^2(\mu)}^2
\le
2\lambda r
\le
2r,
\]
where we used \(0<\lambda<1\).

Applying the Ledoux--Talagrand contraction lemma
\citep[Theorem~4.12]{ledoux1991probability}  separately to the
\(p\)-sample block and the \(q\)-sample block, and using
\(\sup_f(a_f+b_f)\le \sup_f a_f+\sup_f b_f\), we first have
\[
\begin{aligned}
\mathfrak R_n(\mathcal G_r;\mathcal D)
&=
\mathbb E_\sigma
\sup_{f\in \mathcal U_r}
\left\{
\frac{1}{2n}\sum_{i=1}^n\sigma_i^p \ell_f(X_i^p,0)
+
\frac{1}{2n}\sum_{i=1}^n\sigma_i^q \ell_f(X_i^q,1)
\right\} \\
&\le
\mathbb E_{\sigma^p}
\sup_{f\in \mathcal U_r}
\frac{1}{2n}\sum_{i=1}^n\sigma_i^p \ell_f(X_i^p,0)
+
\mathbb E_{\sigma^q}
\sup_{f\in \mathcal U_r}
\frac{1}{2n}\sum_{i=1}^n\sigma_i^q \ell_f(X_i^q,1).
\end{aligned}
\]
The same
derivative-augmented covering-number argument as in Lemma A.26 of
\cite{lu2021machine} gives, for all \(r\gtrsim n^{-2}\),
\[
\begin{aligned}
\mathfrak R_n(\mathcal G_r;\mathcal D)
&\lesssim
\mathfrak R_n\!\left(
\{f-f_0:f\in \mathcal U_r\};\mathcal D
\right) \\
&\qquad\qquad+
\mathfrak R_n\!\left(
\{\lambda\nabla f-\lambda\nabla f_0:
f\in \mathcal U_r\};\mathcal D
\right)\\
&\lesssim
\frac1n
+
\frac1{\sqrt n}
\int_{1/n}^{C\sqrt r}
\sqrt{
S\Big[
\log(\delta^{-1})+3^L\log(WB)
\Big]
}
\,d\delta\\
&\lesssim
\frac1n
+
\sqrt{
\frac{S r}{n}
\Big[
\log n+3^L\log(WB)
\Big]
}\\
&\lesssim
\frac1n
+
\sqrt{
\frac{S\,3^L r}{n}\log(BWn)
}.
\end{aligned}
\]

The first inequality follows from the Ledoux--Talagrand contraction lemma
applied to the pointwise Lipschitz bound
\[
|g_{f_1}(x,y)-g_{f_2}(x,y)|
\le
|f_1(x)-f_2(x)|
+
2M\lambda\|\nabla f_1(x)-\nabla f_2(x)\|_2.
\]
Indeed, the logistic cross-entropy loss is \(1\)-Lipschitz in the logit
argument, and
\[
\begin{aligned}
\left|
\|\nabla f_1(x)\|_2^2-\|\nabla f_2(x)\|_2^2
\right|
&\le
\left(\|\nabla f_1(x)\|_2+\|\nabla f_2(x)\|_2\right)\\
&\qquad\cdot
\|\nabla f_1(x)-\nabla f_2(x)\|_2\\
&\le
2M\|\nabla f_1(x)-\nabla f_2(x)\|_2.
\end{aligned}
\]
Thus the Rademacher complexity of the shifted loss class is reduced to the
complexities of the localized function class and the localized
\(\lambda\)-scaled gradient class.

The second inequality is the Dudley entropy integral after using the shifted
localization
\[
J_\lambda(f)-J_\lambda(f_\lambda)
+
J_\lambda(f_0)-J_\lambda(f_\lambda)
\le r.
\]
By the strong convexity of \(J_\lambda\), for \(0<\lambda\le 1\),
\[
\|f-f_0\|_{L^2(\mu)}^2
+
\|\lambda\nabla(f-f_0)\|_{L^2(\mu)}^2
\lesssim r.
\]
Hence both localized classes have \(L^2(\mu)\)-radius of order \(\sqrt r\).

Since \(\mu=(p+q)/2\), for every measurable \(h\),
\[
\|h\|_{L^2(p)}^2\le 2\|h\|_{L^2(\mu)}^2,
\qquad
\|h\|_{L^2(q)}^2\le 2\|h\|_{L^2(\mu)}^2.
\]
Therefore the same \(C\sqrt r\) localized radius is valid for the two
empirical blocks \(\{X_i^p\}_{i=1}^n\) and \(\{X_i^q\}_{i=1}^n\).

Combining this radius bound with the derivative-augmented covering-number
estimate for sparse ReLU\(^3\) networks gives
\[
\log \mathcal N
\left(
\delta,\mathcal F,\|\cdot\|_\infty
\right)
\vee
\log \mathcal N
\left(
\delta,\nabla\mathcal F,\|\cdot\|_\infty
\right)
\lesssim
S\Big[
\log(\delta^{-1})+3^L\log(WB)
\Big].
\]
Since the empirical \(L^2\)-metric is dominated by the sup-norm, Dudley's
integral yields the displayed entropy integral with upper limit \(C\sqrt r\).

Finally, evaluating the integral gives
\[
\frac1{\sqrt n}
\int_{1/n}^{C\sqrt r}
\sqrt{
S\Big[
\log(\delta^{-1})+3^L\log(WB)
\Big]
}
\,d\delta
\lesssim
\sqrt{
\frac{S r}{n}
\Big[
\log n+3^L\log(WB)
\Big]
},
\]
which is further bounded by
\[
\sqrt{
\frac{S\,3^L r}{n}\log(BWn)
}.
\]
This is exactly the sub-root function \(\phi\) stated in
Theorem~\ref{thm: generalized error}:
\[
\phi(r):=
C_0\left[
\frac{1}{n}
+
\sqrt{
\frac{S\,3^L r}{n}\log(BWn)
}
\right].
\]
This function is sub-root: it is nonnegative, nondecreasing, and
\(\phi(r)/\sqrt r\) is nonincreasing on \(r>0\). Consequently, its critical
radius satisfies
\[
r^\star
\lesssim
\frac1n+
\frac{S\,3^L\log(BWn)}{n}.
\]
In particular, when \(L=\mathcal{O}(1)\), \(S=\mathcal{O}(N)\), and \(B,W=\mathcal{O}(N)\), we obtain
\[
r^\star
\lesssim
\frac{N(\log N+\log n)}{n}.
\]

\proofstep{Step 2: Peeling and the normalized empirical process.}
The local Rademacher bound in Step 1 controls the class only on each localized shell
\[
\mathcal G_s
=
\left\{
\ell_f:
J_\lambda(f)-J_\lambda(f_\lambda)
+
J_\lambda(f_0)-J_\lambda(f_\lambda)
\le s
\right\}.
\]
Here and below, \(\ell_f\) denotes the shifted loss defined in
\eqref{eq:shifted-excess-loss}.
To turn these shell-wise bounds into a uniform bound over the whole class
\(\mathcal F\), we introduce the normalized class
\[
\overline{\mathcal G}_r:=
\left\{
\widehat{\ell}_f:
\widehat{\ell}_f(x,y)
=
\frac{\ell_f(x,y)}
{J_\lambda(f)-J_\lambda(f_\lambda)
+
J_\lambda(f_0)-J_\lambda(f_\lambda)+r},
\ f\in\mathcal F
\right\}.
\]
The numerator uses the same shifted loss \(\ell_f\) from
\eqref{eq:shifted-excess-loss}.
The denominator is positive because both excess risks are nonnegative and \(r>0\).
The role of the denominator is to rescale each function according to its own
local excess radius. Thus, functions lying farther from the population
minimizer are placed in larger shells and are normalized more strongly.

Applying the Peeling Lemma (Lemma A.7 in \cite{lu2021machine}), with the
localization functional
\[
f\mapsto
J_\lambda(f)-J_\lambda(f_\lambda)
+
J_\lambda(f_0)-J_\lambda(f_\lambda),
\]
and using the sub-root bound
\[
\mathfrak R_n(\mathcal G_s;\mathcal D)\le \phi(s),
\]
we obtain
\[
\mathfrak R_n(\overline{\mathcal G}_r;\mathcal D)
\le
\frac{4\phi(r)}{r}.
\]
In words, peeling converts the local Rademacher bounds on the shells
\[
\left\{
J_\lambda(f)-J_\lambda(f_\lambda)
+
J_\lambda(f_0)-J_\lambda(f_\lambda)
\le s
\right\}
\]
into a single Rademacher bound for the normalized global class.

We now pass from the normalized loss class to the centered normalized empirical
process. Define
\[
\widetilde{\mathcal G}_r:=
\left\{
\widetilde{\ell}_f:\Omega\times\{0,1\}\to\mathbb R
\ \middle|\
\widetilde{\ell}_f(x,y)
=
\frac{\overline P\ell_f-\ell_f(x,y)}
{J_\lambda(f)-J_\lambda(f_\lambda)
+
J_\lambda(f_0)-J_\lambda(f_\lambda)+r},
\ f\in\mathcal F
\right\}.
\]
This is the centered version of the normalized shifted loss in
\eqref{eq:shifted-excess-loss}.
Let \(\{X_i^{p\prime}\}_{i=1}^n\) and
\(\{X_i^{q\prime}\}_{i=1}^n\) be auxiliary independent copies, independent of
the data, with \(X_i^{p\prime}\sim p\) and \(X_i^{q\prime}\sim q\).
By the Symmetrization Lemma (Lemma A.3 in \cite{lu2021machine})
 applied separately to the \(p\)-sample
and \(q\)-sample blocks and then summed,
\[
\begin{aligned}
\sup_{\widetilde g\in\widetilde{\mathcal G}_r}
\mathbb E
\left[
\frac1{2n}\sum_{i=1}^{n}
\widetilde g(X_i^{p\prime},0)
+
\frac1{2n}\sum_{i=1}^{n}
\widetilde g(X_i^{q\prime},1)
\right]
&\le
2\mathfrak R_n(\overline{\mathcal G}_r;\mathcal D)  \\
&\le
\frac{8\phi(r)}{r}.
\end{aligned}
\]

Therefore,
\begin{equation}
\label{eq:shifted-expectation-bound}
\sup_{\widetilde g\in\widetilde{\mathcal G}_r}
\mathbb E
\left[
\frac1{2n}\sum_{i=1}^{n}
\widetilde g(X_i^{p\prime},0)
+
\frac1{2n}\sum_{i=1}^{n}
\widetilde g(X_i^{q\prime},1)
\right]
\le
\frac{8\phi(r)}{r}.
\end{equation}

\proofstep{Step 3: Verifying the Talagrand conditions.}
For any \(f\in\mathcal F\), since both \(f\) and \(f_0\) lie in the clipped
and gradient-bounded sieve,
\[
\|f\|_\infty,\|f_0\|_\infty,
\|\nabla f\|_\infty,\|\nabla f_0\|_\infty
\le M.
\]
Hence, for every \((x,y)\),
\[
\begin{aligned}
|\ell_f(x,y)|
&\le
|\ell_{\rm CE}(y,f(x))|
+
|\ell_{\rm CE}(y,f_0(x))|
+
\lambda
\left(
\|\nabla f(x)\|_2^2+\|\nabla f_0(x)\|_2^2
\right)\\
&\le
2(\log(1+e^M)+M)+2\lambda M^2\\
&\le
2(\log(1+e^M)+M)+2M^2
=:M_\infty.
\end{aligned}
\]
Therefore
\begin{equation}
\label{eq:shifted-infty-bound}
\|\widetilde{\ell}_f\|_\infty
\le
\frac{2M_\infty}{r}
=:\beta_{\rm Tal}.
\end{equation}

We next bound the second moment. From the Lipschitz estimate in Step 1,
\[
|\ell_f(x,y)|
\le
|f(x)-f_0(x)|
+
2M\lambda
\|\nabla f(x)-\nabla f_0(x)\|_2.
\]
Thus
\[
\overline P\ell_f^2
\le
C_0
\left(
\|f-f_0\|_{L^2(\mu)}^2
+
\|\lambda\nabla(f-f_0)\|_{L^2(\mu)}^2
\right).
\]
Using the strong-convexity bounds from Step 1,
\[
\overline P\ell_f^2
\le
C_0
\left(
J_\lambda(f)-J_\lambda(f_\lambda)
+
J_\lambda(f_0)-J_\lambda(f_\lambda)
\right).
\]
Consequently,
\begin{eqnarray*}
    \overline P\widetilde{\ell}_f^2
&=&
\frac{\overline P[(\ell_f-\overline P\ell_f)^2]}
{\left(
J_\lambda(f)-J_\lambda(f_\lambda)
+
J_\lambda(f_0)-J_\lambda(f_\lambda)+r
\right)^2}\\
&\le&
\frac{\overline P\ell_f^2}
{\left(
J_\lambda(f)-J_\lambda(f_\lambda)
+
J_\lambda(f_0)-J_\lambda(f_\lambda)+r
\right)^2}
\le
\frac{C_0}{r}
=:\sigma_{\rm Tal}^2.
\end{eqnarray*}
Moreover,
\[
\overline P\widetilde{\ell}_f=0.
\]

\proofstep{Step 4: Talagrand concentration and choosing the radius.}
Apply Talagrand's inequality to the independent product variables
\((X_i^p,X_i^q)\), with the normalized centered function
\[
(x^p,x^q)
\mapsto
\frac12\widetilde g(x^p,0)+\frac12\widetilde g(x^q,1).
\]
With probability at least \(1-e^{-t}\),
\[
\begin{aligned}
\sup_{\widetilde g\in\widetilde{\mathcal G}_r}
\overline P_n\widetilde g
&\le
2
\sup_{\widetilde g\in\widetilde{\mathcal G}_r}
\mathbb E
\left[
\frac1{2n}\sum_{i=1}^{n}\widetilde g(X_i^{p\prime},0)
+
\frac1{2n}\sum_{i=1}^{n}\widetilde g(X_i^{q\prime},1)
\right]
+
\sqrt{\frac{2t\sigma_{\rm Tal}^2}{n}}
+
\frac{2t\beta_{\rm Tal}}{n}\\
&\le
\frac{16\phi(r)}{r}
+
C_0\sqrt{\frac{t}{nr}}
+
\frac{C_0t}{nr}
=:\psi(r).
\end{aligned}
\]
Choose
\[
r_0:=
C_0'
\max\left\{
r^\star,\frac{t}{n}
\right\},
\]

where \(C_0'\) is chosen so that
\[
C_0'
\ge
\max\left\{128^2,\,36C_0^2,\,6C_0\right\}.
\]
Since \(\phi\) is sub-root and
\(r^\star\) is its critical radius, \(r\mapsto \phi(r)/\sqrt r\) is nonincreasing and
\(\phi(r^\star)=r^\star\). Hence \(r_0\ge C_0'r^\star\) implies
\[
\frac{\phi(r_0)}{r_0}
\le
\left(\frac{r^\star}{r_0}\right)^{1/2}
\le
\frac{1}{\sqrt{C_0'}}.
\]
Thus the first term satisfies
\[
\frac{16\phi(r_0)}{r_0}\le\frac18.
\]
Moreover, \(r_0\ge C_0't/n\), and therefore
\[
{C_0}\sqrt{\frac{t}{nr_0}}\le\frac16,
\qquad
\frac{{C_0}t}{nr_0}\le\frac16,
\]
where the two inequalities follow respectively from
\(C_0'\ge36C_0^2\) and
\(C_0'\ge6C_0\).

Therefore,
\[
\psi(r_0)
\le
\frac18+\frac16+\frac16
<
\frac12.
\]

\proofstep{Step 5: Concluding the shifted oracle bound.}
Pick \(r=r_0\). On the event above, for every \(f\in\mathcal F\),
\[
\frac{
\left[
J_\lambda(f)-\widehat J_{\lambda,\mathcal D}(f)
\right]
-
\left[
J_\lambda(f_0)-\widehat J_{\lambda,\mathcal D}(f_0)
\right]
}{
J_\lambda(f)-J_\lambda(f_\lambda)
+
J_\lambda(f_0)-J_\lambda(f_\lambda)+r_0
}
=
\overline P_n\widetilde{\ell}_f
\le
\frac12,
\]
which implies
\[
\left[
J_\lambda(f)-\widehat J_{\lambda,\mathcal D}(f)
\right]
-
\left[
J_\lambda(f_0)-\widehat J_{\lambda,\mathcal D}(f_0)
\right]
\le
\frac12\left[J_\lambda(f)-J_\lambda(f_\lambda)\right]
+
\frac12\left[J_\lambda(f_0)-J_\lambda(f_\lambda)\right]
+
\frac12r_0.
\]

Now take \(f=\widehat f_{\lambda,\mathcal F}\). By empirical optimality,
\[
\widehat J_{\lambda,\mathcal D}(\widehat f_{\lambda,\mathcal F})
\le
\widehat J_{\lambda,\mathcal D}(f_0).
\]
Therefore
\[
\begin{aligned}
J_\lambda(\widehat f_{\lambda,\mathcal F})
-
J_\lambda(f_\lambda)
&\le
\left[
J_\lambda(\widehat f_{\lambda,\mathcal F})
-
\widehat J_{\lambda,\mathcal D}(\widehat f_{\lambda,\mathcal F})
\right]
-
\left[
J_\lambda(f_0)
-
\widehat J_{\lambda,\mathcal D}(f_0)
\right]
+
J_\lambda(f_0)-J_\lambda(f_\lambda)\\
&\le
\frac12\left[
J_\lambda(\widehat f_{\lambda,\mathcal F})-J_\lambda(f_\lambda)
\right]
+
\frac32\left[
J_\lambda(f_0)-J_\lambda(f_\lambda)
\right]
+
\frac12r_0.
\end{aligned}
\]
Rearranging gives
\[
J_\lambda(\widehat f_{\lambda,\mathcal F})-J_\lambda(f_\lambda)
\le
3\left[J_\lambda(f_0)-J_\lambda(f_\lambda)\right]+r_0.
\]
Since \(r_0\lesssim \max\{r^\star,t/n\}\), we conclude that
\[
J_\lambda(\widehat f_{\lambda,\mathcal F})
-
J_\lambda(f_\lambda)
\lesssim
J_\lambda(f_0)-J_\lambda(f_\lambda)+r^\star+\frac{t}{n}.
\]
This proves the theorem.
\end{proof}
\subsection{Proof of Theorem \ref{thm:classification-oracle}}\label{sec: proof of upper bound}
\begin{proof}
Let
\[
\mathcal F:=\mathcal F_M(L_n,W_n,S_n,B_n).
\]
By Proposition~\ref{prop: approximate of clipped NN}, there exists
\(\bar f_N\in\mathcal F\) and a constant \(C_1\), independent of
\(n,N,\lambda\), such that
\begin{equation}
\label{eq:fbarN-approx-fstar}
\|\bar f_N-f^\star\|_{H^1(\mu)}^2
\le
C_1
N^{-\frac{2(s-1)}{d}}.
\end{equation}
Define
\[
\delta_N:=N^{-\frac{s-1}{d}}.
\]
Then
\[
\|\bar f_N-f^\star\|_{L^2(\mu)}
\le C_1^{1/2}\delta_N,
\qquad
\|\nabla(\bar f_N-f^\star)\|_{L^2(\mu)}
\le C_1^{1/2}\delta_N.
\]

We first bound the comparator excess \(J_\lambda(\bar f_N)-J_\lambda(f_\lambda)\).
Since \(f_\lambda\) minimizes \(J_\lambda\), this quantity is nonnegative.
Moreover,
\[
J_\lambda(\bar f_N)-J_\lambda(f_\lambda)
\le
J_\lambda(\bar f_N)-J_\lambda(f^\star)
+
J_\lambda(f^\star)-J_\lambda(f_\lambda).
\]

\proofstep{Step 1: Bound \(J_\lambda(\bar f_N)-J_\lambda(f^\star)\).}
Since \(f^\star\) minimizes the population cross-entropy risk and
\[
\partial_u^2 \ell_{\rm CE}(y,u)
=
\sigma(u)(1-\sigma(u))
\le \frac14,
\]
we have
\[
L_{\rm CE}(\bar f_N)-L_{\rm CE}(f^\star)
\le
\frac18\|\bar f_N-f^\star\|_{L^2(\mu)}^2
\le
\frac18 C_1\delta_N^2.
\]
For the Sobolev penalty, using
\[
\|\nabla \bar f_N\|_{L^\infty(\mu)}\le M,
\qquad
\|\nabla f^\star\|_{L^\infty(\mu)}\le M,
\]
we obtain
\[
\begin{aligned}
\lambda
\left|
\|\nabla \bar f_N\|_{L^2(\mu)}^2
-
\|\nabla f^\star\|_{L^2(\mu)}^2
\right|
&\le
\lambda
\int
\left(
\|\nabla \bar f_N\|_2+\|\nabla f^\star\|_2
\right)
\|\nabla(\bar f_N-f^\star)\|_2\,d\mu\\
&\le
2M\lambda
\|\nabla(\bar f_N-f^\star)\|_{L^2(\mu)}\\
&\le
2M C_1^{1/2}\lambda\delta_N.
\end{aligned}
\]
Therefore
\begin{equation}
\label{eq:J-fbarN-minus-fstar}
J_\lambda(\bar f_N)-J_\lambda(f^\star)
\le
\frac18 C_1\delta_N^2
+
2M C_1^{1/2}\lambda\delta_N.
\end{equation}

\proofstep{Step 2: Bound \(J_\lambda(f^\star)-J_\lambda(f_\lambda)\).}
Since \(f^\star\) minimizes \(L_{\rm CE}\),
\[
L_{\rm CE}(f^\star)-L_{\rm CE}(f_\lambda)\le 0.
\]
Thus
\[
J_\lambda(f^\star)-J_\lambda(f_\lambda)
\le
\lambda
\left(
\|\nabla f^\star\|_{L^2(\mu)}^2
-
\|\nabla f_\lambda\|_{L^2(\mu)}^2
\right).
\]
Then
\[
\|\nabla f^\star\|_{L^2(\mu)}^2
-
\|\nabla f_\lambda\|_{L^2(\mu)}^2
=
-2\langle\nabla f^\star,\nabla(f_\lambda-f^\star)\rangle_{L^2(\mu)}
-
\|\nabla(f_\lambda-f^\star)\|_{L^2(\mu)}^2.
\]
Hence
\[
J_\lambda(f^\star)-J_\lambda(f_\lambda)
\le
2\lambda
\left|
\langle\nabla f^\star,\nabla(f_\lambda-f^\star)\rangle_{L^2(\mu)}
\right|.
\]
Using the same Green identity as in Lemma~\ref{lem:bias},
\[
\left|
\langle\nabla f^\star,\nabla(f_\lambda-f^\star)\rangle_{L^2(\mu)}
\right|
=
\left|
\langle \mathcal K f^\star,f_\lambda-f^\star\rangle_{L^2(\mu)}
\right|
\le
\|\mathcal K f^\star\|_{L^2(\mu)}
\|f_\lambda-f^\star\|_{L^2(\mu)}.
\]
By Lemma~\ref{lem:bias},
\[
\|f_\lambda-f^\star\|_{L^2(\mu)}
\le
\sqrt{\beta}\lambda.
\]
Therefore
\begin{equation}
\label{eq:J-fstar-minus-flambda}
J_\lambda(f^\star)-J_\lambda(f_\lambda)
\le
2\|\mathcal K f^\star\|_{L^2(\mu)}\sqrt{\beta}\,\lambda^2.
\end{equation}

Combining \eqref{eq:J-fbarN-minus-fstar} and
\eqref{eq:J-fstar-minus-flambda}, we obtain
\begin{equation}
\label{eq:comparator-excess-bound}
J_\lambda(\bar f_N)-J_\lambda(f_\lambda)
\le
\frac18 C_1\delta_N^2
+
2M C_1^{1/2}\lambda\delta_N
+
2\|\mathcal K f^\star\|_{L^2(\mu)}\sqrt{\beta}\lambda^2.
\end{equation}

\proofstep{Step 3: Apply the shifted oracle inequality.}
Apply Theorem~\ref{thm: generalized error} with comparator \(\bar f_N\)
and \(t=2\log n\). There exists a constant \(C_2\), independent of
\(n,N,\lambda\), such that with probability at least \(1-n^{-2}\),
\[
J_\lambda(\widehat f_{\lambda,\mathcal F})
-
J_\lambda(f_\lambda)
\le
C_2
\left(
J_\lambda(\bar f_N)-J_\lambda(f_\lambda)+r^\star+\frac{\log n}{n}
\right).
\]
Using \eqref{eq:comparator-excess-bound}, we get
\begin{equation}
\label{eq:excess-after-clean-oracle}
J_\lambda(\widehat f_{\lambda,\mathcal F})
-
J_\lambda(f_\lambda)
\le
C_2
\left[
\frac18 C_1\delta_N^2
+
2M C_1^{1/2}\lambda\delta_N
+
2\|\mathcal K f^\star\|_{L^2(\mu)}\sqrt{\beta}\lambda^2
+
r^\star+\frac{\log n}{n}
\right].
\end{equation}

\proofstep{Step 4: Convert excess risk to \(L^2\) and gradient errors.}
By Lemma~\ref{lem:J-strong-smooth}, for any \(f,g\in\mathcal H_M\),
\[
J_\lambda(f)-J_\lambda(g)-DJ_\lambda(g)[f-g]
\ge
\frac{c_{\min}}{2}
\|f-g\|_{L^2(\mu)}^2
+
\lambda
\|\nabla(f-g)\|_{L^2(\mu)}^2.
\]
Since \(f_\lambda\) is the population minimizer, the first-order optimality
condition gives
\[
DJ_\lambda(f_\lambda)
[
\widehat f_{\lambda,\mathcal F}-f_\lambda
]
\ge 0.
\]
Therefore
\[
\frac{c_{\min}}{2}
\|\widehat f_{\lambda,\mathcal F}-f_\lambda\|_{L^2(\mu)}^2
+
\lambda
\|\nabla(\widehat f_{\lambda,\mathcal F}-f_\lambda)\|_{L^2(\mu)}^2
\le
J_\lambda(\widehat f_{\lambda,\mathcal F})
-
J_\lambda(f_\lambda).
\]
Combining this with \eqref{eq:excess-after-clean-oracle}, we obtain
\begin{equation}
\label{eq:value-error-flambda-explicit}
\|\widehat f_{\lambda,\mathcal F}-f_\lambda\|_{L^2(\mu)}^2
\le
\frac{2C_2}{c_{\min}}
\left[
\frac18 C_1\delta_N^2
+
2M C_1^{1/2}\lambda\delta_N
+
2\|\mathcal K f^\star\|_{L^2(\mu)}\sqrt{\beta}\lambda^2
+
r^\star+\frac{\log n}{n}
\right],
\end{equation}
and
\begin{equation}
\label{eq:grad-error-flambda-explicit}
\|\nabla(\widehat f_{\lambda,\mathcal F}-f_\lambda)\|_{L^2(\mu)}^2
\le
\frac{C_2}{\lambda}
\left[
\frac18 C_1\delta_N^2
+
2M C_1^{1/2}\lambda\delta_N
+
2\|\mathcal K f^\star\|_{L^2(\mu)}\sqrt{\beta}\lambda^2
+
r^\star+\frac{\log n}{n}
\right].
\end{equation}

\proofstep{Step 5: Add the regularization bias.}
By Lemma~\ref{lem:bias},
\[
\|f_\lambda-f^\star\|_{L^2(\mu)}^2
\le
\beta\lambda^2,
\qquad
\|\nabla(f_\lambda-f^\star)\|_{L^2(\mu)}^2
\le
\beta\lambda.
\]
Using the triangle inequality,
\[
\|\widehat f_{\lambda,\mathcal F}-f^\star\|_{L^2(\mu)}^2
\le
2
\|\widehat f_{\lambda,\mathcal F}-f_\lambda\|_{L^2(\mu)}^2
+
2
\|f_\lambda-f^\star\|_{L^2(\mu)}^2,
\]
and
\[
\|\nabla(\widehat f_{\lambda,\mathcal F}-f^\star)\|_{L^2(\mu)}^2
\le
2
\|\nabla(\widehat f_{\lambda,\mathcal F}-f_\lambda)\|_{L^2(\mu)}^2
+
2
\|\nabla(f_\lambda-f^\star)\|_{L^2(\mu)}^2.
\]
Therefore
\begin{equation}
\label{eq:H1-final-before-choice-explicit}
\begin{aligned}
&\|\widehat f_{\lambda,\mathcal F}-f^\star\|_{H^1(\mu)}^2\\
&\le
\left(
\frac{4C_2}{c_{\min}}
+
\frac{2C_2}{\lambda}
\right)
\left[
\frac18 C_1\delta_N^2
+
2M C_1^{1/2}\lambda\delta_N
+
2\|\mathcal K f^\star\|_{L^2(\mu)}\sqrt{\beta}\lambda^2
+
r^\star+\frac{\log n}{n}
\right] \\
&\quad
+
2\beta\lambda^2
+
2\beta\lambda.
\end{aligned}
\end{equation}

\proofstep{Step 6: Choose \(\lambda\) and \(N\).}
The critical radius satisfies
\[
r^\star
\le
C_3
\frac{N(\log N+\log n)}{n},
\]
where \(C_3\) depends only on the fixed sieve envelope \(M\) and
fixed depth \(L=\mathcal{O}(1)\).

Choose
\[
\lambda\asymp\delta_N=N^{-\frac{s-1}{d}}.
\]
Since \(c_{\min}\), \(M\), \(\|\mathcal K f^\star\|_{L^2(\mu)}\), \(\beta\),
\(C_1\), \(C_2\), and \(C_3\) are independent of \(n\), the leading terms in
\eqref{eq:H1-final-before-choice-explicit} are bounded by
\[
C
\left[
\delta_N
+
\frac{r^\star}{\delta_N}
+
\frac{\log n}{n\delta_N}
\right].
\]
Since \(r^\star\gtrsim N/n\) up to logarithmic factors, the term
\[
\frac{\log n}{n\delta_N}
\]
is dominated by \(r^\star/\delta_N\). Hence
\[
\|\widehat f_{\lambda,\mathcal F}-f^\star\|_{H^1(\mu)}^2
\le
C
\left[
N^{-\frac{s-1}{d}}
+
\frac{
N^{1+\frac{s-1}{d}}(\log N+\log n)
}{n}
\right].
\]
Balancing the two terms gives
\[
N^{-\frac{s-1}{d}}
\asymp
\frac{N^{1+\frac{s-1}{d}}}{n},
\]
and therefore
\[
N\asymp n^{\frac{d}{d+2s-2}}.
\]
Consequently,
\[
\lambda
\asymp
N^{-\frac{s-1}{d}}
\asymp
n^{-\frac{s-1}{d+2s-2}}.
\]
Substituting this choice yields
\[
\|\widehat f_{\lambda,\mathcal F}-f^\star\|_{H^1(\mu)}^2
\le
C
n^{-\frac{s-1}{d+2s-2}}\log n.
\]
This proves the theorem.
\end{proof}

\section{Proof of Theorem~\ref{thm: minimax lower bound}: Minimax lower bound}
\label{sec:proof_minimax}
\begin{proof}
\proofstep{Step 1: Local packing on \(\mathcal C_{\rm pair}\) with enough separation.}
We construct a finite subset of \(\mathcal C_{\rm pair}\) whose log-density ratios have
non-trivial \(H^1\)-separation.

Let \(\eta:\R\to\R\) be the one-dimensional \(C^\infty\) bump
\[
  \eta(t):=
  \begin{cases}
    \exp\!\big(-\tfrac{1}{t(1-t)}\big), & t\in(0,1),\\[4pt]
    0, & t\notin(0,1),
  \end{cases}
\]
and define
\[
  \varphi(x):=\prod_{i=1}^d \eta(x_i),
  \qquad x=(x_1,\dots,x_d)\in\R^d.
\]
Then \(\varphi\in C^\infty(\R^d)\), \(\varphi\ge0\),
\(\nabla\varphi\not\equiv0\), and
\(\supp(\varphi)\subset[0,1]^d\).

For an integer \(m\ge1\), choose points
\(\{x_j\}_{j\in\{1,\dots,m\}^d}\subset\R^d\) such that the cubes
\(x_j+[0,(3m)^{-1}]^d\) are disjoint and contained in \((0,1)^d\). Define
\[
  \varphi_{m,j}(x):=\varphi\big(3m(x-x_j)\big).
\]
By the change of variables \(u=3m(x-x_j)\) and standard Sobolev scaling,
\begin{equation}\label{eq:phi-scaling}
  \|\varphi_{m,j}\|_{L^2(\mu_0)}^2 \asymp m^{-d},\qquad
  \|\nabla\varphi_{m,j}\|_{L^2(\mu_0)}^2 \asymp m^{2-d},\qquad
  \|\varphi_{m,j}\|_{H^s(\mu_0)}^2 \asymp m^{2s-d},
\end{equation}
where the implicit constants depend only on \(\varphi\) and the density of
\(\mu_0\).

By the Varshamov--Gilbert bound~\cite{takezawa2005introduction}, there exist
\(\tau^{(1)},\dots,\tau^{(N)}\in\{0,1\}^{m^d}\) such that
\[
  N\ge 2^{m^d/8},
  \qquad
  \|\tau^{(v)}-\tau^{(v')}\|_2^2\ge \frac{m^d}{8}
  \quad\text{for all }v\neq v'.
\]

Fix an amplitude \(a_m>0\). For each \(v\), choose the unique scalar \(c_v\)
such that
\[
  \int_\Omega
  \sigma\!\left(
    a_m\sum_{j\in\{1,\dots,m\}^d}\tau_j^{(v)}\varphi_{m,j}(x)-c_v
  \right)
  d\mu_0(x)
  =
  \frac12,
\]
and set
\[
  f_v(x):=
  a_m\sum_{j\in\{1,\dots,m\}^d}\tau_j^{(v)}\varphi_{m,j}(x)-c_v.
\]
The scalar \(c_v\) exists and is unique because the left-hand side is a
continuous strictly decreasing function of \(c_v\), with limits \(1\) and
\(0\) as \(c_v\to-\infty\) and \(c_v\to+\infty\). Since the bump sum is
nonnegative,
\[
  0\le c_v\le
  a_m\sup_{x\in\Omega}
  \sum_{j\in\{1,\dots,m\}^d}\tau_j^{(v)}\varphi_{m,j}(x)
  \lesssim a_m.
\]

Writing the constant density of \(\mu_0\) as \(\rho_0=|\Omega|^{-1}\), define
\[
  q_v(x):=2\sigma(f_v(x))\rho_0,
  \qquad
  p_v(x):=2(1-\sigma(f_v(x)))\rho_0.
\]
Then
\[
  \int_\Omega q_v(x)\,dx=1,\qquad
  \int_\Omega p_v(x)\,dx=1,\qquad
  \frac{p_v(x)+q_v(x)}2=\rho_0,
\]
and
\[
  \log\frac{q_v(x)}{p_v(x)}=f_v(x).
\]

We next verify that \((p_v,q_v)\in\mathcal C_{\rm pair}\). Since the supports of
\(\varphi_{m,j}\) are disjoint,
\begin{align}
  \sup_{x\in\Omega}|f_v(x)|
  &\lesssim a_m, \label{eq:Linf-fk}\\
  \|f_v\|_{H^s(\mu_0)}^2
  &\lesssim
  a_m^2\sum_{j\in\{1,\dots,m\}^d}
  \|\varphi_{m,j}\|_{H^s(\mu_0)}^2+c_v^2
  \lesssim a_m^2m^{2s}+a_m^2. \label{eq:Hs-fk}
\end{align}
Choosing \(a_m\asymp m^{-s}\) with a sufficiently small implicit constant
ensures that \(\|f_v\|_{H^s(\mu_0)}<\infty\) and
\(\sup_{x\in\Omega}|f_v(x)|\le M\). Since \(f_v\) is bounded, smooth, and
constant in a neighborhood of \(\partial\Omega\), it satisfies condition 3 in
Assumption~\ref{assm: density}. Moreover, \(p_v\) and \(q_v\) are strictly
positive and \(C^2\), and \(\partial_n f_v=0\) on \(\partial\Omega\). Since
\((p_v+q_v)/2=\rho_0\), the Neumann condition also holds.

For \(v\neq v'\), disjoint supports and \eqref{eq:phi-scaling} give
\begin{align*}
  \|\nabla f_v-\nabla f_{v'}\|_{L^2(\mu_0)}^2
  &=
  a_m^2
  \sum_{j\in\{1,\dots,m\}^d}
  \big(\tau_j^{(v)}-\tau_j^{(v')}\big)^2
  \|\nabla\varphi_{m,j}\|_{L^2(\mu_0)}^2 \\
  &\gtrsim
  a_m^2 m^d m^{2-d}
  \asymp m^{-2(s-1)}.
\end{align*}
Thus, for some constant \(c>0\),
\begin{equation}\label{eq:H1-separation-correct}
  \|f_v-f_{v'}\|_{H^1(\mu_0)}^2
  \ge c\,m^{-2(s-1)},
  \qquad v\neq v'.
\end{equation}

\proofstep{Step 2: KL upper bound for the fixed source--target observation law.}
Under the fixed source--target design, candidate \(v\) induces
\(p_v^n\otimes q_v^n\).

For two candidates \(v\) and \(v'\),
\[
\begin{aligned}
  \KL(p_v^n\otimes q_v^n\|p_{v'}^n\otimes q_{v'}^n)
  &=
  n\KL(p_v\|p_{v'})
  +
  n\KL(q_v\|q_{v'})\\
  &=
  2n\int_\Omega
  \KL\!\left(
    {\rm Bern}(\sigma(f_v(x)))
    \,\middle\|\,
    {\rm Bern}(\sigma(f_{v'}(x)))
  \right)d\mu_0(x).
\end{aligned}
\]
For each fixed \(x\),
\[
\begin{aligned}
&\KL\!\left(
    {\rm Bern}(\sigma(f_v(x)))
    \,\middle\|\,
    {\rm Bern}(\sigma(f_{v'}(x)))
  \right)\\
&\quad =
\E_{Y\sim{\rm Bern}(\sigma(f_v(x)))}
\!\left[
  \ell_{\rm CE}(Y,f_{v'}(x))-\ell_{\rm CE}(Y,f_v(x))
\right].
\end{aligned}
\]
The derivative of
\(u\mapsto
\E_{Y\sim{\rm Bern}(\sigma(f_v(x)))}[\ell_{\rm CE}(Y,u)]\)
vanishes at \(u=f_v(x)\), and its second derivative is
\(\sigma(u)(1-\sigma(u))\). Since \(\|f_v\|_\infty\le M\) uniformly in \(v\),
Taylor's theorem gives
\begin{equation}\label{eq:KL-upper}
  \KL(p_v^n\otimes q_v^n\|p_{v'}^n\otimes q_{v'}^n)
  \lesssim
  n\,\|f_v-f_{v'}\|_{L^2(\mu_0)}^2.
\end{equation}
Moreover,
\begin{equation}\label{eq:L2-bound-lower-packing}
\begin{aligned}
  \|f_v-f_{v'}\|_{L^2(\mu_0)}^2
  &\lesssim
  a_m^2
  \Big(
    \sum_{j\in\{1,\dots,m\}^d}
    |\tau_j^{(v)}-\tau_j^{(v')}|^2
  \Big)m^{-d}
  +
  |c_v-c_{v'}|^2\\
  &\lesssim a_m^2
  \asymp m^{-2s}.
\end{aligned}
\end{equation}
Combining \eqref{eq:KL-upper} and \eqref{eq:L2-bound-lower-packing},
\begin{equation}\label{eq:kl-final-bound}
  \KL(p_v^n\otimes q_v^n\|p_{v'}^n\otimes q_{v'}^n)
  \lesssim n\,m^{-2s}.
\end{equation}

\proofstep{Step 3: Local Fano reduction and minimax lower bound.}
Let \(V\) be uniformly distributed on \(\{1,\dots,N\}\). Conditional on
\(V=v\), draw
\(\{X_i^p\}_{i=1}^n\stackrel{\rm i.i.d.}{\sim}p_v\) and
\(\{X_i^q\}_{i=1}^n\stackrel{\rm i.i.d.}{\sim}q_v\), independently. Any
estimator of the form
\[
\widehat f
:=
\psi\!\left(
\{X_i^p\}_{i=1}^n,
\{X_i^q\}_{i=1}^n
\right)
\]
induces the testing rule
\[
  \widehat V:=
  \argmin_{1\le v\le N}
  \big\|
    \widehat f-f_v
  \big\|_{H^1(\mu_0)}.
\]
By \eqref{eq:H1-separation-correct}, if \(\widehat V\neq V\), then
\[
  \|\widehat f-f_V\|_{H^1(\mu_0)}
  \ge
  \frac12\|f_V-f_{\widehat V}\|_{H^1(\mu_0)}
  \gtrsim
  m^{-(s-1)}.
\]
Therefore
\begin{equation}\label{eq:risk-vs-error}
  \sup_{1\le v\le N}
  \E_{p_v^n\otimes q_v^n}
  \big[\|\widehat f-f_v\|_{H^1(\mu_0)}^2\big]
  \gtrsim
  m^{-2(s-1)}\PP(\widehat V\neq V).
\end{equation}

It remains to lower bound \(\PP(\widehat V\neq V)\). Choose
\(m\asymp n^{1/(2s+d)}\), with a sufficiently large implicit constant. Then
\eqref{eq:kl-final-bound} gives
\[
  \KL(p_v^n\otimes q_v^n\|p_{v'}^n\otimes q_{v'}^n)
  \lesssim m^d.
\]
Since \(\log N\asymp m^d\), the local Fano inequality
\cite{takezawa2005introduction} gives
\[
\begin{aligned}
  \PP(\widehat V\neq V)
  &\ge
  1-
  \frac{
    I(V;\{X_i^p\}_{i=1}^n,\{X_i^q\}_{i=1}^n)+\log 2
  }{\log N}\\
  &\ge
  1-
  \frac{
    N^{-2}\sum_{v,v'}
    \KL(p_v^n\otimes q_v^n\|p_{v'}^n\otimes q_{v'}^n)
    +\log 2
  }{\log N}
  \ge \frac12,
\end{aligned}
\]
where the last inequality holds by taking the implicit constant in
\(a_m\asymp m^{-s}\) sufficiently small. Combining this bound with
\eqref{eq:risk-vs-error} and taking the infimum over all estimators \(\psi\),
\[
  \inf_{\psi}
  \sup_{1\le v\le N}
  \E_{p_v^n\otimes q_v^n}
  \big[
    \|\psi(\{X_i^p\}_{i=1}^n,\{X_i^q\}_{i=1}^n)-f_v\|_{H^1(\mu_0)}^2
  \big]
  \gtrsim
  n^{-\frac{2(s-1)}{2s+d}}.
\]
Since all \((p_v,q_v)\) belong to \(\mathcal C_{\rm pair}\), the same lower bound holds
for the minimax risk over \(\mathcal C_{\rm pair}\).
\end{proof}

\section{Proof for the extension to diffusion models}
\label{sec:proof_unbounded}
\paragraph{Appendix \ref{sec:proof_unbounded} roadmap.}
The diffusion-model proof is organized as follows. We first fix the truncated
time-space notation and the projection-rescaling map on
\([t_0,T]\times B_{2R}\). Lemma~\ref{lem:global-linear-growth} collects the
uniform VP-convolution bounds, including sub-Gaussian tails, linear growth of
\(f^\star\), and polynomial bounds on time-space derivatives. The global
envelope choice then ensures that \(f^\star\) and the pulled-back network
class fit inside the clipped Sobolev class. Lemma~\ref{lem:C2-existence-uniqueness}
proves existence, uniqueness, and clipping of the truncated population
minimizer, and Lemma~\ref{lem:C3-bias} controls its regularization bias on
the truncated cylinder. Lemma~\ref{lem:C4-rescaled-approx} gives the
\(R\)-dependent approximation comparator, Lemma~\ref{lem:C6-vector-entropy}
bounds the entropy of the pulled-back vector class, and
Lemma~\ref{lem:C3-aniso-local-complexity} gives the comparator-centered
anisotropic oracle inequality. Finally, the proof of
Theorem~\ref{thm: unbounded_convergence_nohop} combines the truncation event,
the oracle bound, the comparator gap, and the tail estimate, then chooses
\(N,\lambda,R\) to obtain the stated rate.

Throughout this appendix, \(\alpha_t\) denotes the VP coefficient, and we write
\[
\kappa:=\frac{s-1}{d+1+2s-2}.
\]
for the nonparametric rate exponent. In this section we prove Theorem~\ref{thm: unbounded_convergence_nohop}.
We work on the bounded time-space cylinder \([t_0,T]\times B_{2R}\), where
\[
B_{2R}:=\{x\in\mathbb R^d:\|x\|\le 2R\},
\]
and use a projection-rescaling argument so that the neural network is trained
on the fixed compact spatial domain \(B_1\).

\subsection{Notation}
Throughout Appendix~\ref{sec:proof_unbounded}, for each truncation radius \(R\ge1\), we use the
\(R\)-dependent clipped-gradient time-space class
\[
\mathcal F_R:=\mathcal F^{(d+1)}_{M_R}(L,W,S,B),
\qquad
M_R:=C_M(1+R),
\]
on the fixed input domain
\[
[t_0,T]\times B_1\subset\mathbb R^{d+1}.
\]
The input variable is \((t,\bar x)\), and time is treated as an ordinary
network input. By definition of \(\mathcal F_R\),
\[
\|f\|_{L^\infty([t_0,T]\times B_1)}\le M_R,
\qquad
\|\nabla_{\bar x}f\|_{L^\infty([t_0,T]\times B_1)}\le M_R,
\qquad f\in\mathcal F_R.
\]

\paragraph{Time-dependent label-augmented model and truncated population risk.}
For each \(t\in[t_0,T]\), define a joint distribution
\(\mu_{p,q,t}\) on \(\mathbb R^d\times\{0,1\}\) by
\begin{equation}
\label{eq:mu-pqt-def-unbounded}
Y\sim{\rm Bernoulli}(1/2),
\qquad
X\mid(Y=1)\sim q_t,
\qquad
X\mid(Y=0)\sim p_t.
\end{equation}
We write
\[
\eta_t(x):=\mathbb P(Y=1\mid X=x)
=
\frac{q_t(x)}{p_t(x)+q_t(x)}.
\]

Recall the pointwise \(H^1\)-loss density in \eqref{eq:norm}. For any
function \(h:[t_0,T]\times\mathbb R^d\to\mathbb R\), define the truncated
population functional on \([t_0,T]\times B_{2R}\) by
\begin{equation}
\label{eq:pop_trunc_J_2R_new}
J_{\lambda,2R}(h):=
\int_{t_0}^T
\mathbb E_{(X,Y)\sim\mu_{p,q,t}}
\Big[
\mathbf 1\{X\in B_{2R}\}
\Big(
\ell_{\rm CE}(Y,h(t,X))+\lambda\|\nabla_x h(t,X)\|^2
\Big)
\Big]\,dt.
\end{equation}

\paragraph{Projection-rescaling and pulled-back class.}
Recall that the projected-rescaled input is
\(\bar x=\operatorname{Proj}_{2R}(x)/(2R)\in B_1\).  For a function
\(f:[t_0,T]\times B_1\to\mathbb R\), define its pull-back to
\([t_0,T]\times B_{2R}\) by
\begin{equation}
\label{eq:pullback-operator}
(\pi_R f)(t,x):=
f\!\left(t,\frac{x}{2R}\right),
\qquad (t,x)\in [t_0,T]\times B_{2R}.
\end{equation}
For a function \(h:[t_0,T]\times B_{2R}\to\mathbb R\), define
\[
(\pi_R^{-1}h)(t,\bar x):=h(t,2R\bar x),
\qquad (t,\bar x)\in [t_0,T]\times B_1.
\]

Then
\[
\nabla_x(\pi_R f)(t,x)
=
\frac{1}{2R}
\nabla_{\bar x}f\!\left(t,\frac{x}{2R}\right).
\]
Similarly, for \((t,x)\in [t_0,T]\times B_{2R}\),
\[
f^\star(t,x)
=
(\pi_R^{-1}f^\star)\!\left(t,\frac{x}{2R}\right),
\qquad
\nabla_x f^\star(t,x)
=
\frac{1}{2R}
\nabla_{\bar x}(\pi_R^{-1}f^\star)\!\left(t,\frac{x}{2R}\right).
\]

Define the pulled-back network class
\[
\mathcal F_{2R}:=
\{\pi_R f:f\in\mathcal F_R\}.
\]
By the envelope definition of \(\mathcal F_R\), every
\(h=\pi_R f\in\mathcal F_{2R}\) satisfies
\[
\|h\|_{L^\infty([t_0,T]\times B_{2R})}\le M_R,
\qquad
\|\nabla_x h\|_{L^\infty([t_0,T]\times B_{2R})}
\le
\frac{M_R}{2R}
\le C_M,
\qquad R\ge1.
\]

\paragraph{Norms.}

On \([t_0,T]\times B_{2R}\), unless otherwise stated, we use
the \(\mu_t\)-weighted spatial energy norm
\begin{equation}
\label{eq:weighted-energy-norm-2R}
\|g\|_{L^2([t_0,T];H^1(B_{2R},\mu_t))}^2:=
\int_{t_0}^T\int_{B_{2R}}
\Big(|g(t,x)|^2+\|\nabla_xg(t,x)\|^2\Big)\,d\mu_t(x)\,dt.
\end{equation}
Here \(H^1(B_{2R},\mu_t)\) is only in the spatial variable \(x\).

We also use its \(L^2\)-part
\[
\|g\|_{L^2([t_0,T]\times B_{2R},\mu_t)}^2:=
\int_{t_0}^T\int_{B_{2R}} |g(t,x)|^2\,d\mu_t(x)\,dt.
\]
Since \(p_t\) and \(q_t\) are Gaussian convolutions,
\(\rho_t=(p_t+q_t)/2\) is smooth and strictly positive on the compact
cylinder \([t_0,T]\times B_{2R}\); hence
\(\|\cdot\|_{L^2([t_0,T];H^1(B_{2R},\mu_t))}\) is equivalent, for
fixed \(R\), to the corresponding Lebesgue spatial energy norm on
\([t_0,T]\times B_{2R}\).
On the fixed cylinder \([t_0,T]\times B_1\), we use the Lebesgue norms
\[
\|u\|_{L^2([t_0,T];H^1(B_1))}^2:=
\int_{t_0}^T\int_{B_1}
\Big(
|u(t,\bar x)|^2+\|\nabla_{\bar x}u(t,\bar x)\|^2
\Big)\,d\bar x\,dt,
\]
and, for integer \(s\ge1\),
\[
\|u\|_{H^s([t_0,T]\times B_1)}^2:=
\sum_{a+|\gamma|\le s}
\int_{t_0}^T\int_{B_1}
\big|
\partial_t^aD_{\bar x}^{\gamma}u(t,\bar x)
\big|^2\,d\bar x\,dt.
\]
Here \(L^2([t_0,T];H^1(B_1))\) is spatial Sobolev only in
\(\bar x\), while \(H^s([t_0,T]\times B_1)\) is the full
time-space Sobolev norm of order \(s\) on the same fixed cylinder.

\paragraph{Bounded Sobolev classes on truncated cylinders.}
For \(R>0\) and \(M>0\), define the bounded Sobolev class on \([t_0,T]\times B_{2R}\) by
\begin{eqnarray}
\label{eq:Htilde-M-2R-def}
\widetilde{\mathcal H}_{M,2R}&:=&
\Big\{
h:[t_0,T]\times B_{2R}\to\mathbb R
\ \Big|\
\|h\|_{L^2([t_0,T];H^1(B_{2R},\mu_t))}<\infty,\nonumber\\
&& \qquad \ |h(t,x)|\le M\ \text{for a.e. }(t,x)\in [t_0,T]\times B_{2R}
\Big\}.
\end{eqnarray}

\subsection{Uniform bounds on the pulled-back class and auxiliary lemmas}
\label{subsec:uniform-bound}

We first collect the
 Gaussian-convolution bounds.
\begin{lemma}[Basic consequences of the regular VP assumption]
\label{lem:global-linear-growth}
Suppose Assumption~\ref{assm:unbounded_diffusion} holds.
Set
$
R_1:=\omega_0R_0.
$
Then, if \(X_0\sim p_0\) and \(Y_0\sim q_0\), the laws of
\(\alpha_tX_0\) and \(\alpha_tY_0\) are supported in \(B_{R_1}\).

Moreover, there exist constants \(C,c,K<\infty\), depending only on
\[
d,s,t_0,T,R_0,\underline\sigma,\overline\sigma,\omega_0,
\]
such that the following bounds hold uniformly over \(t\in[t_0,T]\).

First, the mixture measure has sub-Gaussian tails: for every \(u\ge0\),
\begin{equation}
\label{eq:vp_tail_bound}
\mu_t(\{x:\|x\|>u\})
\le
C\exp\{-c(u-R_1)_+^2\}.
\end{equation}
More generally, for every \(m\ge0\),
\begin{equation}
\label{eq:vp_tail_moment_bound}
\int_{\{\|x\|>u\}}(1+\|x\|)^m\,d\mu_t(x)
\le
C_m(1+u)^m\exp\{-c_m(u-R_1)_+^2\}.
\end{equation}

Second, the log-density ratio has at most linear growth:
\begin{equation}
\label{eq:vp_fstar_growth}
|f^\star(t,x)|
\le
C(1+\|x\|).
\end{equation}
Its spatial gradient is uniformly bounded:
\begin{equation}
\label{eq:vp_fstar_grad_bound}
\|\nabla_x f^\star(t,x)\|
\le
C.
\end{equation}

Finally, for every integer \(a\ge0\) and multi-index
\(b\in\mathbb N^d\) satisfying \(a+|b|\le s\)
\begin{equation}
\label{eq:vp_time_space_derivative_growth}
\big|\partial_t^aD_x^b f^\star(t,x)\big|
\le
C(1+\|x\|)^K.
\end{equation}
\end{lemma}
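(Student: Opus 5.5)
We would prove Lemma~\ref{lem:global-linear-growth} by writing everything explicitly in terms of the Gaussian convolution. Since \(X_t=\alpha_tX_0+\sigma_t\xi\), we have \(p_t(x)=\int \phi_{\sigma_t}(x-\alpha_t y)\,p_0(dy)\), where \(\phi_\sigma\) is the \(\mathcal N(0,\sigma^2I_d)\) density; the same formula holds for \(q_t\). The support claim is immediate. Indeed \(|\alpha_t|\le\omega_0\) by Assumption~\ref{assm:unbounded_diffusion}, and \(\|X_0\|\le R_0\), so \(\|\alpha_tX_0\|\le R_1\).

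For the tail bound \eqref{eq:vp_tail_bound} we argue as follows. On the event \(\|X_t\|>u\) we must have \(\sigma_t\|\xi\|>u-R_1\), and \(\sigma_t\le\overline\sigma\). The standard Gaussian norm tail then gives \(C\exp\{-c(u-R_1)_+^2\}\) with \(c\sim 1/(4\overline\sigma^2)\), the constant absorbing the \(\sqrt d\) shift. Averaging over the two mixture components gives the bound for \(\mu_t\). For the moment bound \eqref{eq:vp_tail_moment_bound}, we write \((1+\|x\|)^m\) via the layer-cake formula, or simply use Cauchy--Schwarz, \(\E[(1+\|X_t\|)^{2m}]^{1/2}\,\mu_t(\|x\|>u)^{1/2}\). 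This costs only a halving of \(c\), which is allowed because the constants are \(m\)-dependent.

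The growth bounds rest on the posterior representation. Write \(\log p_t(x)=-\|x\|^2/(2\sigma_t^2)+\log\int \exp\{\alpha_t\langle x,y\rangle/\sigma_t^2-\alpha_t^2\|y\|^2/(2\sigma_t^2)\}\,p_0(dy)+\text{const}\). The quadratic term and the normalising constants cancel in \(f^\star=\log q_t-\log p_t\). Each remaining log-integral has \(x\)-gradient \((\alpha_t/\sigma_t^2)\,\E_{\text{post}}[Y_0]\), which is bounded by \(\omega_0R_0/\underline\sigma^2\) since the posterior is supported in \(B_{R_0}\). This gives \eqref{eq:vp_fstar_grad_bound}. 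At \(x=0\) the two log-integrals are bounded by \(\omega_0^2R_0^2/(2\underline\sigma^2)\), so integrating the gradient bound along a ray yields the linear growth \eqref{eq:vp_fstar_growth}.

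The mixed derivatives \eqref{eq:vp_time_space_derivative_growth} are the main obstacle, because the time derivatives hit \(\alpha_t,\sigma_t\) inside the exponent. Our plan is to set \(\Lambda(t,x):=\log\int e^{\Phi(t,x,y)}p_0(dy)\) with \(\Phi=\alpha_t\langle x,y\rangle/\sigma_t^2-\alpha_t^2\|y\|^2/(2\sigma_t^2)\). Any mixed derivative \(\partial_t^aD_x^b\Lambda\) is, by the Faà di Bruno / cumulant formula, a polynomial in posterior cumulants of derivatives of \(\Phi\) of total order at most \(s\). Each \(\partial_t^{a'}D_x^{b'}\Phi\) is a polynomial in \((x,y)\) of degree at most one in \(x\). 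Its coefficients involve derivatives of \(\alpha_t\) and \(\sigma_t^{-2}\) up to order \(s\), and these are bounded by \(\omega_0\) and \(\underline\sigma\) together with the \(C^s\) assumption. Since \(y\in B_{R_0}\), each such derivative is bounded by \(C(1+\|x\|)\). Posterior moments of products of at most \(s\) such factors are therefore \(\lesssim(1+\|x\|)^s\), which gives \(K=s\). This must be done separately for the \(p_0\) and \(q_0\) integrals before taking the difference. The bookkeeping of the combinatorial formula is routine; the only care point is checking that all constants depend only on the listed parameters, uniformly over \(t\in[t_0,T]\).
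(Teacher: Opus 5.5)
Your proposal is correct and follows essentially the same route as the paper. Both arguments cancel the common Gaussian factor so that $f^\star=\log I_q-\log I_p$, bound the spatial gradient by $(\alpha_t/\sigma_t^2)$ times a posterior mean supported in $B_{R_0}$, and control the mixed derivatives through Fa\`a di Bruno using that every derivative of $\Phi$ grows at most linearly in $\|x\|$. The differences are cosmetic: you use Cauchy--Schwarz instead of integration by parts for the tail moments, integrate the gradient bound from $x=0$ for the linear growth, and apply a one-step cumulant expansion of $\log I$ where the paper applies Fa\`a di Bruno to $I$ and then the chain rule for $\log$.
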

\begin{proof}
The support statement follows directly from the VP representation. If
\(X_0\sim p_0\) and \(Y_0\sim q_0\), then
\(X_0,Y_0\in B_{R_0}\) almost surely. Since
\[
|\alpha_t|\le \omega_0,
\]
we have
\[
\|\alpha_tX_0\|\le \omega_0R_0,
\qquad
\|\alpha_tY_0\|\le \omega_0R_0
\quad\text{a.s.},
\]
so both laws are supported in
$
B_{\omega_0R_0}
=
B_{R_1}.
$

We first prove the tail bounds. If \(Z_0\) is supported in
\(B_{R_0}\), then
\[
Z_t=\alpha_tZ_0+\sigma_t\xi
\]
satisfies
\[
\|Z_t\|
\le
R_1+\overline\sigma\|\xi\|.
\]
Therefore
\[
\mathbb P(\|Z_t\|>u)
\le
\mathbb P\left(
\|\xi\|>\frac{(u-R_1)_+}{\overline\sigma}
\right)
\le
C\exp\{-c(u-R_1)_+^2\}.
\]
This holds for both \(p_t\) and \(q_t\), hence for their mixture \(\mu_t\).
The polynomial tail-moment bound
\eqref{eq:vp_tail_moment_bound} follows by integration by parts. Indeed, for
any nonnegative random variable \(W\),
\[
\mathbb E\big[(1+W)^m\mathbf 1\{W>u\}\big]
\le
(1+u)^m\mathbb P(W>u)
+
 m\int_u^\infty (1+r)^{m-1}\mathbb P(W>r)\,dr,
\]
and applying the preceding sub-Gaussian tail bound gives
\eqref{eq:vp_tail_moment_bound}, after adjusting constants.

Next we study the density ratio. Since \(p_t\) is
the law of \(\alpha_tX_0+\sigma_t\xi\) with \(X_0\sim p_0\) and
\(\xi\sim\mathcal N(0,I)\), we have the Gaussian mixture representation
\[
p_t(x)
=
\int
(2\pi\sigma_t^2)^{-d/2}
\exp\left(
-\frac{\|x-\alpha_t z\|^2}{2\sigma_t^2}
\right)
\,dp_0(z),
\]
and similarly \(q_t\) is represented by replacing \(p_0\) with \(q_0\).
Expanding the square gives
\[
p_t(x)
=
(2\pi\sigma_t^2)^{-d/2}
\exp\left(-\frac{\|x\|^2}{2\sigma_t^2}\right)
I_p(t,x),
\]
where
\[
I_p(t,x):=
\int
\exp\left(
\frac{\alpha_t\langle x,z\rangle}{\sigma_t^2}
-
\frac{\alpha_t^2\|z\|^2}{2\sigma_t^2}
\right)\,dp_0(z).
\]
Similarly,
\[
q_t(x)
=
(2\pi\sigma_t^2)^{-d/2}
\exp\left(-\frac{\|x\|^2}{2\sigma_t^2}\right)
I_q(t,x),
\]
where \(I_q\) is defined with \(q_0\). Hence the common Gaussian factor
cancels in the log-density ratio:
\[
f^\star(t,x)
=
\log I_q(t,x)-\log I_p(t,x).
\]

Since \(z\in B_{R_0}\), \(\alpha_t\) is bounded, and
\(\sigma_t\ge\underline\sigma\), the exponent inside \(I_p\) is bounded above
and below by affine functions of \(\|x\|\). Therefore
\[
|\log I_p(t,x)|\le C(1+\|x\|),
\qquad
|\log I_q(t,x)|\le C(1+\|x\|),
\]
which proves \eqref{eq:vp_fstar_growth}.

For the gradient, differentiating under the integral gives
\[
\nabla_x\log I_p(t,x)
=
\frac{\alpha_t}{\sigma_t^2}
\frac{
\int
z
\exp\left(
\frac{\alpha_t\langle x,z\rangle}{\sigma_t^2}
-
\frac{\alpha_t^2\|z\|^2}{2\sigma_t^2}
\right)\,dp_0(z)
}{
I_p(t,x)
}.
\]
Since the weight in the numerator is nonnegative and \(p_0\) is supported on
\(B_{R_0}\), the ratio above has norm at most \(R_0\). Thus
\[
\|\nabla_x\log I_p(t,x)\|\le C.
\]
The same bound holds for \(I_q\). Therefore
\[
\nabla_x f^\star(t,x)
=
\nabla_x\log I_q(t,x)-\nabla_x\log I_p(t,x)
\]
is uniformly bounded, proving \eqref{eq:vp_fstar_grad_bound}.

It remains to show \eqref{eq:vp_time_space_derivative_growth}. We prove it for \(\log I_p\);
the argument for \(\log I_q\) is identical. Define
\[
\Phi(t,x,z):=
\frac{\alpha_t\langle x,z\rangle}{\sigma_t^2}
-
\frac{\alpha_t^2\|z\|^2}{2\sigma_t^2}.
\]
For every integer \(a\ge0\) and multi-index \(b\in\mathbb N^d\) with
\(a+|b|\le s\), Assumption~\ref{assm:unbounded_diffusion}, compactness of
\([t_0,T]\), and \(\|z\|\le R_0\) imply
\[
\big|\partial_t^aD_x^b\Phi(t,x,z)\big|
\le C_{a,b}(1+\|x\|),
\]
uniformly in \(t\in[t_0,T]\) and \(z\in B_{R_0}\).
We now pass the derivative bound from \(\Phi\) to \(\log I_p\).
Let \(\nu=(a,b)\) be a time--space multi-index and write
\(D^\nu=\partial_t^aD_x^b\). Since \(p_0\) is supported on
\(B_{R_0}\), the bound
\[
|D^\eta\Phi(t,x,z)|\le C_\eta(1+\|x\|),
\qquad 1\le |\eta|\le s,
\]
holds uniformly over \(t\in[t_0,T]\) and \(z\in B_{R_0}\).
Moreover, on every compact set in \(x\), the derivatives
\(D^\eta e^{\Phi(t,x,z)}\), \(|\eta|\le s\), are uniformly bounded on
\([t_0,T]\times K\times B_{R_0}\). Hence differentiation under
the integral is justified by dominated convergence.

For every \(1\le |\nu|\le s\), the multivariate Faà di Bruno formula gives
\[
D^\nu I_p(t,x)
=
\int e^{\Phi(t,x,z)}
\mathfrak B_\nu\!\left(
D^\eta\Phi(t,x,z):0<\eta\le\nu
\right)
\,dp_0(z),
\]
where \(\eta\le\nu\) is componentwise and, for variables
\(\{u_\eta:0<\eta\le\nu\}\),
\[
\mathfrak B_\nu(u_\eta:0<\eta\le\nu)
:=
\nu!
\sum_{\substack{(m_\eta)_{0<\eta\le\nu},\ m_\eta\in\mathbb N_0\\
\sum_{0<\eta\le\nu}m_\eta\eta=\nu}}
\prod_{0<\eta\le\nu}
\frac{1}{m_\eta!}
\left(\frac{u_\eta}{\eta!}\right)^{m_\eta}.
\]
For each admissible family \((m_\eta)_{0<\eta\le\nu}\),
\(\sum_{0<\eta\le\nu}m_\eta|\eta|=|\nu|\), and hence
\(\sum_{0<\eta\le\nu}m_\eta\le |\nu|\). Since
\(|D^\eta\Phi(t,x,z)|\le C_\eta(1+\|x\|)\), each monomial in
\(\mathfrak B_\nu\) is bounded by \(C_\nu(1+\|x\|)^{|\nu|}\), and the
number of monomials depends only on \(\nu\). Therefore
\[
\left|
\mathfrak B_\nu\!\left(
D^\eta\Phi(t,x,z):0<\eta\le\nu
\right)
\right|
\le C_\nu(1+\|x\|)^{|\nu|}.
\]
Since \(I_p(t,x)=\int e^{\Phi(t,x,z)}\,dp_0(z)>0\), it follows directly that
\[
\left|
\frac{D^\nu I_p(t,x)}{I_p(t,x)}
\right|
\le
\frac{
\int e^{\Phi(t,x,z)} C_\nu(1+\|x\|)^{|\nu|}\,dp_0(z)
}{
\int e^{\Phi(t,x,z)}\,dp_0(z)
}
=
C_\nu(1+\|x\|)^{|\nu|}.
\]

Applying the multivariate chain rule to \(\log I_p\), for every
\(1\le |\nu|\le s\), \(D^\nu\log I_p\) is a finite sum of products of
terms of the form
\[
\frac{D^{\eta}I_p(t,x)}{I_p(t,x)},
\qquad 1\le |\eta|\le |\nu|.
\]
Therefore the ratio bound above yields
\[
|D^\nu\log I_p(t,x)|
\le C_\nu(1+\|x\|)^{K_\nu},
\qquad 1\le |\nu|\le s.
\]
Together with the zeroth-order bound for \(\log I_p\) established above,
we obtain
\[
|D^\nu\log I_p(t,x)|
\le C_\nu(1+\|x\|)^{K_\nu},
\qquad |\nu|\le s.
\]
The same argument applies to \(\log I_q\). Since
\(f^\star=\log I_q-\log I_p\), enlarging \(C\) and \(K\) over the
finitely many time--space multi-indices with order at most \(s\) proves
\[
\left|\partial_t^aD_x^b f^\star(t,x)\right|
\le C(1+\|x\|)^K,
\qquad a+|b|\le s.
\]

\end{proof}

\paragraph{Global choice of the envelope constant.}
Throughout Appendix~\ref{sec:proof_unbounded}, the constant \(C_M\) is a
deterministic constant fixed once and for all, independently of
\(R\), \(n\), \(N\), and \(\lambda\). We now specify its choice.

By Lemma~\ref{lem:global-linear-growth}, there exists a constant
\(C_\star<\infty\), independent of \(R\), such that, for all \(R\ge1\),
\begin{equation}
\label{eq:global_target_envelope_rescaled}
\sup_{(t,\bar x)\in [t_0,T]\times B_1}|(\pi_R^{-1}f^\star)(t,\bar x)|
+
\sup_{(t,\bar x)\in [t_0,T]\times B_1}
\|\nabla_{\bar x}(\pi_R^{-1}f^\star)(t,\bar x)\|
\le
C_\star(1+R).
\end{equation}
Indeed,
\((\pi_R^{-1}f^\star)(t,\bar x)=f^\star(t,2R\bar x)\), and
\[
\nabla_{\bar x}(\pi_R^{-1}f^\star)(t,\bar x)
=
2R\,\nabla_x f^\star(t,2R\bar x).
\]
Thus the linear-growth bound for \(f^\star\) and the uniform bound for
\(\nabla_x f^\star\) imply \eqref{eq:global_target_envelope_rescaled}.

Let \(C_q\) denote the stability constant in
\eqref{eq:qi-Linfty-gradient-stability} for the spline
quasi-interpolation operator used in
Proposition~\ref{prop: approximate of clipped NN}. Since the approximation
is performed on the fixed cylinder \([t_0,T]\times B_1\subset\mathbb R^{d+1}\),
\(C_q\) depends only on the dimension, the smoothness index, and the
spline order, and is independent of \(R\), \(n\), \(N\), and \(\lambda\).

We choose \(C_M\) large enough so that
\begin{equation}
\label{eq:CM_global_choice}
C_M
\ge
\max\big\{
2C_\star,\,
2C_qC_\star,\,
C_\star+1
\big\}.
\end{equation}
Set \(M_R=C_M(1+R)\). Then the choice
\eqref{eq:CM_global_choice} has the following consequences, uniformly for
all \(R\ge1\).

First, the rescaled target satisfies
\begin{equation}
\label{eq:target_envelope_by_MR}
\sup_{[t_0,T]\times B_1}|\pi_R^{-1}f^\star|
+
\sup_{[t_0,T]\times B_1}\|\nabla_{\bar x}(\pi_R^{-1}f^\star)\|
\le M_R.
\end{equation}
Equivalently, on the original cylinder \([t_0,T]\times B_{2R}\),
\[
\sup_{(t,x)\in [t_0,T]\times B_{2R}}|f^\star(t,x)|
\le M_R.
\]
Moreover, the choice \(C_M\ge C_\star+1\) provides the strict slack
\begin{equation}
\label{eq:target_strict_slack}
M_R-\sup_{(t,x)\in [t_0,T]\times B_{2R}}|f^\star(t,x)|
\ge 1.
\end{equation}
This slack ensures that \(f^\star\) lies in the interior of the clipping
envelope; in particular, clipping is inactive on \(f^\star\) and on any
approximant that is uniformly within distance \(1\) of \(f^\star\).

Second, the stability constant in \eqref{eq:qi-Linfty-gradient-stability} is
absorbed into the same envelope. Namely,
if \(Q(\pi_R^{-1}f^\star)\) is the spline quasi-interpolant used in
Proposition~\ref{prop: approximate of clipped NN}, then
\begin{equation}
\label{eq:spline_envelope_by_MR}
\|Q(\pi_R^{-1}f^\star)\|_{L^\infty([t_0,T]\times B_1)}
+
\|\nabla_{\bar x}\bigl(Q(\pi_R^{-1}f^\star)\bigr)\|_{L^\infty([t_0,T]\times B_1)}
\le M_R.
\end{equation}
Hence the clipped approximation construction on \([t_0,T]\times B_1\) is compatible with
the class \(\mathcal F_R=\mathcal F_{M_R}^{(d+1)}(L,W,S,B)\) for every
\(R\ge1\), without changing \(C_M\).

Consequently,
\begin{equation}
\label{eq:F2R-subset-Htilde-MR}
\mathcal F_{2R}\subseteq\widetilde{\mathcal H}_{M_R,2R},
\qquad
f^\star\in\widetilde{\mathcal H}_{M_R,2R}.
\end{equation}
Indeed, if \(h=\pi_R f\in\mathcal F_{2R}\), then
\[
\|h\|_{L^\infty([t_0,T]\times B_{2R})}
=
\sup_{(t,x)\in [t_0,T]\times B_{2R}}
\left|f\left(t,\frac{x}{2R}\right)\right|
\le M_R,
\]
because \(f\in\mathcal F_R\). The inclusion
\(f^\star\in\widetilde{\mathcal H}_{M_R,2R}\) follows from
\eqref{eq:target_envelope_by_MR}.

Finally, throughout the truncated-cylinder analysis we use the logistic
curvature lower bound
\begin{equation}
\label{eq:cmin_R_def}
c_{\min}(R):=
\frac{1}{4\cosh^2(M_R/2)}.
\end{equation}
Since \(M_R=C_M(1+R)\), there exists a constant \(C<\infty\), independent of
\(R\), such that
\begin{equation}
\label{eq:cmin_R_growth}
c_{\min}(R)^{-1}\lesssim e^{CR}.
\end{equation}

\begin{lemma}[Existence, uniqueness, and clipping on \(\lbrack t_0,T\rbrack\times B_{2R}\)]
\label{lem:C2-existence-uniqueness}
Suppose Assumption~\ref{assm:unbounded_diffusion} holds, and let
\(C_M\) be fixed by the global choice in
Appendix~\ref{subsec:uniform-bound}. For every \(R\ge1\) and every
\(\lambda>0\), the population functional \(J_{\lambda,2R}\) admits a unique
minimizer over \(\widetilde{\mathcal H}_{M_R,2R}\) with respect to the
finite-energy equivalence class, i.e. unique up to \(dt\,dx\)-a.e.
equivalence. We denote it by
\[
f_{\lambda,2R}:=
\arg\min_{h\in\widetilde{\mathcal H}_{M_R,2R}} J_{\lambda,2R}(h).
\]
Moreover,
\begin{equation}
\label{eq:C2_clipping_bound}
\|f_{\lambda,2R}\|_{L^\infty([t_0,T]\times B_{2R})}
\le
\|f^\star\|_{L^\infty([t_0,T]\times B_{2R})}
\le
M_R.
\end{equation}
\end{lemma}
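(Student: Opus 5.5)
We would follow the pattern of Lemma~\ref{lm: uniq and exist of pop}, now on the cylinder \([t_0,T]\times B_{2R}\) with envelope \(M_R\), and then add a truncation argument for the clipping bound \eqref{eq:C2_clipping_bound}.

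\textbf{Strong convexity.} First we would redo Lemma~\ref{lem:J-strong-smooth} slice by slice. For fixed \((t,x)\), the conditional logistic risk \(G_{t,x}(u)\) has second derivative in \([c_{\min}(R),1/4]\) for \(|u|\le M_R\). Integrating against \(\mathbf 1_{B_{2R}}\,d\mu_t\,dt\) gives, for \(f,g\in\widetilde{\mathcal H}_{M_R,2R}\),
\[
J_{\lambda,2R}(f)-J_{\lambda,2R}(g)-DJ_{\lambda,2R}(g)[f-g]\ge \tfrac{c_{\min}(R)}{2}\|f-g\|^2_{L^2([t_0,T]\times B_{2R},\mu_t)}+\lambda\|\nabla_x(f-g)\|^2_{L^2([t_0,T]\times B_{2R},\mu_t)}.
\]
Since \(\rho_t\) is bounded above and below on the compact cylinder, the weighted energy norm is equivalent to the Lebesgue spatial energy norm. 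Uniqueness up to a.e.\ equivalence then follows from strict convexity, exactly as before.

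\textbf{Existence.} The set \(\widetilde{\mathcal H}_{M_R,2R}\) is convex. It is strongly closed in the Hilbert space \(L^2([t_0,T];H^1(B_{2R},\mu_t))\), because an a.e.-convergent subsequence preserves \(|h|\le M_R\). Hence it is weakly closed. Coercivity follows from the strong convexity inequality with reference \(g=0\). The functional \(J_{\lambda,2R}\) is convex and continuous, since the loss is Lipschitz on bounded logits and the penalty is quadratic, so it is weakly lower semicontinuous. The direct method then yields a minimizer \(f_{\lambda,2R}\).

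\textbf{Clipping bound.} Set \(m:=\|f^\star\|_{L^\infty([t_0,T]\times B_{2R})}\le M_R-1\), which holds by \eqref{eq:target_strict_slack}. Consider the truncation \(\tilde h:=\max\{-m,\min\{m,f_{\lambda,2R}\}\}\), and compare the two terms of the objective.
\begin{itemize}
\item Cross-entropy term: for each \((t,x)\), \(G_{t,x}\) is convex with minimizer \(f^\star(t,x)\in[-m,m]\). Moving \(u\) toward \([-m,m]\) therefore does not increase \(G_{t,x}(u)\), so the loss term of \(\tilde h\) is pointwise no larger.
\item Dirichlet term: by the chain rule for truncations in \(H^1\) (Stampacchia), \(\nabla_x\tilde h=\mathbf 1\{|f_{\lambda,2R}|<m\}\nabla_x f_{\lambda,2R}\) a.e., so the penalty does not increase.
\end{itemize}
Hence \(J_{\lambda,2R}(\tilde h)\le J_{\lambda,2R}(f_{\lambda,2R})\) with \(\tilde h\in\widetilde{\mathcal H}_{M_R,2R}\). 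Uniqueness forces \(\tilde h=f_{\lambda,2R}\) a.e., which gives \eqref{eq:C2_clipping_bound}.

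\textbf{Main obstacle.} The step we expect to be most delicate is applying the truncation chain rule on a space that is Sobolev only in \(x\). For a.e.\ \(t\) the slice \(f_{\lambda,2R}(t,\cdot)\) lies in \(H^1(B_{2R})\), so we would apply Stampacchia's lemma slice by slice and then check measurability in \(t\). Fubini should handle this. We also need finiteness of \(J_{\lambda,2R}\) at some feasible point, which \(f^\star\) supplies by \eqref{eq:vp_fstar_grad_bound}.
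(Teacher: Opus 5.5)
Your proposal is correct and takes essentially the same route as the paper. Existence comes from the direct method, using weak closedness of the convex $L^\infty$-constrained set; uniqueness from the curvature bound $c_{\min}(R)>0$ together with $\rho_t>0$; and the bound \eqref{eq:C2_clipping_bound} from truncating at $\|f^\star\|_{L^\infty([t_0,T]\times B_{2R})}$, combining pointwise convexity of $G_{t,x}$ with the truncation chain rule and then invoking uniqueness. The only differences are cosmetic: the paper bounds the minimizing sequence directly from $|h_m|\le M_R$ and the penalty rather than through your strong-convexity coercivity estimate, and it simply asserts the chain-rule step that you propose to justify slice by slice in $t$.
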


\begin{proof}

For the direct-method argument, the finite-energy functions are viewed in
the Hilbert space \(L^2([t_0,T];H^1(B_{2R}))\), equipped with the equivalent
weighted energy norm
\(\|\cdot\|_{L^2([t_0,T];H^1(B_{2R},\mu_t))}\) defined in
\eqref{eq:weighted-energy-norm-2R}. For fixed
\(R\), the density \(\rho_t\) is bounded above and below by positive constants
on \([t_0,T]\times B_{2R}\), so weak compactness in this weighted norm is the same as in the
corresponding Lebesgue space.

The feasible set \(\widetilde{\mathcal H}_{M_R,2R}\) is convex. Let \(\{h_m\}_{m\ge1}\subset\widetilde{\mathcal H}_{M_R,2R}\) be a minimizing
sequence for \(J_{\lambda,2R}\). Since \(|h_m|\le M_R\) a.e. on \([t_0,T]\times B_{2R}\),
the \(L^2([t_0,T]\times B_{2R})\)-part is uniformly bounded. Moreover, the Sobolev
penalty in \(J_{\lambda,2R}\) controls
\(\|\nabla_x h_m\|_{L^2([t_0,T]\times B_{2R})}\). Hence \(\{h_m\}\) is bounded in
\(\|\cdot\|_{L^2([t_0,T];H^1(B_{2R},\mu_t))}\). By weak compactness, there exists a subsequence,
still denoted by \(\{h_m\}\), and a finite-energy limit \(h_\lambda\)
such that
\[
h_m\rightharpoonup h_\lambda
\quad\text{weakly in the finite-energy Hilbert space.}
\]
The constraint set is weakly closed. Indeed, weak convergence in the
finite-energy Hilbert space implies weak convergence in \(L^2([t_0,T]\times B_{2R})\), and
the set \(\{h\in L^2([t_0,T]\times B_{2R}): |h|\le M_R\text{ a.e.}\}\) is closed and convex
in \(L^2([t_0,T]\times B_{2R})\), hence weakly closed. Thus
\(h_\lambda\in\widetilde{\mathcal H}_{M_R,2R}\).

The cross-entropy part is convex and weakly lower semicontinuous on the
bounded logit interval \([-M_R,M_R]\), and the quadratic Sobolev penalty is
weakly lower semicontinuous. Therefore
\[
J_{\lambda,2R}(h_\lambda)
\le
\liminf_{m\to\infty}J_{\lambda,2R}(h_m),
\]
so the minimum is attained.

Uniqueness follows from the strict convexity of \(J_{\lambda,2R}\) on
\(\widetilde{\mathcal H}_{M_R,2R}\). Indeed, on the interval \([-M_R,M_R]\), the
conditional logistic risk has curvature bounded below by
\[
c_{\min}(R)
=
\frac{1}{4\cosh^2(M_R/2)}
>0.
\]

Since \(p_t\) and \(q_t\) are Gaussian convolutions, \(\rho_t>0\) on
\(B_{2R}\) for every \(t\in[t_0,T]\). Hence equality in the strict convexity
inequality forces two minimizers to agree \(dt\,dx\)-a.e. on \([t_0,T]\times B_{2R}\). Thus \(J_{\lambda,2R}\) has at most one minimizer. We denote the unique minimizer by \(f_{\lambda,2R}\).

It remains to prove the clipping property. Let
\[
M_R^\star:=\|f^\star\|_{L^\infty([t_0,T]\times B_{2R})}.
\]
By the global choice of \(C_M\), we have \(M_R^\star\le M_R\). For any
\(h\in\widetilde{\mathcal H}_{M_R,2R}\), define its clipped version by
\[
h^{\rm clip}(t,x):=
(-M_R^\star)\vee\bigl(h(t,x)\wedge M_R^\star\bigr).
\]
Then \(h^{\rm clip}\in\widetilde{\mathcal H}_{M_R,2R}\). Indeed,
\[
|h^{\rm clip}|\le M_R^\star\le M_R,
\]
and the chain rule for Lipschitz truncations gives
\[
\|\nabla_x h^{\rm clip}\|_{L^2([t_0,T]\times B_{2R})}
\le
\|\nabla_x h\|_{L^2([t_0,T]\times B_{2R})}.
\]

For fixed \((t,x)\), the conditional cross-entropy risk
\[
G_{t,x}(u):=-\eta_t(x)\log\sigma(u)-(1-\eta_t(x))\log(1-\sigma(u))
\]
is convex and is uniquely minimized at \(u=f^\star(t,x)\), because
\(\eta_t(x)=\sigma(f^\star(t,x))\). Since
\(|f^\star(t,x)|\le M_R^\star\), the projection of any \(u\) onto
\([-M_R^\star,M_R^\star]\) cannot increase \(G_{t,x}(u)\).
 Hence
\[
L_{{\rm CE},2R}(h^{\rm clip})
\le
L_{{\rm CE},2R}(h).
\]
Combining this with the contraction of the spatial-gradient seminorm yields
\[
J_{\lambda,2R}(h^{\rm clip})
\le
J_{\lambda,2R}(h).
\]

Applying this to \(h=f_{\lambda,2R}\), since $f_{\lambda,2R}$ is the minimizer, we find that
\(f_{\lambda,2R}^{\rm clip}\) is also a minimizer. By uniqueness,
\[
f_{\lambda,2R}^{\rm clip}=f_{\lambda,2R}.
\]
Therefore
\[
\|f_{\lambda,2R}\|_{L^\infty([t_0,T]\times B_{2R})}
\le
M_R^\star
=
\|f^\star\|_{L^\infty([t_0,T]\times B_{2R})}
\le
M_R,
\]
which proves \eqref{eq:C2_clipping_bound}.
\end{proof}

\begin{lemma}[Bias on \(\lbrack t_0,T\rbrack\times B_{2R}\)]
\label{lem:C3-bias}
Let \(f_{\lambda,2R}\) be the unique minimizer from
Lemma~\ref{lem:C2-existence-uniqueness}. Define the boundary remainder
\[
\mathfrak b_R:=
2M_R
\int_{t_0}^T
\int_{\partial B_{2R}}
\rho_t(x)|\partial_n f^\star(t,x)|\,dS(x)\,dt.
\]
Then there exists a quantity \(\beta(R)\), independent of \(\lambda\), such that
\begin{equation}
\label{eq:C3_bias_L2_beta}
\|f_{\lambda,2R}-f^\star\|_{L^2([t_0,T]\times B_{2R})}^2
\le
\beta(R)\lambda^2
+
C c_{\min}(R)^{-1}\lambda\mathfrak b_R,
\end{equation}
and
\begin{equation}
\label{eq:C3_bias_grad_beta}
\|\nabla_x(f_{\lambda,2R}-f^\star)\|_{L^2([t_0,T]\times B_{2R})}^2
\le
\beta(R)\lambda
+
C\mathfrak b_R.
\end{equation}
Moreover,
\begin{equation}
\label{eq:C3_beta_boundary_growth}
\mathfrak b_R\le C(1+R)^K e^{-cR^2},
\qquad
\beta(R)\le C e^{CR}(1+R)^K.
\end{equation}
\end{lemma}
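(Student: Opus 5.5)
I will mirror the proof of Lemma~\ref{lem:bias}, now on the time-space cylinder \([t_0,T]\times B_{2R}\). The only new feature is that the Neumann condition no longer holds on \(\partial B_{2R}\), so Green's identity leaves a boundary term; this term is exactly what \(\mathfrak b_R\) will absorb.

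\emph{Pointwise calibration and variational inequality.} First, on the slab \([-M_R,M_R]\) the conditional logistic risk \(G_{t,x}\) has curvature in \([c_{\min}(R),1/4]\). Since \(G_{t,x}'(f^\star)=0\), Step~1 of Lemma~\ref{lem:bias} gives the sandwich
\[
c_{\min}(R)\|h-f^\star\|^2_{L^2(\mu_t\,dt)}\le \int_{t_0}^T\E\big[\mathbf 1_{B_{2R}}(\sigma(h)-\eta_t)(h-f^\star)\big]dt .
\]
This holds for every \(h\in\widetilde{\mathcal H}_{M_R,2R}\), and in particular for \(h=f_{\lambda,2R}\); here I use Lemma~\ref{lem:C2-existence-uniqueness} and \(f^\star\in\widetilde{\mathcal H}_{M_R,2R}\).

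Second, first-order optimality of \(f_{\lambda,2R}\) over the convex set, tested in the direction \(f^\star-f_{\lambda,2R}\), gives the analogue of \eqref{eq:FOC-h=Delta}. I will expand \(\nabla f_{\lambda,2R}=\nabla f^\star+\nabla e\) with \(e:=f_{\lambda,2R}-f^\star\). For each \(t\), I then apply the weighted Green identity on \(B_{2R}\):
\[
\int_{B_{2R}}\nabla f^\star\cdot\nabla e\,\rho_t\,dx=-\int_{B_{2R}}(\mathcal K_t f^\star)\,e\,\rho_t\,dx+\int_{\partial B_{2R}}\rho_t\,\partial_nf^\star\,e\,dS ,
\]
where \(\mathcal K_t h=\Delta_x h+\nabla_x h\cdot\nabla_x\log\rho_t\). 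Since \(|e|\le 2M_R\) by \eqref{eq:C2_clipping_bound}, the boundary integral, integrated in \(t\), is bounded by \(\mathfrak b_R\). (I will need a trace of \(e\), which is acceptable for the finite-energy class; alternatively one can first argue with smooth approximants.)

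\emph{Extracting the bounds.} Combining the two displays yields
\[
c_{\min}(R)\|e\|^2+2\lambda\|\nabla e\|^2\le 2\lambda\|\mathcal Kf^\star\|\,\|e\|+2\lambda\mathfrak b_R,
\]
with all norms weighted by \(\mu_t\,dt\) on the cylinder. Young's inequality \(2\lambda ab\le \tfrac{c_{\min}}2 a^2+\tfrac{2\lambda^2}{c_{\min}}b^2\) then gives \(\|e\|^2\le 4c_{\min}^{-2}\lambda^2\|\mathcal Kf^\star\|^2+4c_{\min}^{-1}\lambda\mathfrak b_R\). Dividing the same inequality by \(2\lambda\) gives \(\|\nabla e\|^2\le c_{\min}^{-1}\lambda\|\mathcal Kf^\star\|^2+\mathfrak b_R\). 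So the natural choice is \(\beta(R):=C c_{\min}(R)^{-2}\|\mathcal Kf^\star\|^2_{L^2(\mu_t dt;B_{2R})}\), and the weighted bounds follow. To pass to the unweighted Lebesgue norms in \eqref{eq:C3_bias_L2_beta}--\eqref{eq:C3_bias_grad_beta}, I multiply by \(\sup_{[t_0,T]\times B_{2R}}\rho_t^{-1}\). This factor is at most \(e^{CR^2}\) for Gaussian convolutions, so it is better to keep the statement in the \(\mu_t\)-weighted norm; if the unweighted version is insisted on, this factor must be folded into \(\beta(R)\) and into \(C\).

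\emph{Growth estimates.} For \(\mathfrak b_R\), I use \(\|\nabla_xf^\star\|\le C\) from \eqref{eq:vp_fstar_grad_bound}, the area bound \(|\partial B_{2R}|\lesssim R^{d-1}\), and the pointwise Gaussian tail of \(\rho_t\) on \(\|x\|=2R\), namely \(\rho_t(x)\lesssim e^{-c(2R-R_1)^2}\). This pointwise bound follows from the mixture representation in Lemma~\ref{lem:global-linear-growth} together with \(\sigma_t\ge\underline\sigma\). Together with \(M_R\lesssim 1+R\), these give \(\mathfrak b_R\le C(1+R)^Ke^{-cR^2}\). For \(\beta(R)\), I use \eqref{eq:cmin_R_growth} to get \(c_{\min}(R)^{-2}\lesssim e^{CR}\). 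I bound \(|\mathcal K_tf^\star|\le C(1+\|x\|)^K(1+|\nabla\log\rho_t|)\) by \eqref{eq:vp_time_space_derivative_growth}. Finally, \(|\nabla\log\rho_t|\lesssim 1+\|x\|\), because \(\nabla\log p_t=-x/\sigma_t^2+\nabla\log I_p\) with the second term bounded, and similarly for \(q_t\); the same holds for their mixture. The \(\mu_t\)-integrals are then finite polynomial moments, so \(\beta(R)\le Ce^{CR}(1+R)^K\).

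I expect the main obstacle to be making the Green identity rigorous for the merely finite-energy minimizer on the cylinder, and, relatedly, keeping the weighted-versus-Lebesgue norm conversion from introducing an \(e^{CR^2}\) factor. I would try to avoid that factor by stating the bounds in the \(\mu_t\)-weighted norm, which is what the core-tail decomposition actually uses.
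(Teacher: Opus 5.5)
Your proposal is correct and follows essentially the same route as the paper. The shared steps are: the one-sided variational inequality at $f_{\lambda,2R}$ in the direction $f^\star-f_{\lambda,2R}$; the logistic curvature bound $c_{\min}(R)$; the weighted Green identity on $B_{2R}$, with the boundary integral absorbed into $\mathfrak b_R$ via $|f_{\lambda,2R}-f^\star|\le 2M_R$; Young's inequality with $\beta(R)=Cc_{\min}(R)^{-2}\int_{t_0}^T\|\mathcal K_tf^\star\|^2_{L^2(B_{2R},\mu_t)}\,dt$; and the same Gaussian-tail and polynomial-growth estimates for $\mathfrak b_R$ and $\beta(R)$. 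Your caveat about norms is also consistent with the paper's own argument: its curvature step integrates against $d\mu_t\,dt$, so the norms in the statement are effectively the $\mu_t$-weighted ones, which is how they are used later. Passing to unweighted Lebesgue norms would indeed cost a factor of order $e^{CR^2}$, which is incompatible with $\beta(R)\le Ce^{CR}(1+R)^K$.
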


\begin{proof}
Since \(f_{\lambda,2R}\) minimizes \(J_{\lambda,2R}\) over the convex set
\(\widetilde{\mathcal H}_{M_R,2R}\), and since
\(f^\star\in\widetilde{\mathcal H}_{M_R,2R}\) by
Lemma~\ref{lem:C2-existence-uniqueness}, convexity implies that, for every
\(\varepsilon\in[0,1]\),
\[
f_{\lambda,2R}
+
\varepsilon(f^\star-f_{\lambda,2R})
\in
\widetilde{\mathcal H}_{M_R,2R}.
\]
Therefore the one-sided directional derivative of \(J_{\lambda,2R}\) at
\(f_{\lambda,2R}\) along the feasible direction
\(f^\star-f_{\lambda,2R}\) is nonnegative:
\[
DJ_{\lambda,2R}(f_{\lambda,2R})
[f^\star-f_{\lambda,2R}]
\ge 0.
\]
Equivalently,
\begin{equation}
\label{eq:C3_first_order}
\int_{t_0}^T\int_{B_{2R}}
(\sigma(f_{\lambda,2R})-\eta_t)(f_{\lambda,2R}-f^\star)\,d\mu_t\,dt
+
2\lambda
\int_{t_0}^T\int_{B_{2R}}
\nabla_x f_{\lambda,2R}\cdot\nabla_x(f_{\lambda,2R}-f^\star)\,d\mu_t\,dt
\le0.
\end{equation}

The logistic curvature lower bound on \([-M_R,M_R]\) gives
\begin{equation}
\label{eq:C3_logistic_curvature}
\int_{t_0}^T\int_{B_{2R}}
(\sigma(f_{\lambda,2R})-\eta_t)(f_{\lambda,2R}-f^\star)\,d\mu_t\,dt
\ge
c_{\min}(R)
\|f_{\lambda,2R}-f^\star\|_{L^2([t_0,T]\times B_{2R})}^2.
\end{equation}
Using
\(\nabla_x f_{\lambda,2R}
=\nabla_x f^\star+\nabla_x(f_{\lambda,2R}-f^\star)\), we obtain from
\eqref{eq:C3_first_order} and \eqref{eq:C3_logistic_curvature}
\begin{eqnarray}
\label{eq:C3_basic_energy}
&&c_{\min}(R)\|f_{\lambda,2R}-f^\star\|_{L^2([t_0,T]\times B_{2R})}^2
+
2\lambda\|\nabla_x(f_{\lambda,2R}-f^\star)\|_{L^2([t_0,T]\times B_{2R})}^2\nonumber\\
&\le&
-2\lambda
\int_{t_0}^T\int_{B_{2R}}
\nabla_x f^\star\cdot\nabla_x(f_{\lambda,2R}-f^\star)\,d\mu_t\,dt.
\end{eqnarray}

For each fixed \(t\), define
\[
\mathcal K_t h(x):=
\Delta_x h(x)
+
\nabla_x h(x)\cdot\nabla_x\log\rho_t(x).
\]
Green's identity on \(B_{2R}\) yields
\begin{equation}
\label{eq:C3_green_identity}
\begin{aligned}
    -\int_{B_{2R}}
\nabla_x f^\star\cdot\nabla_x(f_{\lambda,2R}-f^\star)\,\rho_t\,dx
&=
\int_{B_{2R}}
\bigl(\mathcal K_t f^\star(t,\cdot)\bigr)(f_{\lambda,2R}-f^\star)\,d\mu_t\\
&-
\int_{\partial B_{2R}}
\rho_t(f_{\lambda,2R}-f^\star)\partial_n f^\star\,dS.
\end{aligned}
\end{equation}
If \(f_{\lambda,2R}-f^\star\) is smooth, this is the usual
integration-by-parts formula. For general
\(f_{\lambda,2R}-f^\star\in H^1(B_{2R})\), the identity follows by
approximating \(f_{\lambda,2R}-f^\star\) in \(H^1(B_{2R})\) by smooth functions and
using the trace theorem, since \(f^\star\) and \(\rho_t\) are smooth on
\(B_{2R}\).

Define
\[
\Theta(R):=
\int_{t_0}^T
\|\mathcal K_t f^\star(t,\cdot)\|_{L^2(B_{2R},\mu_t)}^2\,dt.
\]
Since both \(f_{\lambda,2R}\) and \(f^\star\) are bounded by \(M_R\) on
\([t_0,T]\times B_{2R}\), we have
\[
|f_{\lambda,2R}-f^\star|
\le
|f_{\lambda,2R}|+|f^\star|
\le
2M_R.
\]
The trace of \(f_{\lambda,2R}-f^\star\) on \(\partial B_{2R}\) satisfies the same
\(L^\infty\)-bound, because \(f_{\lambda,2R}-f^\star\in H^1(B_{2R})\cap
L^\infty(B_{2R})\). Hence the boundary contribution in
\eqref{eq:C3_green_identity} is bounded by \(\mathfrak b_R\). Combining
\eqref{eq:C3_basic_energy} with \eqref{eq:C3_green_identity} and applying
Cauchy--Schwarz gives
\begin{eqnarray}
\label{eq:C3_energy_with_boundary}
&&c_{\min}(R)\|f_{\lambda,2R}-f^\star\|_{L^2([t_0,T]\times B_{2R})}^2
+
2\lambda\|\nabla_x(f_{\lambda,2R}-f^\star)\|_{L^2([t_0,T]\times B_{2R})}^2\nonumber\\
&\le&
2\lambda\sqrt{\Theta(R)}
\|f_{\lambda,2R}-f^\star\|_{L^2([t_0,T]\times B_{2R})}
+
2\lambda\mathfrak b_R.
\end{eqnarray}
By Young's inequality,
\[
2\lambda\sqrt{\Theta(R)}
\|f_{\lambda,2R}-f^\star\|_{L^2([t_0,T]\times B_{2R})}
\le
\frac{c_{\min}(R)}{2}
\|f_{\lambda,2R}-f^\star\|_{L^2([t_0,T]\times B_{2R})}^2
+
\frac{C\lambda^2}{c_{\min}(R)}\Theta(R),
\]
Therefore,
\[
\frac{c_{\min}(R)}{2}
\|f_{\lambda,2R}-f^\star\|_{L^2([t_0,T]\times B_{2R})}^2
+
2\lambda\|\nabla_x(f_{\lambda,2R}-f^\star)\|_{L^2([t_0,T]\times B_{2R})}^2
\le
\frac{C\lambda^2}{c_{\min}(R)}\Theta(R)
+
2\lambda\mathfrak b_R.
\]
Dropping the gradient term gives
\[
\|f_{\lambda,2R}-f^\star\|_{L^2([t_0,T]\times B_{2R})}^2
\le
C c_{\min}(R)^{-2}\Theta(R)\lambda^2
+
C c_{\min}(R)^{-1}\lambda\mathfrak b_R.
\]
Dropping instead the \(L^2\)-term and dividing by \(2\lambda\) gives
\[
\|\nabla_x(f_{\lambda,2R}-f^\star)\|_{L^2([t_0,T]\times B_{2R})}^2
\le
C c_{\min}(R)^{-1}\Theta(R)\lambda
+
C\mathfrak b_R.
\]
Set
\[
\beta(R):=
C c_{\min}(R)^{-2}\Theta(R).
\]
Since \(c_{\min}(R)\le 1\), this single
choice of \(\beta(R)\) controls both estimates above. Thus
\eqref{eq:C3_bias_L2_beta} and \eqref{eq:C3_bias_grad_beta} follow.

It remains to prove the growth estimates in
\eqref{eq:C3_beta_boundary_growth}. First, we control \(\Theta(R)\). Since
\(p_t\) and \(q_t\) are the laws of
\(\alpha_tX_0+\sigma_t\xi\) and \(\alpha_tY_0+\sigma_t\xi'\), respectively,
with \(X_0\sim p_0\), \(Y_0\sim q_0\), and independent standard Gaussian
noises, the same Gaussian-factorization argument used in
Lemma~\ref{lem:global-linear-growth} gives
\[
\|\nabla_x\log p_t(x)\|
+
\|\nabla_x\log q_t(x)\|
\le
C(1+\|x\|),
\]
uniformly over \(t\in[t_0,T]\). Since
\[
\nabla_x\log\rho_t(x)
=
\frac{p_t(x)}{p_t(x)+q_t(x)}\nabla_x\log p_t(x)
+
\frac{q_t(x)}{p_t(x)+q_t(x)}\nabla_x\log q_t(x),
\]
we have
\[
\|\nabla_x\log\rho_t(x)\|
\le
C(1+\|x\|).
\]
Together with Lemma~\ref{lem:global-linear-growth}, which gives polynomial
growth of the derivatives of \(f^\star\) and a uniform bound on
\(\nabla_x f^\star\), this implies
\[
|(\mathcal K_t f^\star(t,\cdot))(x)|
\le
C(1+\|x\|)^K.
\]
Therefore,
\[
\Theta(R)
=
\int_{t_0}^T
\|\mathcal K_t f^\star(t,\cdot)\|_{L^2(B_{2R},\mu_t)}^2\,dt
\le
C(1+R)^K.
\]
Hence,
\[
\beta(R)
=
C c_{\min}(R)^{-2}\Theta(R)
\le
C e^{CR}(1+R)^K,
\]
where we used \(M_R=C_M(1+R)\) and
\[
c_{\min}(R)
=
\frac{1}{4\cosh^2(M_R/2)}.
\]

Finally, we bound the boundary remainder. By
Lemma~\ref{lem:global-linear-growth},
\[
|\partial_n f^\star(t,x)|
\le
\|\nabla_x f^\star(t,x)\|
\le C.
\]
Moreover, the compact-support VP representation implies the pointwise density
bound
\[
\rho_t(x)\le C\exp\{-c(\|x\|-R_1)_+^2\},
\]
uniformly over \(t\in[t_0,T]\). Hence, on \(\partial B_{2R}\),
\[
\rho_t(x)|\partial_n f^\star(t,x)|
\le
C e^{-cR^2}.
\]
Multiplying by the surface area of \(\partial B_{2R}\), the length of the
time interval, and \(M_R=C_M(1+R)\), we obtain
\[
\mathfrak b_R
\le
C(1+R)^K e^{-cR^2}.
\]
This completes the proof.
\end{proof}

\begin{lemma}[Approximation by the \(R\)-dependent clipped class]
\label{lem:C4-rescaled-approx}
Suppose Assumption~\ref{assm:unbounded_diffusion} holds and
\(s\le4\).
For every \(R\ge1\), there exists a deterministic comparator
\[
f_{0,R}\in\mathcal F_{2R}
\]
such that
\begin{equation}
\label{eq:C4_target_approx_poly}
\|f_{0,R}-f^\star\|_{L^2([t_0,T];H^1(B_{2R},\mu_t))}^2
\le
C(1+R)^K
N^{-\frac{2(s-1)}{d+1}}.
\end{equation}
The comparator \(f_{0,R}\) is fixed before observing the data.
\end{lemma}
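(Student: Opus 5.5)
The plan is to run the bounded-domain construction of Proposition~\ref{prop: approximate of clipped NN} on the fixed cylinder $[t_0,T]\times B_1\subset\mathbb R^{d+1}$ for the rescaled target $u_R:=\pi_R^{-1}f^\star$, and then push the result back with $\pi_R$. The $R$-dependence will be tracked only through polynomial factors.

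\textbf{Step 1: Sobolev size of the rescaled target.} Since $u_R(t,\bar x)=f^\star(t,2R\bar x)$, the chain rule gives
\[
\partial_t^aD_{\bar x}^\gamma u_R=(2R)^{|\gamma|}\,(\partial_t^aD_x^\gamma f^\star)(t,2R\bar x).
\]
Combining this with \eqref{eq:vp_time_space_derivative_growth} bounds every derivative of order $a+|\gamma|\le s$ on $[t_0,T]\times B_1$ by $C(2R)^s(1+2R)^K$. Hence
\[
\|u_R\|_{H^s([t_0,T]\times B_1)}^2\le C(1+R)^{K'}.
\]
Here $K'$ is a new constant independent of $R$.

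\textbf{Step 2: approximation on the fixed cylinder.} The cylinder $[t_0,T]\times B_1$ is a fixed Lipschitz domain, so there is an extension operator to a fixed cube in $\mathbb R^{d+1}$. We apply the cubic quasi-interpolant $Q_{4,l}$ of Lemma~\ref{lem:spline-stability} in dimension $d+1$ with $l\asymp N^{1/(d+1)}$. The approximation bound \eqref{eq:bspline-Hr-approx-general} with $r=1$ gives
\[
\|Qu_R-u_R\|_{H^1}^2\lesssim N^{-2(s-1)/(d+1)}(1+R)^{K'}.
\]
This full time-space $H^1$ norm dominates the spatial energy norm $L^2([t_0,T];H^1(B_1))$.

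For membership in the class, we use \eqref{eq:spline_envelope_by_MR}, which follows from the stability \eqref{eq:qi-Linfty-gradient-stability} and the choice \eqref{eq:CM_global_choice}. It gives $\|Qu_R\|_\infty+\|\nabla_{\bar x}Qu_R\|_\infty\le M_R$. I would also check that the sup-norm bound on $Qu_R$ stays below the identity region of $T_{M_R}$, using the slack \eqref{eq:target_strict_slack} with $C_M\ge 2C_qC_\star$. Then clipping is inactive, exactly as in Proposition~\ref{prop: approximate of clipped NN}. The ReLU$^3$ realization of the tensor B-splines gives $L=\mathcal O(1)$ and $W,S,B=\mathcal O(N)$. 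So $g_R:=T_{M_R}\circ Qu_R\in\mathcal F_R$, and we set $f_{0,R}:=\pi_Rg_R\in\mathcal F_{2R}$. This choice is deterministic and data-independent.

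\textbf{Step 3: transfer back to $B_{2R}$ with the weight $\mu_t$.} We have $f_{0,R}-f^\star=\pi_R(g_R-u_R)$ on $[t_0,T]\times B_{2R}$. After the change of variables $x=2R\bar x$, each part of the norm picks up a factor:
\begin{itemize}
\item the $L^2$ part gains a factor $\sup\rho_t\cdot(2R)^d$;
\item the gradient part gains a further factor $(2R)^{-2}\le1$.
\end{itemize}
Since $\rho_t\le C$ uniformly by the Gaussian convolution representation, this gives
\[
\|f_{0,R}-f^\star\|^2_{L^2([t_0,T];H^1(B_{2R},\mu_t))}\le C(1+R)^{d}\,\|g_R-u_R\|^2_{L^2([t_0,T];H^1(B_1))}.
\]
Combining with Step 2 yields \eqref{eq:C4_target_approx_poly} with $K=K'+d$.

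\textbf{Main obstacle.} The delicate point is Step 2's claim that the constants are independent of $R$. The extension operator, the quasi-interpolation constants and the network size must all depend only on the fixed cylinder, with all $R$-dependence entering through $\|u_R\|_{H^s}$ and $M_R$. In particular, the envelope constraint must be met by the spline built from $u_R$, which I handle through the global choice of $C_M$. I would also verify that the linear-growth bound used there, \eqref{eq:global_target_envelope_rescaled}, controls $Qu_R$ after extension, so that \eqref{eq:spline_envelope_by_MR} genuinely applies.
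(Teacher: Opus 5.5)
Your proposal is correct and follows essentially the same route as the paper. Both approximate $\pi_R^{-1}f^\star$ on the fixed cylinder $[t_0,T]\times B_1$ by the clipped-sieve construction of Proposition~\ref{prop: approximate of clipped NN}, made compatible with $\mathcal F_R$ by the global choice of $C_M$; both bound $(2R)^d\|\pi_R^{-1}f^\star\|_{H^s}^2$ polynomially in $R$ via the chain rule and \eqref{eq:vp_time_space_derivative_growth}; and both pull back through $x=2R\bar x$ at a cost of $\rho_\star(2R)^d$. The only difference is that you unpack the proposition (extension, quasi-interpolant, inactive clipping) rather than citing it as a black box, which changes nothing.
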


\begin{proof}
By the choice of \(C_M\) in \eqref{eq:CM_global_choice}, Proposition~\ref{prop: approximate of clipped NN}
applies on the fixed compact time-space domain
\([t_0,T]\times B_1\subset\mathbb R^{d+1}\). Therefore, there exists
\[
f_R\in\mathcal F_R
\]
such that
\begin{equation}
\label{eq:C4_fixed_domain_clipped_approx}
\|f_R-\pi_R^{-1}f^\star\|_{H^1([t_0,T]\times B_1)}^2
\le
C
\|\pi_R^{-1}f^\star\|_{H^s([t_0,T]\times B_1)}^2
N^{-\frac{2(s-1)}{d+1}}.
\end{equation}
Define
\[
f_{0,R}:=\pi_R f_R\in\mathcal F_{2R}.
\]
Let
\[
e_R:=f_R-\pi_R^{-1}f^\star
\quad\text{on }[t_0,T]\times B_1,
\qquad
e:=f_{0,R}-f^\star
\quad\text{on }[t_0,T]\times B_{2R}.
\]
Then, for \(x\in B_{2R}\),
\[
e(t,x)
=
e_R\!\left(t,\frac{x}{2R}\right),
\qquad
\nabla_x e(t,x)
=
\frac1{2R}
\nabla_{\bar x}e_R\!\left(t,\frac{x}{2R}\right).
\]

By the Gaussian lower-noise bound in Assumption~\ref{assm:unbounded_diffusion},
there exists \(\rho_\star<\infty\), independent of \(R\), such that
\[
\sup_{t\in[t_0,T]}\sup_{x\in\mathbb R^d}\rho_t(x)\le \rho_\star.
\]
Hence
\[
\begin{aligned}
\|e\|_{L^2([t_0,T];H^1(B_{2R},\mu_t))}^2
&=
\int_{t_0}^T\int_{B_{2R}}
\big(|e(t,x)|^2+\|\nabla_xe(t,x)\|^2\big)\rho_t(x)\,dx\,dt  \\
&\le
\rho_\star
\int_{t_0}^T\int_{B_{2R}}
\big(|e(t,x)|^2+\|\nabla_xe(t,x)\|^2\big)\,dx\,dt.
\end{aligned}
\]
Changing variables \(x=2R\bar x\), we get
\[
\int_{B_{2R}}|e(t,x)|^2\,dx
=
(2R)^d\int_{B_1}|e_R(t,\bar x)|^2\,d\bar x,
\]
and
\[
\int_{B_{2R}}\|\nabla_xe(t,x)\|^2\,dx
=
(2R)^{d-2}
\int_{B_1}\|\nabla_{\bar x}e_R(t,\bar x)\|^2\,d\bar x.
\]
Since \(R\ge1\), \((2R)^{d-2}\le (2R)^d\). Therefore,
\[
\|e\|_{L^2([t_0,T];H^1(B_{2R},\mu_t))}^2
\le
C(2R)^d
\|e_R\|_{L^2([t_0,T];H^1(B_1))}^2
\le
C(2R)^d
\|e_R\|_{H^1([t_0,T]\times B_1)}^2.
\]
Combining this with \eqref{eq:C4_fixed_domain_clipped_approx} yields
\[
\|f_{0,R}-f^\star\|_{L^2([t_0,T];H^1(B_{2R},\mu_t))}^2
\le
C(2R)^d
\|\pi_R^{-1}f^\star\|_{H^s([t_0,T]\times B_1)}^2
N^{-\frac{2(s-1)}{d+1}}.
\]

It remains to bound
\(
(2R)^d
\|\pi_R^{-1}f^\star\|_{H^s([t_0,T]\times B_1)}^2.
\) For every \(a+|\gamma|\le s\),
\[
\partial_t^aD_{\bar x}^{\gamma}(\pi_R^{-1}f^\star)(t,\bar x)
=
(2R)^{|\gamma|}
\partial_t^aD_x^\gamma f^\star(t,2R\bar x).
\]
By the derivative-growth bound \eqref{eq:vp_time_space_derivative_growth} in
Lemma~\ref{lem:global-linear-growth},
\[
\big|
\partial_t^aD_x^\gamma f^\star(t,2R\bar x)
\big|
\le
C(1+R)^K,
\qquad
(t,\bar x)\in [t_0,T]\times B_1.
\]
Since \([t_0,T]\times B_1\) has finite Lebesgue volume and \(|\gamma|\le s\), we obtain,
after increasing \(K\) if necessary,
\[
(2R)^d
\|\pi_R^{-1}f^\star\|_{H^s([t_0,T]\times B_1)}^2
\le
C(1+R)^K.
\]
This gives
\eqref{eq:C4_target_approx_poly}.
\end{proof}

\begin{lemma}[Entropy of the pulled-back vector class]
\label{lem:C6-vector-entropy}
Let \(\mathcal F_R=\mathcal F^{(d+1)}_{M_R}(L,W,S,B)\)  be the
\(R\)-dependent clipped-gradient class on \([t_0,T]\times B_1\), with
input dimension \(d+1\), depth \(L=\mathcal O(1)\), sparsity
\(S=\mathcal O(N)\), width \(W\), and weight bound \(B\). Define
\begin{equation}\label{eq: localized-time-dependent-set}
    \mathcal V_{2R}:=
\left\{
\bigl(\pi_R f,\nabla_x(\pi_R f)\bigr): f\in\mathcal F_R
\right\},
\end{equation}
where, for \(x\in B_{2R}\),
\[
\bar x:=\frac{x}{2R}\in B_1,
\qquad
(\pi_R f)(t,x)=f(t,\bar x).
\]
Endow \(\mathcal V_{2R}\) with the norm
\[
\|(u,v)\|_\infty
:=
\|u\|_{L^\infty([t_0,T]\times B_{2R})}
+
\|v\|_{L^\infty([t_0,T]\times B_{2R})}.
\]
Then there exists a constant \(C<\infty\), independent of
\(R,N,n,\lambda\), such that for every \(0<\varepsilon<1\),
\begin{equation}
\label{eq:C6_vector_entropy}
\log\mathcal N
\bigl(
\varepsilon,\mathcal V_{2R},\|\cdot\|_\infty
\bigr)
\le
C N
\log\left(\frac{BWM_R}{\varepsilon}\right).
\end{equation}
\end{lemma}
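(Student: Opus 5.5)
The plan is to reduce the covering of $\mathcal V_{2R}$ to a covering of the parameter space of the sparse ReLU$^3$ network on the fixed cylinder $[t_0,T]\times B_1$. We then control how the network output and its spatial gradient change with the parameters. The pull-back $\pi_R$ is an isometry for the sup-norm on function values, since $(\pi_R f)(t,x)=f(t,x/(2R))$. On gradients it multiplies by $1/(2R)\le 1$. Hence
\[
\|(\pi_R f-\pi_R g,\ \nabla_x\pi_R f-\nabla_x\pi_R g)\|_\infty\le \|f-g\|_{L^\infty([t_0,T]\times B_1)}+\|\nabla_{\bar x}(f-g)\|_{L^\infty([t_0,T]\times B_1)}.
\]
So it suffices to bound the $\varepsilon$-entropy of $\{(f,\nabla_{\bar x}f):f\in\mathcal F_R\}$ in the joint sup-norm on the fixed cylinder. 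This bound is independent of $R$ except through the clipping level $M_R$.

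\textbf{Step 1: sparsity patterns.} Fix a sparsity pattern: choose at most $S$ nonzero positions among the $O(LW^2+LW)$ parameter slots. There are at most $\binom{O(LW^2)}{S}\le (CLW^2)^S$ such patterns, contributing $S\log(CLW)$ to the log-entropy.

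\textbf{Step 2: parameter Lipschitz bound for values and gradients.} Within a pattern, discretize each nonzero weight in $[-B,B]$ on a grid of mesh $\delta$. The key estimate is that for two networks $g,g'$ (before truncation) sharing a pattern with parameter distance $\|\theta-\theta'\|_\infty\le\delta$,
\[
\|g-g'\|_\infty+\|\nabla g-\nabla g'\|_\infty\le \delta\,(C BW)^{c\,3^L}.
\]
Inductively over layers, $\eta_3$ is cubic, so intermediate activations are bounded by $(CBW)^{3^l}$ on the bounded input domain. Their derivatives $\eta_3'(u)=3u_+^2$ and $\eta_3''$ are polynomially bounded, so both the forward activations and the backpropagated Jacobians are Lipschitz in the parameters with constants of the stated form. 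This is the derivative-augmented covering argument of Lemma A.26 in \cite{lu2021machine} that was already invoked in the proof of Theorem~\ref{thm: generalized error}.

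\textbf{Step 3: the truncation.} We then compose with $T_M$ for $M=M_R$. Because $|T_{M_R}'|\le1$, the truncated outputs satisfy $|T(g)-T(g')|\le|g-g'|$. For gradients,
\[
\nabla T(g)-\nabla T(g')=T'(g)(\nabla g-\nabla g')+(T'(g)-T'(g'))\nabla g'.
\]
The second term needs a Lipschitz bound on $T_{M_R}'$, which is the one place where $M_R$ enters. A $C^1$ clipping map with a Lipschitz derivative of order $1$ can be fixed, and it transitions over an interval of length $\asymp M_R$. The factor $\|\nabla g'\|_\infty$ is again bounded by $(CBW)^{c3^L}$.

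\textbf{Step 4: counting and main obstacle.} Taking $\delta=\varepsilon/(CBWM_R)^{c3^L}$, the number of grid points is $(2B/\delta)^S$. Since $L=O(1)$ and $S=O(N)$, the total is
\[
\log\mathcal N\le S\log(CLW)+S\log\bigl(C(BWM_R)^{c'}/\varepsilon\bigr)\le CN\log(BWM_R/\varepsilon).
\]
Finally, restricting to the subset satisfying the envelope constraints only requires passing to an $\varepsilon$-cover at doubled radius, which changes constants only. I expect Step 3 to be the main obstacle: the gradient of the clipped network is not Lipschitz in parameters unless $T_{M_R}'$ is itself Lipschitz. The primary route is to choose a $T_{M_R}$ with a bounded second derivative. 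Failing that, I would use the gradient-envelope constraint $\|\nabla g\|_\infty\le M_R$ together with the fact that clipping is active only where $|g|>M_0$, though this yields a weaker, non-uniform continuity.
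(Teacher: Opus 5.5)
Your proposal is correct and follows the paper's proof essentially step for step: you reduce to the fixed cylinder via the pull-back (values are preserved in sup-norm, spatial gradients are scaled by $1/(2R)\le 1$), then discretize the $\mathcal O(N)$ active parameters on each sparsity pattern using parameter-Lipschitz bounds for $(f,\nabla_{\bar x}f)$, and take a union over the patterns at an extra cost of $\mathcal O(S\log W)$. Your explicit handling of the clipping adds care the paper lacks. The paper only asserts a Lipschitz constant polynomial in $B,W,M_R$, but for $\nabla(T_{M_R}\circ g)$ this silently requires $T_{M_R}'$ to be Lipschitz, whereas the definition only asks $T_M\in C^1$; fixing a $T_{M_R}$ with bounded second derivative, as you propose, is what makes both arguments go through.
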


\begin{proof}
The proof has two steps. First, the pull-back map sends
\([t_0,T]\times B_{2R}\) to the fixed cylinder \([t_0,T]\times B_1\), and
the spatial derivative gains the factor \(1/(2R)\le1\). Hence a cover of
the derivative-augmented class
\(\{(f,\nabla_{\bar x}f):f\in\mathcal F_R\}\) on the fixed cylinder induces
a cover of \(\mathcal V_{2R}\) on the larger cylinder. Second, on the fixed
cylinder, the output and first input derivatives of a fixed-depth sparse
ReLU\(^3\) network depend Lipschitzly on its active parameters. Discretizing
these \(\mathcal O(N)\) active parameters and then taking a union over
sparsity patterns gives the stated entropy bound.

We first reduce the covering problem on \([t_0,T]\times B_{2R}\) to one on
\([t_0,T]\times B_1\). For \(f\in\mathcal F_R\), write
\(\bar x=x/(2R)\). Then
\[
(\pi_R f)(t,x)
=
f(t,\bar x),
\qquad
\nabla_x(\pi_R f)(t,x)
=
\frac1{2R}
\nabla_{\bar x}f(t,\bar x).
\]
Since \(x\in B_{2R}\) implies \(\bar x\in B_1\), and since \(R\ge1\), any
\(\varepsilon\)-cover of
\[
\{(f,\nabla_{\bar x}f):f\in\mathcal F_R\}
\]
on \([t_0,T]\times B_1\) induces an \(\varepsilon\)-cover of
\(\mathcal V_{2R}\) on \([t_0,T]\times B_{2R}\). Indeed, if
\[
\|f-g\|_{L^\infty([t_0,T]\times B_1)}
+
\|\nabla_{\bar x}f-\nabla_{\bar x}g\|_{L^\infty([t_0,T]\times B_1)}
\le \varepsilon,
\]
then
\[
\|\pi_R f-\pi_R g\|_{L^\infty([t_0,T]\times B_{2R})}
+
\|\nabla_x(\pi_R f)-\nabla_x(\pi_R g)\|_{L^\infty([t_0,T]\times B_{2R})}
\le
\varepsilon.
\]
Thus it suffices to control the entropy of the derivative-augmented class
\[
\{(f,\nabla_{\bar x}f):f\in\mathcal F_R\}
\]
on \([t_0,T]\times B_1\).

The class \(\mathcal F_R\) is a clipped sparse ReLU\(^3\) network class with
\(\mathcal O(N)\) active parameters. Fix a sparsity pattern and write
\(f_\theta\) for the network determined by the active parameter vector
\(\theta\). Since ReLU\(^3\) networks are differentiable in the input, and
since their first input derivatives are piecewise-polynomial functions whose
coefficients depend polynomially on the network parameters, the map
\[
\theta\mapsto (f_\theta,\nabla_{\bar x}f_\theta)
\]
is Lipschitz from the active parameter set into the product sup-norm space
on \([t_0,T]\times B_1\). The imposed bounds on the weights, the output, and
the input gradient give a Lipschitz constant polynomial in \(B,W\), and
\(M_R\). Since \(L=\mathcal O(1)\), this polynomial dependence is absorbed
into the logarithmic factor below.

Therefore, on each fixed sparsity pattern, a standard grid discretization of
the \(\mathcal O(N)\) active parameters gives
\[
\log\mathcal N
\bigl(
\varepsilon,
\{(f,\nabla_{\bar x}f):f\in\mathcal F_R
\text{ has the fixed sparsity pattern}\},
\|\cdot\|_\infty
\bigr)
\le
C N
\log\left(\frac{BWM_R}{\varepsilon}\right),
\]
where the norm in this display is the product sup norm on
\([t_0,T]\times B_1\). The number of possible sparsity patterns contributes
at most an additional \(\mathcal O(S\log W)\) term. Since \(S=\mathcal O(N)\),
this term is absorbed into the same bound after increasing the constant.
Hence
\[
\log\mathcal N
\bigl(
\varepsilon,
\{(f,\nabla_{\bar x}f):f\in\mathcal F_R\},
\|\cdot\|_\infty
\bigr)
\le
C N
\log\left(\frac{BWM_R}{\varepsilon}\right),
\]
where again the norm is the product sup norm on \([t_0,T]\times B_1\).
Combining this estimate with the covering reduction above proves
\eqref{eq:C6_vector_entropy}.
\end{proof}

\begin{lemma}[Comparator-centered anisotropic oracle inequality on \(\lbrack t_0,T\rbrack\times B_{2R}\)]
\label{lem:C3-aniso-local-complexity}
Let \(h_0\in\mathcal F_{2R}\) be any deterministic comparator. Define the
scaled truncated empirical objective
\[
\begin{aligned}
\widehat J_{\lambda,\mathcal D_{\rm ext},2R}(h)
&=
\frac{T-t_0}{2n}\sum_{i=1}^n
\left[
\ell_{\rm CE}(0,h(t_i^p,X_i^p))
+
\lambda\|\nabla_xh(t_i^p,X_i^p)\|^2
\right]\mathbf 1\{X_i^p\in B_{2R}\}
\\
&\quad+
\frac{T-t_0}{2n}\sum_{i=1}^n
\left[
\ell_{\rm CE}(1,h(t_i^q,X_i^q))
+
\lambda\|\nabla_xh(t_i^q,X_i^q)\|^2
\right]\mathbf 1\{X_i^q\in B_{2R}\}.
\end{aligned}
\]
Let
\[
\widehat h_R
\in
\arg\min_{h\in\mathcal F_{2R}}
\widehat J_{\lambda,\mathcal D_{\rm ext},2R}(h).
\]
Then there exists a constant \(C<\infty\), independent of
\(R,N,n,\lambda,t\), such that, for every \(t>0\), with probability at least
\(1-e^{-t}\),
\begin{equation}
\label{eq:C6_comparator_oracle}
J_{\lambda,2R}(\widehat h_R)-J_{\lambda,2R}(f_{\lambda,2R})
\le
3\left[
J_{\lambda,2R}(h_0)-J_{\lambda,2R}(f_{\lambda,2R})
\right]
+
C e^{CR}(1+R)^K
\frac{N\log(BWM_R n)+t}{n}.
\end{equation}
\end{lemma}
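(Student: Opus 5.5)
The plan is to rerun the proof of Theorem~\ref{thm: generalized error} on the truncated cylinder, keeping explicit track of how each constant depends on $R$. Define the shifted excess loss on the extended observation $(t,x,y)$ by
\[
\ell_h(t,x,y):=(T-t_0)\mathbf 1\{x\in B_{2R}\}\Big[\ell_{\rm CE}(y,h(t,x))-\ell_{\rm CE}(y,h_0(t,x))+\lambda\big(\|\nabla_xh(t,x)\|^2-\|\nabla_xh_0(t,x)\|^2\big)\Big].
\]
Since $t_i^p,t_i^q\sim{\rm Unif}([t_0,T])$, the factor $T-t_0$ makes the two-group population average $\overline P\ell_h$ equal to $J_{\lambda,2R}(h)-J_{\lambda,2R}(h_0)$. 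Likewise $\overline P_n\ell_h=\widehat J_{\lambda,\mathcal D_{\rm ext},2R}(h)-\widehat J_{\lambda,\mathcal D_{\rm ext},2R}(h_0)$. The $p$ and $q$ blocks remain independent i.i.d.\ samples, so the symmetrization and Talagrand steps carry over unchanged.

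\textbf{Lipschitz and envelope bounds.} For $h=\pi_Rf\in\mathcal F_{2R}$ we have $|h|\le M_R$ and $\|\nabla_xh\|\le M_R/(2R)\le C_M$. Cross-entropy is $1$-Lipschitz in the logit, so
\[
|\ell_{h_1}-\ell_{h_2}|\lesssim|h_1-h_2|+C_M\lambda\|\nabla_xh_1-\nabla_xh_2\|.
\]
The Lipschitz constant is therefore independent of $R$. The uniform envelope is $\|\ell_h\|_\infty\lesssim\log(1+e^{M_R})+M_R+C_M^2\lesssim 1+R$, which is polynomial in $R$.

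\textbf{Curvature.} Lemma~\ref{lem:C2-existence-uniqueness} places $f_{\lambda,2R}$ in $\widetilde{\mathcal H}_{M_R,2R}$, and every candidate $h$ also lies there. The argument of Lemma~\ref{lem:J-strong-smooth}, with $c_{\min}(R)$ in place of $c_{\min}$ and integration against $d\mu_t\,dt$ over $B_{2R}$, then gives
\[
J_{\lambda,2R}(h)-J_{\lambda,2R}(f_{\lambda,2R})\ge\tfrac{c_{\min}(R)}2\|h-f_{\lambda,2R}\|^2_{L^2}+\lambda\|\nabla_x(h-f_{\lambda,2R})\|^2_{L^2}.
\]
The first-order variational inequality at $f_{\lambda,2R}$ is used here to drop the linear term.

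\textbf{Localization.} Localize on
\[
\mathcal U_r:=\{h:J_{\lambda,2R}(h)-J_{\lambda,2R}(f_{\lambda,2R})+J_{\lambda,2R}(h_0)-J_{\lambda,2R}(f_{\lambda,2R})\le r\}.
\]
The curvature bound gives
\[
\|h-h_0\|^2_{L^2}+\|\lambda\nabla_x(h-h_0)\|^2_{L^2}\lesssim c_{\min}(R)^{-1}r,
\]
so both the localized radius and the variance bound $\overline P\ell_h^2\lesssim c_{\min}(R)^{-1}r$ pick up a factor $c_{\min}(R)^{-1}\lesssim e^{CR}$ from \eqref{eq:cmin_R_growth}.

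\textbf{Local complexity.} Contract as in Theorem~\ref{thm: generalized error}. Then apply Dudley's integral with the sup-norm entropy of Lemma~\ref{lem:C6-vector-entropy}; the indicator only multiplies, so it does not enlarge the cover. This yields the sub-root function
\[
\phi_R(r)\lesssim\frac{1+R}{n}+\sqrt{\frac{c_{\min}(R)^{-1}r\,N\log(BWM_Rn)}{n}},
\qquad\text{hence}\qquad
r^\star_R\lesssim e^{CR}\frac{N\log(BWM_Rn)}n.
\]

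\textbf{Conclusion.} Follow Steps 2--5 of Theorem~\ref{thm: generalized error} verbatim: peel over $\mathcal U_r$, symmetrize, and apply Talagrand with $\sigma^2_{\rm Tal}\lesssim e^{CR}/r$ and $\beta_{\rm Tal}\lesssim(1+R)/r$. Choose $r_0\asymp e^{CR}(1+R)\max\{r_R^\star,t/n\}$ so that the normalized empirical process is at most $1/2$. Empirical optimality of $\widehat h_R$ then rearranges to the factor $3$ bound \eqref{eq:C6_comparator_oracle}, with all $R$-dependence collected into $Ce^{CR}(1+R)^K$.

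\textbf{Main obstacle.} The delicate step is checking that every appearance of $R$ is at most $e^{CR}(1+R)^K$ and that no constant secretly depends on $\lambda$ or $N$. This holds for the following reasons:
\begin{itemize}
\item The gradient envelope $C_M$ is $R$-free, because pulling back divides the gradient by $2R$; this is what keeps the Lipschitz constant $R$-free.
\item The only exponential factor comes from $c_{\min}(R)$.
\item $M_R$ enters the entropy only logarithmically.
\end{itemize}
Using $\lambda<1$ absorbs the gradient term exactly as in the bounded case.
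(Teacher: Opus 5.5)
Your proposal is correct and follows essentially the same route as the paper's proof. The shared steps are the comparator-centered truncated loss with the factor $T-t_0$, an $R$-free Lipschitz constant coming from the pulled-back gradient envelope $M_R/(2R)\lesssim 1$, anisotropic localization through $c_{\min}(R)$ and $\lambda<1$, and Dudley's integral with the entropy bound of Lemma~\ref{lem:C6-vector-entropy}, followed by peeling, symmetrization, Talagrand, and the factor-$3$ rearrangement via empirical optimality. The only difference is bookkeeping in the choice of radius: you take $r_0\asymp e^{CR}(1+R)\max\{r_R^\star,t/n\}$, while the paper takes $r_0\asymp(c_{\min}(R)^{-1}+M_R+\lambda(M_R/2R)^2)(N\log(BWM_Rn)+t)/n$, and both reduce to the stated $Ce^{CR}(1+R)^K$ factor.
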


\begin{proof}
The proof follows the oracle-inequality argument in
Appendix~\ref{sec: proof of generalzied error}, with three changes. First,
we center the empirical process at the fixed comparator \(h_0\), so that
the stochastic term is \(\overline P_{[t_0,T]}\psi_h-\overline P_{n,[t_0,T]}\psi_h\) and empirical optimality gives
\(\overline P_{n,[t_0,T]}\psi_{\widehat h_R}\le0\). Second, because the objective is evaluated
only on \([t_0,T]\times B_{2R}\), all losses carry the truncation indicator
and the fixed time-normalization factor \(T-t_0\). Third, the loss depends on
both \(h\) and \(\nabla_xh\), so the local Rademacher bound is anisotropic:
the cross-entropy curvature controls the value direction, while the Sobolev
penalty controls the spatial-gradient direction. After this localized
complexity bound is established, the normalized-process and Talagrand
concentration step is the same as in Appendix~\ref{sec: proof of generalzied error}.

Recall the bounded-case two-group averages \(\overline P\) and
\(\overline P_n\) introduced in Appendix~\ref{sec: proof of bounded case}. In the present time-dependent
proof, for measurable \(\psi\), define
\[
\overline P_{[t_0,T]}\psi =
\frac12\E_{\tau\sim{\rm Unif}([t_0,T]),\,X\sim p_\tau}[\psi(\tau,X,0)]
+
\frac12\E_{\tau\sim{\rm Unif}([t_0,T]),\,X\sim q_\tau}[\psi(\tau,X,1)],
\]
and
\[
\overline P_{n,[t_0,T]}\psi =
\frac1{2n}\sum_{i=1}^n\psi(t_i^p,X_i^p,0)
+
\frac1{2n}\sum_{i=1}^n\psi(t_i^q,X_i^q,1).
\]

For \(h\in\mathcal F_{2R}\), define the comparator-centered loss
\begin{eqnarray}\label{eq: time-dependent-comparator-centered loss}
    \psi_h(\tau,x,y) &:=&
(T-t_0)\mathbf 1\{x\in B_{2R}\}
\Bigg[
\ell_{\rm CE}(y,h(\tau,x))
-
\ell_{\rm CE}(y,h_0(\tau,x))\nonumber\\
&& \qquad\qquad\qquad \qquad\qquad+
\lambda\big(
\|\nabla_xh(\tau,x)\|^2
-
\|\nabla_xh_0(\tau,x)\|^2
\big)
\Bigg].
\end{eqnarray}
The expression inside the indicator is only relevant for \(x\in B_{2R}\),
so the value of any extension of \(h\) outside \([t_0,T]\times B_{2R}\) is irrelevant.
The factor \(T-t_0\) is inserted so that the empirical and population
objectives are normalized consistently with the unnormalized time integral
in \(J_{\lambda,2R}\). Thus
\[
\overline P_{[t_0,T]}\psi_h
=
J_{\lambda,2R}(h)-J_{\lambda,2R}(h_0),
\]
and
\[
\overline P_{n,[t_0,T]}\psi_h
=
\widehat J_{\lambda,\mathcal D_{\rm ext},2R}(h)
-
\widehat J_{\lambda,\mathcal D_{\rm ext},2R}(h_0).
\]
The fixed factor \(T-t_0\) is absorbed into constants below.

For every \(h=\pi_R f\in\mathcal F_{2R}\),
\[
\|h\|_{L^\infty([t_0,T]\times B_{2R})}\le M_R,
\qquad
\|\nabla_xh\|_{L^\infty([t_0,T]\times B_{2R})}\le \frac{M_R}{2R}.
\]
The same bounds hold for \(h_0\). Moreover, \(M_R/(2R)=\mathcal{O}(1)\) for \(R\ge1\).

For \(h_1,h_2\in\mathcal F_{2R}\), the logistic loss is \(1\)-Lipschitz in
its logit, and
$
\big|\|a\|^2-\|b\|^2\big|
\le
(\|a\|+\|b\|)\|a-b\|.
$
Therefore, pointwise on \(B_{2R}\),
\[
|\psi_{h_1}-\psi_{h_2}|
\le
C\left(
|h_1-h_2|
+
\lambda\frac{M_R}{R}
\|\nabla_xh_1-\nabla_xh_2\|
\right).
\]
For any real-valued class \(\mathcal G\) on
\([t_0,T]\times\mathbb R^d\times\{0,1\}\), recall that
\[
\mathfrak R_n(\mathcal G)
=
\mathbb E_\varepsilon
\sup_{g\in\mathcal G}
\left|
\frac1{2n}\sum_{i=1}^n\varepsilon_i^p g(t_i^p,X_i^p,0)
+
\frac1{2n}\sum_{i=1}^n\varepsilon_i^q g(t_i^q,X_i^q,1)
\right|,
\]
where \(\{\varepsilon_i^p,\varepsilon_i^q\}_{i=1}^n\) are independent
Rademacher variables conditionally on the sampled data. For vector-valued
classes, the same notation denotes the corresponding Rademacher average with
the Euclidean inner product in the summands.
Applying the vector-valued contraction inequality separately on the
\(p\)-sample and \(q\)-sample blocks and using subadditivity of the supremum,
\[
\begin{aligned}
&\mathfrak R_n\left(
\left\{
\psi_h:
\begin{array}{l}
J_{\lambda,2R}(h)-J_{\lambda,2R}(f_{\lambda,2R})\\
\quad+J_{\lambda,2R}(h_0)-J_{\lambda,2R}(f_{\lambda,2R})\le r
\end{array}
\right\}
\right)\\
&\qquad\le
C\mathfrak R_n\left(
\left\{
h-h_0:
\begin{array}{l}
J_{\lambda,2R}(h)-J_{\lambda,2R}(f_{\lambda,2R})\\
\quad+J_{\lambda,2R}(h_0)-J_{\lambda,2R}(f_{\lambda,2R})\le r
\end{array}
\right\}
\right)\\
&\qquad\quad+
C\lambda\frac{M_R}{2R}
\mathfrak R_n\left(
\left\{
\nabla_xh-\nabla_xh_0:
\begin{array}{l}
J_{\lambda,2R}(h)-J_{\lambda,2R}(f_{\lambda,2R})\\
\quad+J_{\lambda,2R}(h_0)-J_{\lambda,2R}(f_{\lambda,2R})\le r
\end{array}
\right\}
\right).
\end{aligned}
\]
No time derivative is involved because the loss depends only on
\((h,\nabla_xh)\).

We next convert the excess-risk localization into the \(L^2\)-radius needed
for Dudley's entropy integral. The two coordinates have different curvature:
the logistic part controls function values through \(c_{\min}(R)\), while the
Sobolev penalty controls spatial gradients through \(\lambda\). The
localization
\[
J_{\lambda,2R}(h)-J_{\lambda,2R}(f_{\lambda,2R})
+
J_{\lambda,2R}(h_0)-J_{\lambda,2R}(f_{\lambda,2R})
\le r
\]
therefore implies an anisotropic local radius for \(h-h_0\). Since
\(f_{\lambda,2R}\) minimizes \(J_{\lambda,2R}\) over the convex set
\(\widetilde{\mathcal H}_{M_R,2R}\), the variational inequality gives
\[
DJ_{\lambda,2R}(f_{\lambda,2R})[h-f_{\lambda,2R}]\ge0,
\qquad
h\in\mathcal F_{2R}\subseteq\widetilde{\mathcal H}_{M_R,2R}.
\]
Hence the strong convexity inequality on \([t_0,T]\times B_{2R}\) yields
\[
J_{\lambda,2R}(h)-J_{\lambda,2R}(f_{\lambda,2R})
\ge
\frac{c_{\min}(R)}{2}
\|h-f_{\lambda,2R}\|_{L^2([t_0,T]\times B_{2R})}^2
+
\lambda
\|\nabla_x(h-f_{\lambda,2R})\|_{L^2([t_0,T]\times B_{2R})}^2,
\]
and the same bound holds for \(h_0\). Combining the two inequalities with
the triangle inequality gives
\[
\|h-h_0\|_{L^2([t_0,T]\times B_{2R})}^2
\le
C c_{\min}(R)^{-1}r,
\qquad
\lambda\|\nabla_x(h-h_0)\|_{L^2([t_0,T]\times B_{2R})}^2
\le
Cr.
\]
Consequently,
\[
\|h-h_0\|_{L^2([t_0,T]\times B_{2R})}^2
+
\left\|
\lambda\frac{M_R}{2R}
(\nabla_xh-\nabla_xh_0)
\right\|_{L^2([t_0,T]\times B_{2R})}^2
\le
C(c_{\min}(R)^{-1}+1)r,
\]
where we used \(M_R/(2R)=\mathcal{O}(1)\) and \(0<\lambda<1\).

The preceding contraction bound shows that it remains to control the
following localized vector class:
\[
\mathcal W_R(r):=
\left\{
\left(h-h_0,\,
\lambda\frac{M_R}{2R}(\nabla_xh-\nabla_xh_0)\right):
\begin{array}{l}
h\in\mathcal F_{2R},\\
J_{\lambda,2R}(h)-J_{\lambda,2R}(f_{\lambda,2R})\\
\quad+J_{\lambda,2R}(h_0)-J_{\lambda,2R}(f_{\lambda,2R})\le r
\end{array}
\right\}.
\]
The last display shows that \(\mathcal W_R(r)\) is contained in an
\(L^2(\overline P_{[t_0,T]})\)-ball of radius
\[
C\sqrt{(c_{\min}(R)^{-1}+1)r}
\]
around the fixed center \((h_0,\lambda M_R(2R)^{-1}\nabla_xh_0)\). Translation
by this fixed center and the scaling of the second component only change
covering numbers by constants, because \(M_R/(2R)=\mathcal O(1)\) and
\(\lambda\le1\). Thus the entropy of \(\mathcal W_R(r)\) is controlled by
the entropy of \(\mathcal V_{2R}\) (defined on \eqref{eq: localized-time-dependent-set}). Since the empirical \(L^2(\overline P_{n,[t_0,T]})\)-metric
is bounded by the supremum metric, Lemma~\ref{lem:C6-vector-entropy} implies
\[
\log\mathcal N\bigl(\varepsilon,\mathcal W_R(r),L^2(\overline P_{n,[t_0,T]})\bigr)
\le
C N\log\left(\frac{BWM_R}{\varepsilon}\right).
\]
We now apply Dudley's entropy integral to the localized class
\(\mathcal W_R(r)\). Using the entropy bound above and truncating the
integral at \(1/n\), for \(r\gtrsim n^{-2}\), gives
\[
\begin{aligned}
&\mathfrak R_n\left(
\left\{
\psi_h:
\begin{array}{l}
J_{\lambda,2R}(h)-J_{\lambda,2R}(f_{\lambda,2R})\\
\quad+J_{\lambda,2R}(h_0)-J_{\lambda,2R}(f_{\lambda,2R})\le r
\end{array}
\right\}
\right)\\
&\qquad\le
\frac{C}{n}
+
\frac{C}{\sqrt n}
\int_{1/n}^{C\sqrt{(c_{\min}(R)^{-1}+1)r}}
\sqrt{
N\log\left(\frac{BWM_R}{\varepsilon}\right)
}
\,d\varepsilon .
\end{aligned}
\]
Since \(\varepsilon\ge 1/n\) on the integration range, the logarithmic factor
is bounded by \(C\log(BWM_R n)\). The condition \(r\gtrsim n^{-2}\) lets the
lower-order \(1/n\) term be absorbed into the integral bound. Therefore
\[
\mathfrak R_n\left(
\left\{
\psi_h:
\begin{array}{l}
J_{\lambda,2R}(h)-J_{\lambda,2R}(f_{\lambda,2R})\\
\quad+J_{\lambda,2R}(h_0)-J_{\lambda,2R}(f_{\lambda,2R})\le r
\end{array}
\right\}
\right)
\le
\phi_R(r),
\]
where
\[
\phi_R(r):=
C\sqrt{
\frac{(c_{\min}(R)^{-1}+1)rN\log(BWM_R n)}{n}
}.
\]
The function \(\phi_R\) is sub-root and its critical radius satisfies
\[
r_R^\star
\le
C(c_{\min}(R)^{-1}+1)
\frac{N\log(BWM_R n)}{n}.
\]

We next verify the boundedness and variance conditions for the normalized
process. The envelope satisfies
\[
|\psi_h|
\le
C\left(
M_R+
\lambda\left(\frac{M_R}{2R}\right)^2
\right).
\]
Moreover, the pointwise Lipschitz bound above gives
\[
\overline P_{[t_0,T]}\psi_h^2
\le
C\left[
\|h-h_0\|_{L^2([t_0,T]\times B_{2R})}^2
+
\lambda^2\left(\frac{M_R}{2R}\right)^2
\|\nabla_x(h-h_0)\|_{L^2([t_0,T]\times B_{2R})}^2
\right].
\]
Using the strong-convexity localization bounds, we obtain
\[
\overline P_{[t_0,T]}\psi_h^2
\le
C\left(
c_{\min}(R)^{-1}
+
\lambda\left(\frac{M_R}{2R}\right)^2
\right)
\left[
J_{\lambda,2R}(h)-J_{\lambda,2R}(f_{\lambda,2R})
+
J_{\lambda,2R}(h_0)-J_{\lambda,2R}(f_{\lambda,2R})
\right].
\]

Let
\[
\widetilde\psi_h:=
\frac{\overline P_{[t_0,T]}\psi_h-\psi_h}
{
J_{\lambda,2R}(h)-J_{\lambda,2R}(f_{\lambda,2R})
+
J_{\lambda,2R}(h_0)-J_{\lambda,2R}(f_{\lambda,2R})
+r
}.
\]
Then
\begin{equation}
\label{eq:C3-normalized-envelope}
\|\widetilde\psi_h\|_\infty
\le
\frac{
C\left(
M_R+\lambda\left(\frac{M_R}{2R}\right)^2
\right)
}{r},
\end{equation}
and
\begin{equation}
\label{eq:C3-normalized-variance}
\overline P_{[t_0,T]}\widetilde\psi_h^2
\le
\frac{
C\left(
c_{\min}(R)^{-1}
+
\lambda\left(\frac{M_R}{2R}\right)^2
\right)
}{r}.
\end{equation}
By the peeling lemma applied to the sub-root bound above,
\[
\mathfrak R_n\left(
\left\{
\frac{\psi_h}{
J_{\lambda,2R}(h)-J_{\lambda,2R}(f_{\lambda,2R})
+
J_{\lambda,2R}(h_0)-J_{\lambda,2R}(f_{\lambda,2R})
+r
}:h\in\mathcal F_{2R}
\right\}
\right)
\le
\frac{4\phi_R(r)}{r}.
\]
Applying the Symmetrization Lemma (Lemma A.3 in
\cite{lu2021machine}) to the independent \(p\)-sample and \(q\)-sample empirical
processes separately, and then adding the two bounds, gives
\begin{equation}
\label{eq:C3-symmetrization-normalized}
\mathbb E \left[\sup_{h\in\mathcal F_{2R}}
\frac{\overline P_{[t_0,T]}\psi_h-\overline P_{n,[t_0,T]}\psi_h}
{
J_{\lambda,2R}(h)-J_{\lambda,2R}(f_{\lambda,2R})
+
J_{\lambda,2R}(h_0)-J_{\lambda,2R}(f_{\lambda,2R})
+r
}\right]
\le
\frac{8\phi_R(r)}{r}.
\end{equation}
Using \eqref{eq:C3-symmetrization-normalized},
\eqref{eq:C3-normalized-envelope}, and \eqref{eq:C3-normalized-variance},
Talagrand's inequality implies that, with
probability at least \(1-e^{-t}\),
\begin{eqnarray*}
&&\sup_{h\in\mathcal F_{2R}}
\frac{\overline P_{[t_0,T]}\psi_h-\overline P_{n,[t_0,T]}\psi_h}
{
J_{\lambda,2R}(h)-J_{\lambda,2R}(f_{\lambda,2R})
+
J_{\lambda,2R}(h_0)-J_{\lambda,2R}(f_{\lambda,2R})
+r
}\\
&\le&
\frac{16\phi_R(r)}{r}\\
&\quad&+
C\sqrt{
\frac{
\left(
c_{\min}(R)^{-1}
+
\lambda\left(\frac{M_R}{2R}\right)^2
\right)t
}{nr}
}\\
&\quad&+
C\frac{
\left(
M_R+\lambda\left(\frac{M_R}{2R}\right)^2
\right)t
}{nr}.
\end{eqnarray*}

Choose
\[
r_0:=
C\left(c_{\min}(R)^{-1}
+
M_R
+
\lambda\left(\frac{M_R}{2R}\right)^2\right)\frac{N\log(BWM_R n)+t}{n}
,
\]
with \(C\) sufficiently large. Since \(c_{\min}(R)^{-1}\ge1\), this choice
makes the right-hand side at most \(1/2\). Hence, on the same event, for all
\(h\in\mathcal F_{2R}\),
\[
(\overline P_{[t_0,T]}-\overline P_{n,[t_0,T]})\psi_h
\le
\frac12
\left[
J_{\lambda,2R}(h)-J_{\lambda,2R}(f_{\lambda,2R})
\right]
+
\frac12
\left[
J_{\lambda,2R}(h_0)-J_{\lambda,2R}(f_{\lambda,2R})
\right]
+
\frac12 r_0.
\]
Taking \(h=\widehat h_R\) and using empirical optimality,
\[
\overline P_{n,[t_0,T]}\psi_{\widehat h_R}
=\widehat J_{\lambda,\mathcal D_{\rm ext},2R}(\widehat h_R)
-\widehat J_{\lambda,\mathcal D_{\rm ext},2R}(h_0)
\le0,
\]
we obtain
\[
\begin{aligned}
J_{\lambda,2R}(\widehat h_R)-J_{\lambda,2R}(h_0)
&=
\overline P_{[t_0,T]}\psi_{\widehat h_R}\\
&=
(\overline P_{[t_0,T]}-\overline P_{n,[t_0,T]})\psi_{\widehat h_R}
+
\overline P_{n,[t_0,T]}\psi_{\widehat h_R}\\
&\le
(\overline P_{[t_0,T]}-\overline P_{n,[t_0,T]})\psi_{\widehat h_R}\\
&\le
\frac12\left[
J_{\lambda,2R}(\widehat h_R)-J_{\lambda,2R}(f_{\lambda,2R})
\right]
+
\frac12\left[
J_{\lambda,2R}(h_0)-J_{\lambda,2R}(f_{\lambda,2R})
\right]
+
\frac12r_0.
\end{aligned}
\]
Rearranging yields
\[
J_{\lambda,2R}(\widehat h_R)-J_{\lambda,2R}(f_{\lambda,2R})
\le
3\left[
J_{\lambda,2R}(h_0)-J_{\lambda,2R}(f_{\lambda,2R})
\right]+r_0.
\]
It remains to bound \(r_0\).

Finally, \(M_R/(2R)=\mathcal{O}(1)\), \(M_R=C_M(1+R)\), and
\(c_{\min}(R)^{-1}\lesssim e^{CR}\). Thus
\[
c_{\min}(R)^{-1}
+
M_R
+
\lambda\left(\frac{M_R}{2R}\right)^2
\le
C e^{CR}(1+R)^K.
\]
Together with the definition of \(r_0\), this proves
\eqref{eq:C6_comparator_oracle}.
\end{proof}

\subsection{Proof of Theorem~\ref{thm: unbounded_convergence_nohop}}

Recall the rate exponent
\[
\kappa=
\frac{s-1}{(d+1)+2s-2}.
\]
Also recall the cutoff estimator \(\widetilde f^{(R)}\) from
\eqref{eq:global_cutoff_estimator}, the pointwise loss density
\(\mathcal L\) from \eqref{eq:norm}, and, for \(x\in B_{2R}\), the
pullback notation
\[
(\pi_R f)(t,x)=f\left(t,\frac{x}{2R}\right).
\]
Recall the global error decomposition from \eqref{eq:core_tail_decomp}; in
this proof we write
\begin{equation}
\label{eq:C3_global_error_decomposition}
\mathcal E(\widetilde f^{(R)})
=
\int_{t_0}^T
\|\widetilde f_t^{(R)}-f_t^\star\|_{H^1(\mu_t)}^2\,dt
=
{\rm Main}(R)+{\rm Tail}(R),
\end{equation}
where
\[
{\rm Main}(R)
=
\int_{t_0}^T\int_{B_R}
\mathcal L(\widetilde f^{(R)},f^\star)\,d\mu_t\,dt,
\]
and
\[
{\rm Tail}(R)
=
\int_{t_0}^T\int_{B_R^c}
\mathcal L(\widetilde f^{(R)},f^\star)\,d\mu_t\,dt.
\]

\proofstep{Step 1: Reduction of the main error to \([t_0,T]\times B_{2R}\).}
Recall that \(\chi_R\equiv1\) on \(B_R\), and by definition
\(\operatorname{Proj}_{2R}(x)=x\) for \(x\in B_R\). Hence
the definition of \(\widetilde f^{(R)}\) in
\eqref{eq:global_cutoff_estimator} gives
\[
\widetilde f^{(R)}=\pi_R\widehat f_R
\qquad\text{on }[t_0,T]\times B_R.
\]
Therefore,
\begin{equation}
\label{eq:C3_main_reduction}
{\rm Main}(R)
=
\int_{t_0}^T\int_{B_R}
\mathcal L(\pi_R\widehat f_R,f^\star)\,d\mu_t\,dt
\le
\int_{t_0}^T\int_{B_{2R}}
\mathcal L(\pi_R\widehat f_R,f^\star)\,d\mu_t\,dt.
\end{equation}
\proofstep{Step 2: The truncated ERM event.}
Define the truncation event
\[
\mathcal A_R
=
\left\{
\max_{1\le i\le n}\|X_i^p\|
\vee
\max_{1\le i\le n}\|X_i^q\|
\le 2R
\right\}.
\]
By Lemma~\ref{lem:global-linear-growth}, the marginal distributions
\(p_t\) and \(q_t\) have uniformly sub-Gaussian tails. Therefore there exist
constants \(C,c>0\), independent of \(n\) and \(R\), such that
\[
\sup_{t\in[t_0,T]}
\max\Big\{p_t(\{x:\|x\|>u\}),q_t(\{x:\|x\|>u\})\Big\}
\le
C\exp\Big(-c(u-R_1)_+^2\Big).
\]
Write \(t_{1:n}^p=(t_1^p,\ldots,t_n^p)\) and
\(t_{1:n}^q=(t_1^q,\ldots,t_n^q)\).
Conditioning on these sampled times and applying a union bound over the
\(p\)-sample and \(q\)-sample blocks,
\[
\mathbb P(\mathcal A_R^c\mid t_{1:n}^p,t_{1:n}^q)
\le
\sum_{i=1}^n
\mathbb P(\|X_i^p\|>2R\mid t_i^p)
+
\sum_{i=1}^n
\mathbb P(\|X_i^q\|>2R\mid t_i^q)
\le
2nC\exp\Big(-c(2R-R_1)_+^2\Big).
\]
For \(R\ge R_1\), \((2R-R_1)_+\ge R\). Hence
\begin{equation}
\label{eq:C3_truncation_event_tail_prob}
\mathbb P(\mathcal A_R^c)
\le
2nC e^{-cR^2}.
\end{equation}

On \(\mathcal A_R\), we have \(\operatorname{Proj}_{2R}(X_i^p)=X_i^p\) and
\(\operatorname{Proj}_{2R}(X_i^q)=X_i^q\).
\[
\frac{\lambda}{4R^2}
\|\nabla_{\bar x}f(t_i^p,X_i^p/(2R))\|^2
=\lambda\|\nabla_x(\pi_R f)(t_i^p,X_i^p)\|^2,
\]
and the same identity holds for the \(q\)-samples.
Recalling the projected empirical objective
\(\widehat J^{\rm proj}_{\lambda,\mathcal D_{\rm ext},2R}\) from
\eqref{eq:empirical_objective_proj}, minimizing it over
\(f\in\mathcal F_R\) is therefore equivalent, on \(\mathcal A_R\), to
minimizing the following truncated empirical objective over
\(h\in\mathcal F_{2R}\):
\begin{equation}
\label{eq:C3_truncated_empirical_objective}
\begin{aligned}
\widehat J_{\lambda,\mathcal D_{\rm ext},2R}(h)
&=
\frac{T-t_0}{2n}\sum_{i=1}^n
\left[
\ell_{\rm CE}(0,h(t_i^p,X_i^p))
+
\lambda\|\nabla_xh(t_i^p,X_i^p)\|^2
\right]\mathbf 1\{X_i^p\in B_{2R}\}
\\
&\quad+
\frac{T-t_0}{2n}\sum_{i=1}^n
\left[
\ell_{\rm CE}(1,h(t_i^q,X_i^q))
+
\lambda\|\nabla_xh(t_i^q,X_i^q)\|^2
\right]\mathbf 1\{X_i^q\in B_{2R}\}.
\end{aligned}
\end{equation}
Consequently, on \(\mathcal A_R\), \(\pi_R\widehat f_R\)
is an empirical minimizer of
\eqref{eq:C3_truncated_empirical_objective} over \(\mathcal F_{2R}\).

\proofstep{Step 3: Comparator-centered oracle inequality on \([t_0,T]\times B_{2R}\).}
Recall from Lemma~\ref{lem:C2-existence-uniqueness} that
\(f_{\lambda,2R}\) is the minimizer of \(J_{\lambda,2R}\) over
\(\widetilde{\mathcal H}_{M_R,2R}\).
By Lemma~\ref{lem:C4-rescaled-approx}, there exists a deterministic
comparator
$
f_{0,R}\in\mathcal F_{2R}
$
such that
\begin{equation}
\label{eq:C3_f0R_approx}
\|f_{0,R}-f^\star\|_{L^2([t_0,T];H^1(B_{2R},\mu_t))}^2
\le
C(1+R)^K N^{-\frac{2(s-1)}{d+1}}.
\end{equation}

Applying Lemma~\ref{lem:C3-aniso-local-complexity} with
\(h_0=f_{0,R}\) and \(t>0\), we obtain an event
\(\mathcal A_{R,t}\), with
\[
\mathbb P(\mathcal A_{R,t}^c)\le e^{-t},
\]
on which
\begin{equation}
\label{eq:C3_oracle_gap_new}
J_{\lambda,2R}(\pi_R\widehat f_R)
-
J_{\lambda,2R}(f_{\lambda,2R})
\le
3\left[
J_{\lambda,2R}(f_{0,R})
-
J_{\lambda,2R}(f_{\lambda,2R})
\right]
+
C e^{CR}(1+R)^K
\frac{N\log(BWM_Rn)+t}{n}.
\end{equation}

\proofstep{Step 4: Bound the comparator objective gap.}
We bound
\[
J_{\lambda,2R}(f_{0,R})
-
J_{\lambda,2R}(f_{\lambda,2R})
\]
by comparing both terms to \(f^\star\):
\[
\begin{aligned}
J_{\lambda,2R}(f_{0,R})
-
J_{\lambda,2R}(f_{\lambda,2R})
&\le
J_{\lambda,2R}(f_{0,R})-J_{\lambda,2R}(f^\star)\\
&\quad+
J_{\lambda,2R}(f^\star)-J_{\lambda,2R}(f_{\lambda,2R}).
\end{aligned}
\]

First, since \(f^\star\) is the pointwise Bayes logit, it minimizes the
population cross-entropy risk. The logistic loss is \(1/4\)-smooth in the
logit; hence
\[
L_{{\rm CE},2R}(f_{0,R})-L_{{\rm CE},2R}(f^\star)
\le
C\|f_{0,R}-f^\star\|_{L^2([t_0,T]\times B_{2R})}^2.
\]
For the Sobolev penalty, using the pulled-back gradient envelope
\[
\|\nabla_x f_{0,R}\|_{L^\infty([t_0,T]\times B_{2R})}
\le \frac{M_R}{2R}\lesssim 1
\]
and the uniform spatial-gradient bound for \(f^\star\) from
Lemma~\ref{lem:global-linear-growth}, we get
\[
\begin{aligned}
&\lambda
\left|
\|\nabla_x f_{0,R}\|_{L^2([t_0,T]\times B_{2R})}^2
-
\|\nabla_x f^\star\|_{L^2([t_0,T]\times B_{2R})}^2
\right|\\
&\qquad\le
C\lambda
\|\nabla_x(f_{0,R}-f^\star)\|_{L^2([t_0,T]\times B_{2R})}
\le
C\left[
\|f_{0,R}-f^\star\|_{L^2([t_0,T];H^1(B_{2R},\mu_t))}^2+\lambda^2
\right].
\end{aligned}
\]
Therefore, by \eqref{eq:C3_f0R_approx},
\begin{equation}
\label{eq:C3_J_f0_fstar}
J_{\lambda,2R}(f_{0,R})-J_{\lambda,2R}(f^\star)
\le
C(1+R)^KN^{-\frac{2(s-1)}{d+1}}
+
C\lambda^2.
\end{equation}

Second, since \(f^\star\) minimizes the cross-entropy risk,
\[
L_{{\rm CE},2R}(f^\star)
-
L_{{\rm CE},2R}(f_{\lambda,2R})
\le0.
\]
\[
J_{\lambda,2R}(f^\star)-J_{\lambda,2R}(f_{\lambda,2R})
\le
\lambda\left(
\|\nabla_x f^\star\|_{L^2([t_0,T]\times B_{2R})}^2
-
\|\nabla_x f_{\lambda,2R}\|_{L^2([t_0,T]\times B_{2R})}^2
\right).
\]
As in the proof of Lemma~\ref{lem:C3-bias}, Green's identity gives the
bound
\[
J_{\lambda,2R}(f^\star)-J_{\lambda,2R}(f_{\lambda,2R})
\le
C\left[
\beta(R)\lambda^2
+
c_{\min}(R)^{-1}\lambda\mathfrak b_R
+
\lambda\mathfrak b_R
\right].
\]
Combining this with \eqref{eq:C3_J_f0_fstar} yields
\begin{equation}
\label{eq:C3_comparator_gap_bound}
\begin{aligned}
&J_{\lambda,2R}(f_{0,R})
-
J_{\lambda,2R}(f_{\lambda,2R})\\
&\qquad\le
C\left[
(1+R)^KN^{-\frac{2(s-1)}{d+1}}
+
\lambda^2
+
\beta(R)\lambda^2
+
c_{\min}(R)^{-1}\lambda\mathfrak b_R
+
\lambda\mathfrak b_R
\right].
\end{aligned}
\end{equation}

\proofstep{Step 5: Convert the oracle bound to
\(L^2([t_0,T];H^1(B_{2R},\mu_t))\)-error.}
Here \(L^2([t_0,T];H^1(B_{2R},\mu_t))\) is the spatial energy norm defined in
\eqref{eq:weighted-energy-norm-2R}; it integrates over time but contains no
time derivative.
The same variational inequality for the constrained minimizer
\(f_{\lambda,2R}\), combined with strong convexity of \(J_{\lambda,2R}\) on
\(\widetilde{\mathcal H}_{M_R,2R}\), gives, for every
\(f\in\mathcal F_{2R}\),
\[
\begin{aligned}
J_{\lambda,2R}(f)-J_{\lambda,2R}(f_{\lambda,2R})
&\ge
\frac{c_{\min}(R)}{2}
\|f-f_{\lambda,2R}\|_{L^2([t_0,T];L^2(B_{2R},\mu_t))}^2 \\
&\quad+
\lambda
\|\nabla_x(f-f_{\lambda,2R})\|_{L^2([t_0,T];L^2(B_{2R},\mu_t))}^2 .
\end{aligned}
\]
Since
\[
\begin{aligned}
\|f-f_{\lambda,2R}\|_{L^2([t_0,T];H^1(B_{2R},\mu_t))}^2
&=
\|f-f_{\lambda,2R}\|_{L^2([t_0,T];L^2(B_{2R},\mu_t))}^2\\
&\quad+
\|\nabla_x(f-f_{\lambda,2R})\|_{L^2([t_0,T];L^2(B_{2R},\mu_t))}^2,
\end{aligned}
\]
we obtain
\[
\|f-f_{\lambda,2R}\|_{L^2([t_0,T];H^1(B_{2R},\mu_t))}^2
\le
C\bigl(c_{\min}(R)^{-1}\vee\lambda^{-1}\bigr)
\left[
J_{\lambda,2R}(f)-J_{\lambda,2R}(f_{\lambda,2R})
\right].
\]
For the final choice \(R=A\sqrt{\log n}\) and
\(\lambda\asymp n^{-\kappa}\), we have
\[
c_{\min}(R)^{-1}\lesssim e^{CR}=n^{o(1)},
\]
so \(\lambda^{-1}\) dominates \(c_{\min}(R)^{-1}\) for all sufficiently large \(n\).
Thus, on \(\mathcal A_{R,t}\cap\mathcal A_R\), using
\eqref{eq:C3_oracle_gap_new} and
\eqref{eq:C3_comparator_gap_bound},

\begin{eqnarray*}
\|\pi_R\widehat f_R-f_{\lambda,2R}\|_{L^2([t_0,T];H^1(B_{2R},\mu_t))}^2&\leq&
\frac{C}{\lambda}
\Big[
e^{CR}(1+R)^K\frac{N\log(BWM_Rn)+t}{n}
+
(1+R)^KN^{-\frac{2(s-1)}{d+1}}
\\
&&\qquad \qquad  +
\lambda^2+
\beta(R)\lambda^2
+
c_{\min}(R)^{-1}\lambda\mathfrak b_R
+
\lambda\mathfrak b_R
\Big].
\end{eqnarray*}

Using Lemma~\ref{lem:C3-bias} to pass from \(f_{\lambda,2R}\) to
\(f^\star\), we obtain
\begin{eqnarray}
\label{eq:C3_main_cylinder_bound_new}
\int_{t_0}^T\int_{B_{2R}}
\mathcal L(\pi_R\widehat f_R,f^\star)\,d\mu_t\,dt&\le&
C\Big[
\frac{e^{CR}(1+R)^K}{\lambda}
\frac{N\log(BWM_Rn)+t}{n}
+
\frac{(1+R)^K}{\lambda}
N^{-\frac{2(s-1)}{d+1}}\nonumber\\
&&\qquad +
\lambda
+
\beta(R)\lambda
+
(c_{\min}(R)^{-1}+1)\mathfrak b_R
\Big].
\end{eqnarray}
By \eqref{eq:C3_main_reduction}, the same bound holds for
\({\rm Main}(R)\).

\proofstep{Step 6: Tail error.}
On the annulus \(B_{2R}\setminus B_R\),
\[
\widetilde f^{(R)}(t,x)
=
\chi_R(x)(\pi_R\widehat f_R)(t,x),
\]
and therefore
\[
\nabla_x\widetilde f^{(R)}(t,x)
=
(\nabla_x\chi_R)(x)(\pi_R\widehat f_R)(t,x)
+
\chi_R(x)\nabla_x(\pi_R\widehat f_R)(t,x).
\]
Since
\[
|\chi_R|\le1,
\qquad
\|\nabla_x\chi_R\|_\infty\lesssim R^{-1},
\]
and
\[
\|\pi_R\widehat f_R\|_{L^\infty([t_0,T]\times B_{2R})}\le M_R,
\qquad
\|\nabla_x(\pi_R\widehat f_R)\|_{L^\infty([t_0,T]\times B_{2R})}
\le \frac{M_R}{2R}\lesssim1,
\]
we have
\[
|\widetilde f^{(R)}(t,x)|
+
\|\nabla_x\widetilde f^{(R)}(t,x)\|
\le
C(1+R),
\qquad
x\in B_{2R}\setminus B_R.
\]
Together with the value and gradient bounds for \(f^\star\) from
Lemma~\ref{lem:global-linear-growth}, this gives
\[
\mathcal L(\widetilde f^{(R)},f^\star)(t,x)
\le
C(1+R)^K,
\qquad
x\in B_{2R}\setminus B_R.
\]
Therefore, using the tail bound in Lemma~\ref{lem:global-linear-growth},
there exist constants \(C,K,c_1>0\) such that
\begin{equation}
\label{eq:C3_annulus_tail_bound}
\int_{t_0}^T\int_{B_{2R}\setminus B_R}
\mathcal L(\widetilde f^{(R)},f^\star)\,d\mu_t\,dt
\le
C(1+R)^K e^{-c_1(R-R_1)_+^2}.
\end{equation}

On \(B_{2R}^c\), \(\widetilde f^{(R)}=0\). Hence
\[
\mathcal L(\widetilde f^{(R)},f^\star)
=
|f^\star|^2+\|\nabla_xf^\star\|^2.
\]
By Lemma~\ref{lem:global-linear-growth}, this is bounded by
\(C(1+\|x\|)^K\). The tail-moment bound in
Lemma~\ref{lem:global-linear-growth} gives constants \(C,K,c_2>0\)
such that
\begin{equation}
\label{eq:C3_outer_tail_bound}
\int_{t_0}^T\int_{B_{2R}^c}
\mathcal L(\widetilde f^{(R)},f^\star)\,d\mu_t\,dt
\le
C(1+R)^K e^{-c_2(2R-R_1)_+^2}.
\end{equation}

Combining \eqref{eq:C3_annulus_tail_bound} and
\eqref{eq:C3_outer_tail_bound}, and taking \(R\ge2R_1\), we have
\[
(R-R_1)_+\ge R/2,
\qquad
(2R-R_1)_+\ge R.
\]
Thus, setting
\[
c_3=\min\{c_1/4,c_2\}>0,
\]
and enlarging \(C,K\) if necessary, we obtain
\begin{equation}
\label{eq:C3_tail_bound}
{\rm Tail}(R)
\le
C(1+R)^K e^{-c_3R^2}.
\end{equation}

\proofstep{Step 7: Choosing \(N,\lambda,R\).}
Choose
\begin{equation}
\label{eq:C3_N_lambda_choice}
N\asymp n^{\frac{d+1}{d+1+2s-2}},
\qquad
\lambda\asymp n^{-\kappa},
\qquad
\kappa=\frac{s-1}{d+1+2s-2}.
\end{equation}
Then
\begin{equation}
\label{eq:C3_rate_balancing}
\frac{N\log(BWM_Rn)}{n}
\lesssim
\lambda^2\log n,
\qquad
N^{-\frac{2(s-1)}{d+1}}
\lesssim
\lambda^2.
\end{equation}
By Lemmas~\ref{lem:C3-bias} and
\ref{lem:C3-aniso-local-complexity},
\[
\beta(R)\le C e^{CR}(1+R)^K,
\qquad
c_{\min}(R)^{-1}\le C e^{CR}.
\]
Substituting these estimates into
\eqref{eq:C3_main_cylinder_bound_new} and using
\eqref{eq:C3_rate_balancing}, with \(t=2\log n\), gives
\begin{equation}
\label{eq:C3_main_rate_after_tuning}
{\rm Main}(R)
\le
C e^{CR}(1+R)^K(\log n)n^{-\kappa}
+
C e^{CR}(1+R)^K\mathfrak b_R.
\end{equation}
Since
\[
\mathfrak b_R\le C(1+R)^K e^{-cR^2},
\]
the boundary term is exponentially small for the choice of \(R\) below.
Combining \eqref{eq:C3_main_rate_after_tuning} with
\eqref{eq:C3_tail_bound}, we obtain
\begin{equation}
\label{eq:C3_pre_final_rate}
\mathcal E(\widetilde f^{(R)})
\le
C e^{CR}(1+R)^K(\log n)n^{-\kappa}
+
C e^{CR}(1+R)^K e^{-cR^2}.
\end{equation}

Finally choose
\[
R=A\sqrt{\log n},
\]
with \(A>0\) sufficiently large so that \(R\ge2R_1\) for all sufficiently
large \(n\), the truncation-event failure probability in
\eqref{eq:C3_truncation_event_tail_prob} is at most \(n^{-2}\), and the exponential
remainder in \eqref{eq:C3_pre_final_rate} is negligible relative to the
main term. On the event
\[
\mathcal A_{R,2\log n}\cap\mathcal A_R,
\]
whose probability is at least \(1-2n^{-2}\), and hence at least
\(1-3n^{-2}\), we have
\begin{equation}
\label{eq:C3_final_n_o1_rate}
\mathcal E(\widetilde f^{(R)})
\le
C e^{C\sqrt{\log n}}(\log n)^K n^{-\kappa}.
\end{equation}
Since
\[
e^{C\sqrt{\log n}}(\log n)^K=n^{o(1)},
\]
it follows that for every \(\varepsilon>0\), there exists
\(C_\varepsilon<\infty\) such that
\begin{equation}
\label{eq:C3_final_eps_rate}
\mathcal E(\widetilde f^{(R)})
\le
C_\varepsilon n^{-\kappa+\varepsilon}.
\end{equation}
This completes the proof.
\qed

\section{Supplementary Experiment Results}
\label{sec:supplementary-experiments}
\begin{figure}[htbp]
    \centering
    \includegraphics[width=0.95\linewidth]{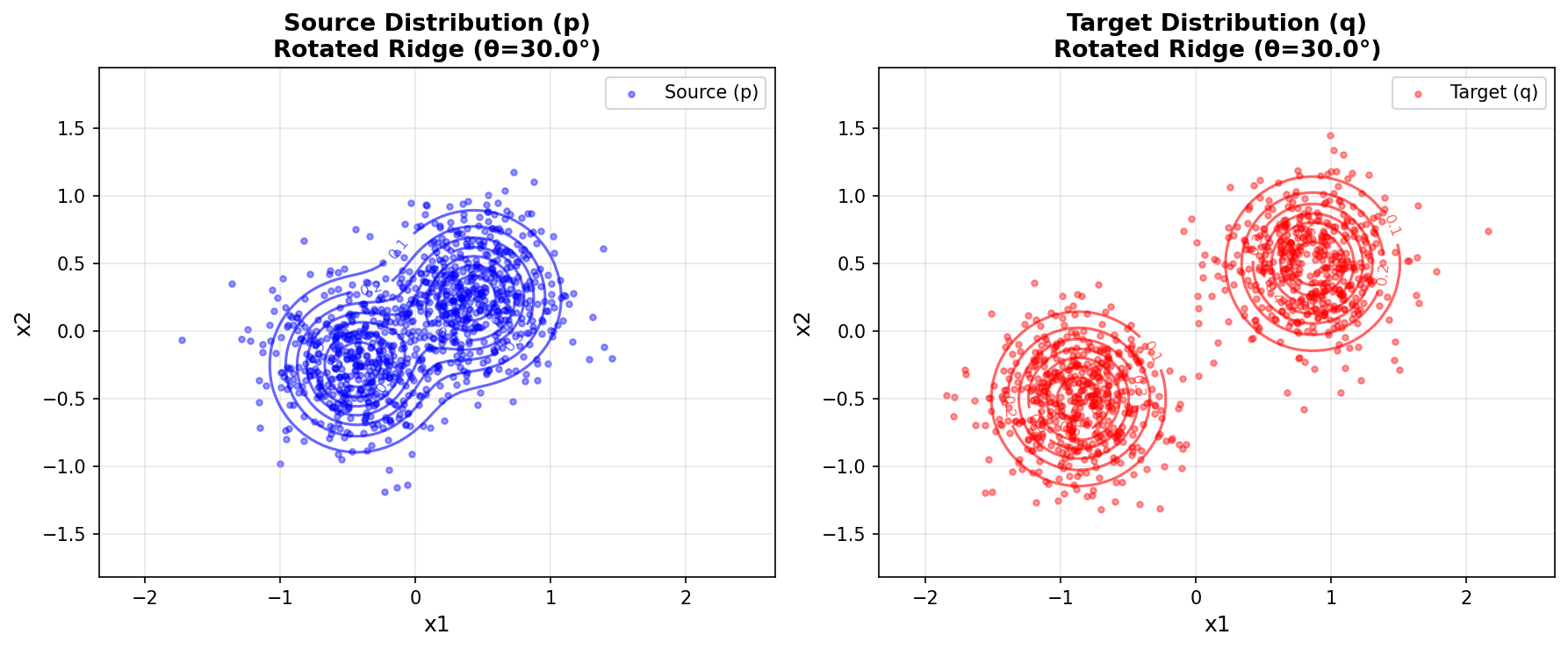}\vspace{-0.6em}

    \includegraphics[width=0.95\linewidth]{images/distributions_orthogonal_gmm.png}\vspace{-0.6em}

    \includegraphics[width=0.95\linewidth]{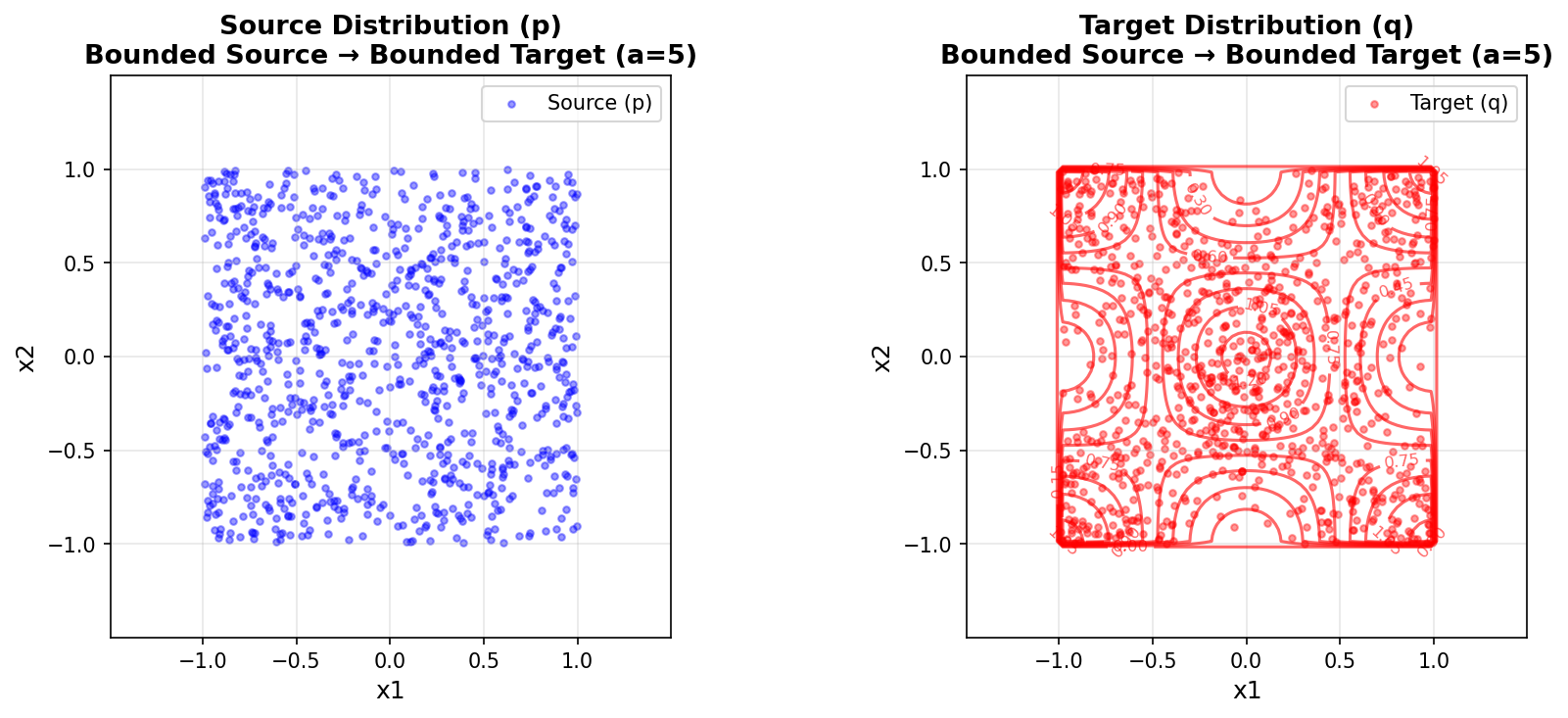}

    \caption{
    Three simulation distributions (left column: source $p$, right column: target $q$).
    \textbf{Top:} Same-line GMM with rotated ridge structure.
    \textbf{Middle:} Orthogonal GMM with misaligned source and target components.
    \textbf{Bottom:} Bounded source-to-target distribution with compact support.
    }
    \label{fig:three-subfig}
\end{figure}
\subsection{Synthetic Experiments}
We provide details for synthetic source--target pairs here.
\label{subsec:Synthetic source--target pairs}
\begin{itemize}
  \item \textbf{Rotated ridge.}
  In the \((t,s)\)-coordinates the source and target are mixtures of
  two Gaussians with identical covariance but different offsets along
  the \(t\)-axis:
  the source has components centered at \(\pm a\) and the target at
  \(\pm b\) (with \(b>a\)), while the \(s\)-coordinate is shared.
  The pair \((t,s)\) is then rotated by an angle \(\mathcal{P}=30^\circ\)
  into the observed \((x_1,x_2)\)-space.
  This creates an anisotropic ``ridge'' structure where \(p\) and \(q\)
  differ primarily along a rotated onedimensional direction.

  \item \textbf{Orthogonal GMM.}
  Both \(p\) and \(q\) are symmetric two-component Gaussian mixtures
  with spherical covariance.
  The source means are \(\pm \mu_S\), the target means are
  \(\pm \mu_T\), and we choose \(\mu_S^\top \mu_T = 0\),
  i.e.\ the source and target mixture axes are orthogonal.
  This case is adopted in \citet{ouyang2024transfer} and probes how well the estimator adapts when the main
  directions of variation in the source and target are misaligned.

  \item \textbf{Bounded source to bounded target.}
  The source \(p\) is the uniform distribution on the square
  \([-1,1]^2\).
  The target \(q\) is obtained by smoothly reweighting \(p\) with a
  bounded, oscillatory density ratio
  \(r(x) = \tfrac{q(x)}{p(x)}
      = c \bigl(1 + b \cos(\pi x_1)\cos(\pi x_2)\bigr)\),
  where the parameters are chosen so that \(r(x)\in[1/a,a]\) with
  \(a=5\).
  This yields a non-Gaussian, compactly supported setting with a
  spatially varying but uniformly bounded density ratio.
\end{itemize}
Together, these three examples cover rotated lowdimensional shifts,
orthogonal mixture structure, and bounded non-Gaussian modulation,
and hence provide a diverse testbed for gradient-of-ratio estimation.
\subsection{WGF experiments}
\paragraph{Setting details.}
When a large number of unlabeled target samples $x_q^{(k)}$ are available, \cite{courty2016optimal} proposed learning an optimal transport map to align the joint source and target distributions ($P_{XZ}$ and $Q_{XZ}$), and subsequently training a classifier on the aligned source samples. Building on this framework, \cite{liu2023minimizing} utilized WGF for sample alignment. Their method evolves particles $x_t$ to minimize $\text{KL}[p_t, q]$ between the evolving particle-label distribution $p_t$ (initialized with source samples $\mathcal{D}_{p_t}:=\{(x_t^{(i)}, y_p^{(i)})\}_{i=1}^{n_p}$) and the target density $q$. Upon convergence after $T$ iterations, a classifier is trained on the transported source samples $\{(x_T, y_q)\}$ to predict target labels.

Implementing WGF requires estimating the score difference. While \cite{liu2023minimizing} employed a local method known as ``local linear interpolation'' (LL) for this estimation, our proposed approach utilizes a classification-based method with Sobolev regularization to estimate the score difference globally.

\subsection{ECG experiments}
\label{subsec: ecg results detail}
\paragraph{Details for two datasets.}
PTB-XL contains 21,837 clinical 12-lead ECG recordings of 10-second duration from 18,885 patients, annotated with 71 diagnostic statements in a multi-label setting. The ICBEB2018 dataset consists of 6,877 12-lead ECG recordings with durations ranging from 6 to 60 seconds, each labeled into one of nine diagnostic classes, which form a subset of the PTB-XL label space. We randomly select $10\%$ of the ICBEB2018 samples as the limited target dataset via stratified sampling to preserve label proportions. All ECG signals are resampled to a frequency of 100 Hz.
\paragraph{Synthetic quality and training speed evaluation for diffusion models.}
We evaluate the quality of the generated samples of the target task using the standard Fréchet Inception Distance (FID) metric~\cite{heusel2017gans}. The FID score measures the Wasserstein-2 distance between the distributions of real and synthetic data within the feature space of an xresnet1d50 classifier~\cite{strodthoff2020deep} pre-trained on the target task.

The quantitative results are summarized in Table~\ref{tab:ecg_quality}. Our proposed TGDP-SOB not only achieves the best FID of 8.097 but also significantly reduces model parameters to 2.8M and training time to 30 minutes, offering a superior balance between generation quality and computational efficiency.

\begin{table}[htbp]
\centering
\caption{The effectiveness of Soblev Penalization on ECG benchmark under synthetic quality.}\label{tab:ecg_quality}
\begin{tabular}{l|c|c|c|c}
\hline
\textbf{Method}
& \textbf{FID}
& \textbf{Parameters}
& \textbf{Training Time} \\
\hline
Vanilla Diffusion       & 11.171 & 50.2M & 1h \\
Finetune Generator      & 8.415 & 50.2M & 40min \\
TGDP          & 8.100 & \textbf{2.8M} &
\textbf{30min} \\
\textbf{TGDP-SOB}  &\textbf{8.097} & \textbf{2.8M} &
\textbf{30min} \\
\hline
\end{tabular}
\end{table}
\end{document}